\documentclass[accepted]{uai2023} 

\usepackage[american]{babel}

\usepackage{natbib} \bibliographystyle{plainnat}
    
\usepackage{mathtools} \usepackage{booktabs} \usepackage{tikz}

\usepackage{microtype}
\usepackage{graphicx}
\usepackage{subcaption}
\usepackage{booktabs} \usepackage{multirow}

\usepackage[algo2e,linesnumbered,lined,boxed,commentsnumbered,noend]{algorithm2e}

\SetCommentSty{mycommfont}
\setlength{\algomargin}{15pt} 

\usepackage{amsmath}
\usepackage{amssymb}
\usepackage{mathtools}
\usepackage{amsthm}

\usepackage[capitalize,noabbrev]{cleveref}

\theoremstyle{plain}
\newtheorem{theorem}{Theorem}[section]
\newtheorem{proposition}[theorem]{Proposition}
\newtheorem{lemma}[theorem]{Lemma}
\newtheorem{corollary}[theorem]{Corollary}
\theoremstyle{definition}
\newtheorem{definition}[theorem]{Definition}

\theoremstyle{remark}

\usepackage{nicefrac}
\definecolor{offWhite}{RGB}{240,240,240}
\definecolor{grey}{RGB}{180,180,180}
\definecolor{lightGrey}{RGB}{220,220,220}
\definecolor{darkgreen}{RGB}{0,125,0}
\definecolor{lime}{RGB}{255,200,0}

\definecolor{amiiBlue}{RGB}{16,72,118}
\definecolor{amiiPink}{RGB}{241,97,119}
\definecolor{amiiYellow}{RGB}{248,209,109}
\definecolor{amiiPurple}{RGB}{123,105,145}

\colorlet{yes}{cyan!50!white}
\colorlet{newYes}{cyan!75!white}
\colorlet{no}{red!50!white}
\colorlet{newNo}{red!75!white}

\usepackage{amsmath, amssymb, amsfonts, amsthm}
\makeatletter
\ifdefined\theorem
\else
  \newtheorem{theorem}{Theorem}
\fi
\ifdefined\lemma
\else
  \newtheorem{lemma}{Lemma}
\fi
\ifdefined\corollary
\else
  \newtheorem{corollary}{Corollary}
\fi
\ifdefined\definition
\else
  
\fi
\ifdefined\proposition
\else
  \newtheorem{proposition}{Proposition}
\fi
\makeatother

\makeatletter
\newcommand\safeIncCounter[1]{\@ifundefined{c@#1}{\newcounter{#1}\stepcounter{#1}}{\stepcounter{#1}}}
\makeatother

\usepackage{xargs}
 \usepackage{mathtools}

\DeclarePairedDelimiter{\abs}{\lvert}{\rvert}
\DeclarePairedDelimiter{\norm}{\lVert}{\rVert}
\DeclarePairedDelimiter{\subex}{(}{)}
\DeclarePairedDelimiter{\subblock}{[}{]}
\DeclarePairedDelimiter{\tuple}{\langle}{\rangle}
\DeclarePairedDelimiter{\set}{\{}{\}}

\DeclarePairedDelimiter{\ramp}{[}{]_+}

\usepackage{stmaryrd}

\usepackage{bbold}
\usepackage{bm}

\newcommand{\reals}{\mathbb{R}}
\newcommand{\naturals}{\mathbb{N}}

\newcommand{\Simplex}{\triangle}
\newcommand{\simplex}{\Simplex}

\newcommand{\bs}[1]{\bm{#1}}

\newcommand{\expectation}{\mathbb{E}}
\newcommand{\E}{\expectation}
\newcommand{\probability}{\mathbb{P}}
\newcommand{\Prob}{\probability}

\newcommand{\as}{\doteq}

\newcommand{\ones}{\bs{1}}
\newcommand{\zeros}{\bs{0}}

\newcommand{\bigO}{\operatorname{\mathcal{O}}}

\newcommand{\given}{\,|\,}

\newcommand{\where}{\;|\;}

\DeclareMathOperator*{\argmin}{arg\,min}
\DeclareMathOperator*{\argmax}{arg\,max}
\DeclareMathOperator*{\e}{e}
\DeclareMathOperator*{\unif}{Unif}
\DeclareMathOperator*{\Unif}{\unif}

\newcommand{\gap}{\varepsilon}

\newcommand{\Actions}{\mathcal{A}}

\newcommand{\regret}{\rho}
\newcommand{\Regret}{R}

\newcommand{\maxLoss}{L}
\newcommand{\grad}{\nabla}

\usepackage{amssymb}

\newcommand{\cfIv}{v}

\newcommand{\cfv}{\cfIv}

\usepackage{upgreek}

\newcommand{\DecisionSet}{\Theta}
\newcommand{\odpDecision}{\theta}

\newcommand{\policy}{\pi}

\newcommand{\PolicySet}{\Pi}

\newcommand{\kOfN}{k\text{-of-}N}
\newcommand{\kOfNMeasure}{\mu_{\kOfN}}

\newcommand{\LossSet}{\mathcal{L}}

\DeclareMathOperator*{\SortFn}{Sort}
\DeclareMathOperator*{\SortBy}{SortBy}

\DeclareMathOperator*{\SuccessiveRejects}{SR}
\DeclareMathOperator*{\WorstKOfNLossesFn}{\LossSet_{\kOfN}}
\def\percentile{\eta}
\DeclareMathOperator*{\WorstFactileLossesFn}{\LossSet_{\percentile}}
\DeclareMathOperator*{\supportFn}{supp}

\newcommand{\mTestCasesToSelect}{m}
\newcommand{\jointTestCaseDistributionPolicyUncertainty}{\Psi}
\newcommand{\testCaseSet}{\mathcal{T}}
\newcommand{\testCaseGroupDecisionLabel}{\testCaseSet}
\newcommand{\testCase}{c}
\newcommand{\testCaseGroup}{\tau}
\newcommand{\testCaseDistribution}{\sigma}
\newcommand{\testTuple}{\tuple{\testCaseGroup, \hat{\testCaseDistribution}_{\testCaseGroup}}}

\newcommand{\tuningLabel}{\textsc{tnp}}
\newcommand{\deploymentLabel}{\textsc{cdp}}
\newcommand{\simLabel}{\textsc{sim}}
\newcommand{\seqLabel}{\textsc{seq}}
\newcommand{\differenceFn}{\mathit{\Delta}}
\newcommand{\maxGrad}{G}
\def\cvarPercentile{1\%}
\def\probMeasure{\mu}
\def\IntegrableFnSet{\mathcal{Y}}
\def\integrableFn{y}
\def\numHoldoutReplicas{100}
 
\usepackage{xspace}

\makeatletter
\DeclareRobustCommand\onedot{\futurelet\@let@token\@onedot}
\def\@onedot{\ifx\@let@token.\else.\null\fi\xspace}

\def\eg/{\emph{e.g}\onedot} \def\Eg/{\emph{E.g}\onedot}
\def\ie/{\emph{i.e}\onedot} \def\Ie/{\emph{I.e}\onedot}
\def\cf/{\emph{c.f}\onedot} \def\Cf/{\emph{C.f}\onedot}
\def\vs/{\emph{vs}\onedot} \def\Vs/{\emph{Vs}\onedot}
\def\etc/{\emph{etc}\onedot}
\def\wrt/{with respect to} \def\dof/{d.o.f\onedot}
\def\etal/{\emph{et al}\onedot}
\def\viceversa/{\emph{vice-versa}}
\def\ow/{\emph{o.w}\onedot}
\def\whp/{w.h.p\onedot}
\def\apriori/{\emph{a priori}} \def\Apriori/{\emph{A priori}}
\def\ala/{\`{a} la}

\def\naive/{na\"{\i}ve} \def\Naive/{Na\"{\i}ve}
\def\rmPlus/{regret matching\textsuperscript{+}}
\def\rrmPlus/{RRM\textsuperscript{+}}
\def\rcfrPlus/{RCFR\textsuperscript{+}}
\def\cfrPlus/{CFR\textsuperscript{+}}
\@ifdefinable{\Politex/}{\def\Politex/{\textsc{Politex}}}

\def\NashConv/{\textsc{NashConv}}
\def\NashConvAUC/{$\overline{\textsc{NashConv}}$}

\def\heads/{\textsc{heads}}
\def\tails/{\textsc{tails}}
\def\even/{\textsc{even}}
\def\odd/{\textsc{odd}}

\makeatother

\def\rampName/{ramp}
\def\RampName/{Ramp}
\def\GranTurismo7/{Gran Turismo\textsuperscript{\texttrademark} 7}
 \expandafter\newif\csname ifGin@setpagesize\endcsname

\title{Composing Efficient, Robust Tests for Policy Selection}

\author[1]{\href{mailto:<dustin.morrill@sony.com>?Subject=Your UAI 2023 paper}{Dustin Morrill}}
\author[1]{Thomas J. Walsh}
\author[1]{Daniel Hernandez}
\author[1]{Peter R. Wurman}
\author[1,2]{Peter Stone}
\affil[1]{Sony AI\\
    New York, NY, USA
}
\affil[2]{Department of Computer Science\\
    The University of Texas at Austin\\
    Austin, TX USA
}

\begin{document}
\maketitle
\begin{abstract}
Modern reinforcement learning systems produce many high-quality policies throughout the learning process. However, to choose which policy to actually deploy in the real world, they must be tested under an intractable number of environmental conditions. We introduce RPOSST, an algorithm to select a small set of test cases from a larger pool based on a relatively small number of sample evaluations. RPOSST treats the test case selection problem as a two-player game and optimizes a solution with provable $k$-of-$N$ robustness, bounding the error relative to a test that used all the test cases in the pool.
Empirical results demonstrate that RPOSST finds a small set of test cases that identify high quality policies in a toy one-shot game, poker datasets, and a high-fidelity racing simulator.
\end{abstract}

\section{Introduction}\label{section:introduction}

\begin{figure*}[h]
  \centering
  \begin{subfigure}[t]{0.16\linewidth}
\includegraphics[width=\linewidth, clip, trim=0em -9em 0em 0cm, keepaspectratio]{./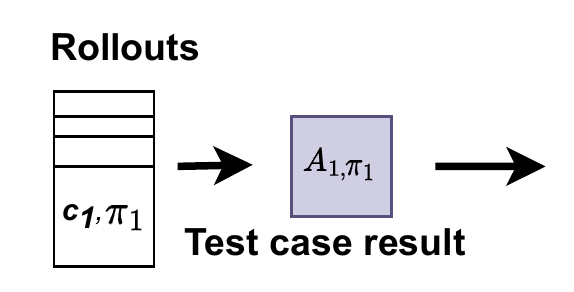}
      \label{fig:test_result_compilation}
  \end{subfigure}
  \begin{subfigure}[t]{0.26\linewidth}
\includegraphics[width=\linewidth, clip, trim=0em 0em 0.5em 0cm, keepaspectratio]{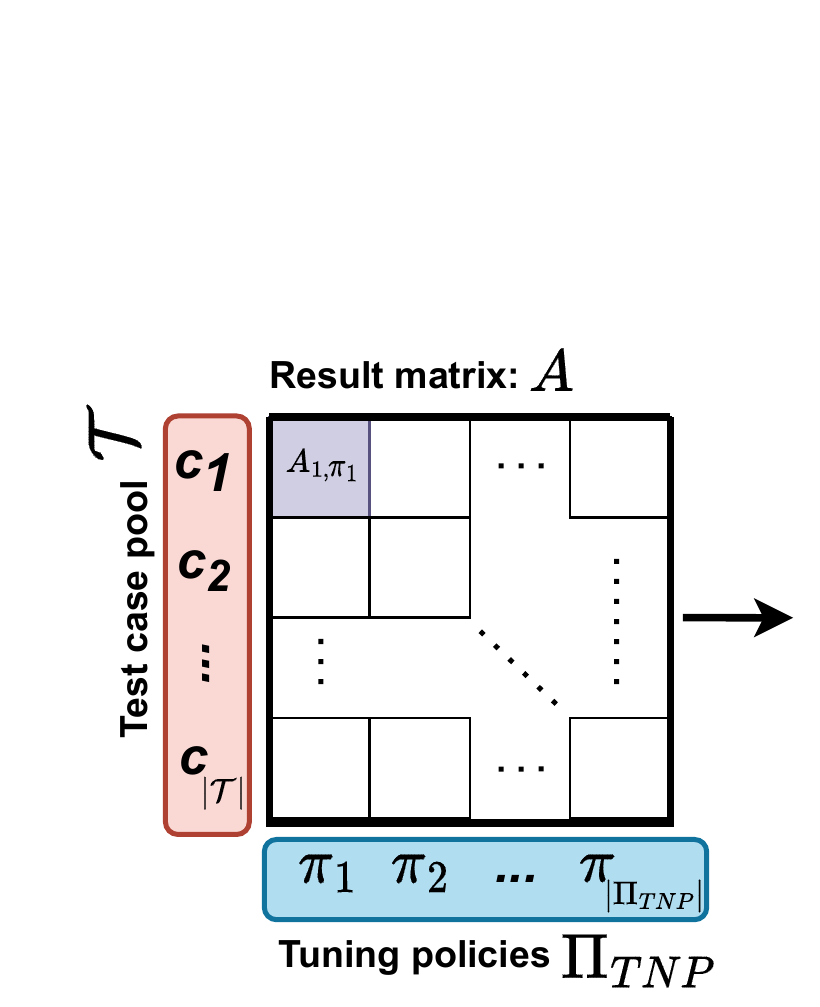}
      \label{fig:payoff_matrix_generation}
  \end{subfigure}
  \begin{subfigure}[t]{0.26\linewidth}
      \includegraphics[width=\linewidth, clip, trim=0em 2.5em 11.5em 0cm, keepaspectratio]{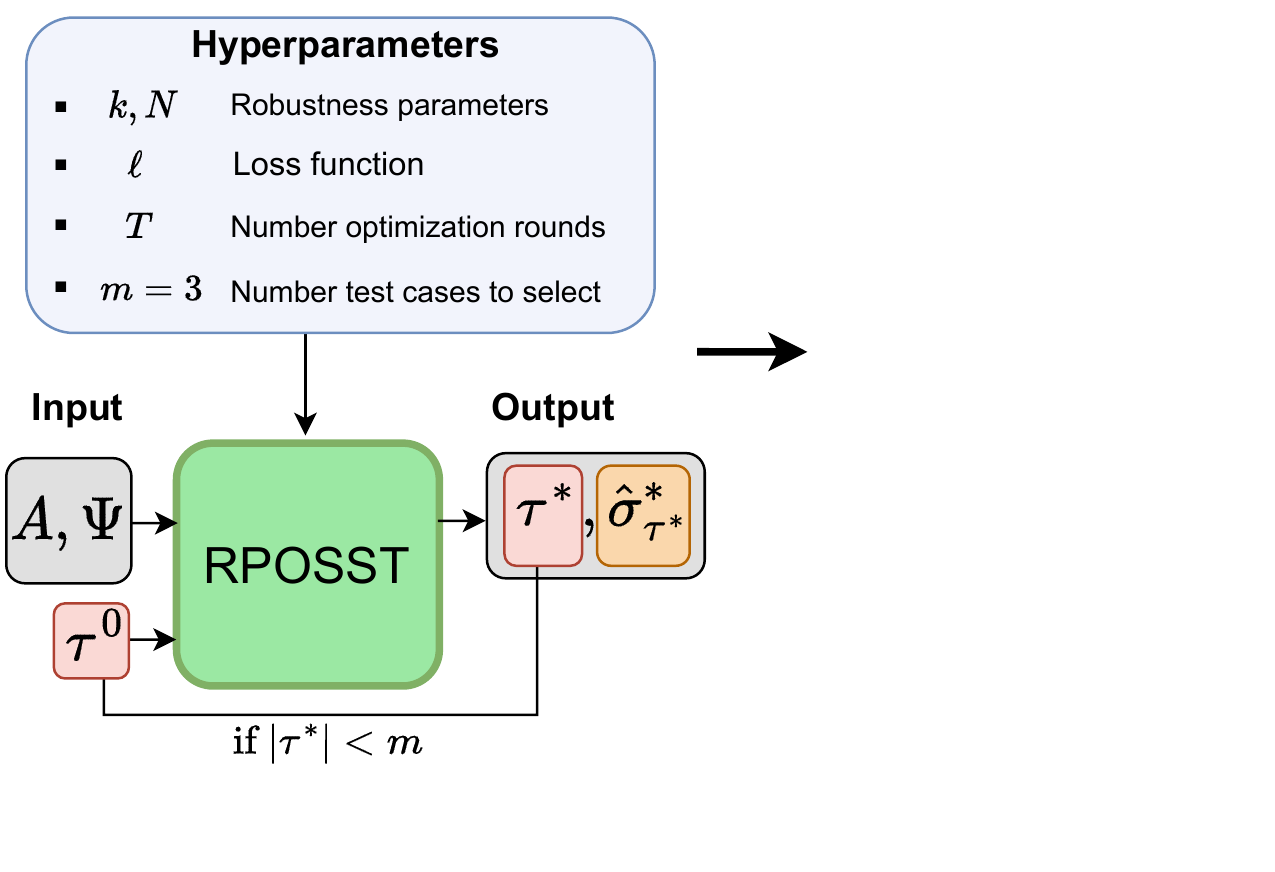}
      \label{fig:algorithm_description}
  \end{subfigure}
  \begin{subfigure}[t]{0.26\linewidth}
      \includegraphics[width=\linewidth, clip, trim=2.0em 0em 1em 0cm, keepaspectratio]{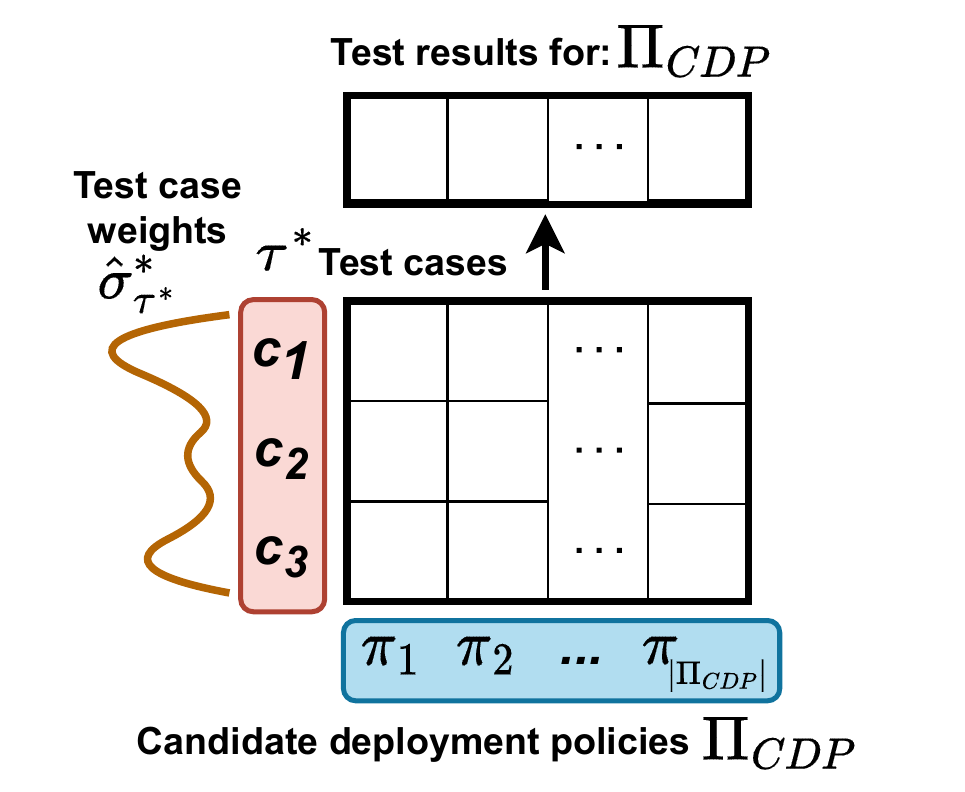}
      \label{fig:algorithm_output}
  \end{subfigure}
    \caption{Policy testing with RPOSST.
        From left to right, the result matrix $A$ is constructed from rollouts, \ie/, $A_{i,j}$ is the average rollout outcome for tuning policy $j$ ($\PolicySet_{\tuningLabel}$ is the set of tuning policies) on test case $i$.
        RPOSST analyzes $A$, taking into account uncertainty distribution $\jointTestCaseDistributionPolicyUncertainty$ and a (possibly empty) initial set of test cases that must be used, $\testCaseGroup^0$. RPOSST outputs an efficient robust test $\tuple{\testCaseGroup^*, \hat{\testCaseDistribution}_{\testCaseGroup^*}^*}$, here using only $\mTestCasesToSelect = 3$ test cases
        (if $\testCaseSet$ is too large to select all $3$ test cases at once, $\testCaseGroup^*$ can be fed back into RPOSST as $\testCaseGroup^0$).
        New candidate deployment policies, $\PolicySet_{\deploymentLabel}$, are tested against each test case in $\testCaseGroup^*$ and each result is weighted according to $\hat{\testCaseDistribution}^*_{\testCaseGroup^*}$, producing a test score for each candidate deployment policy.}
    \label{fig:algorithm_flowchart}
\end{figure*}

Reinforcement learning (RL)~\citep{sutton2018reinforcement} policies have made a number of stunning breakthroughs in multiplayer games~\citep{silver2016mastering,moravvcik2017deepstack,brown2018superhuman,vinyals2019grandmaster,brown2019superhuman,wurman2022outracing,meta2022diplomacy,perolat2022mastering}.
However, the process of choosing an RL policy for production usage, either in an exhibition or deployment for end users, is challenging.
Practitioners often generate many policies that perform well during training but which require thorough vetting on alternative conditions or opponents.
Ideally, we would construct a test case for every conceivable deployment scenario, evaluate each policy on each test case, and rank each policy according to a weighted average of test case results.
However, such a procedure is typically infeasible because of the sheer numbers of policies and deployment scenarios, especially if test cases are lengthy or involve people.
In this work, we present a method for selecting a small number of test cases from a larger pool that minimizes the reduction in test quality.

Practitioners from other fields, \eg/, educational testing~\citep{vanderlinden2005linear}, will recognize this problem as \emph{test construction}--selecting a small yet robust set of test cases, based on limited data, to evaluate many candidates.
This set of test cases should contain enough information to indicate performance over the whole test case pool.
For instance, if a policy can defeat a skilled opponent, we can infer that it can defeat an unskilled opponent.
However, complicated domains contain complex intransitive relationships between policies, necessitating test case diversity.
In addition, there is considerable uncertainty over what policies may be produced in the future and what test cases are the most important to game designers.
This uncertainty needs to be considered because once test cases are chosen, the future policies to assess may be the most difficult ones for the test to evaluate accurately.
Therefore, a robust solution is required.

We introduce a framework, \emph{robust population optimization for a small set of test cases} (\emph{RPOSST}), to compose an efficient robust test of a fixed size.
RPOSST tunes its test to approximate the test scores of adversarially selected policies and test case averaging weights, given test case results on a small set of policies.
We present two RPOSST algorithms representing different use cases, focusing on RPOSST$_{\seqLabel}$, which is better suited to current RL deployment pipelines.
We provide robustness guarantees for RPOSST$_{\seqLabel}$ and CVaR RPOSST$_{\seqLabel}$ (a convenient special case) for $k$-of-$N$ robustness measures~\citep{chen2012tractable}.
These guarantees provide confidence that RPOSST test scores for future deployment candidates are reliable.

Our contributions include the RPOSST framework, including two algorithm versions, robustness guarantees, and empirical validation in domains widely ranging in complexity.
Empirical results are presented for a toy one-shot game simulating race car passing, computer poker competition datasets, and the high fidelity racing simulator, \GranTurismo7/.
They show that RPOSST can dramatically reduce (compared to the full set) the number of test cases needed to identify good deployment policies.
 
\section{Problem Definition}
The goal of \emph{policy testing} is to evaluate the strengths and weaknesses of a large set of \emph{candidate deployment policies}, $\PolicySet_{\deploymentLabel} \subseteq \PolicySet$, in order to choose one for deployment.
A \emph{policy} $\policy \in \PolicySet$ in this setting can be any mapping from environment observations to a distribution over actions (\eg/, Markov policies; ~\citet{sutton2018reinforcement}).
A policy is evaluated on a \emph{test} consisting of various test cases chosen from a pool, $\testCaseSet$.
Each \emph{test case} simulates an important aspect of the deployment environment, for example, different parameter settings like weather conditions or different opponent policies in a competative game.
For straightforward comparisons between policies, we summarize a policy $\policy$'s test results with a scalar \emph{test score}, computed as the weighted average of $\policy$'s test case results according to \emph{test case weights}, $\testCaseDistribution \in \simplex^{\abs{\testCaseSet}}$.

If $\testCaseSet$ is small, then right before deployment we could simply test each policy, rank the policies in $\PolicySet_{\deploymentLabel}$ according to the test scores, and deploy the best one.
However, if policies will encounter a wide range of conditions during deployment, \eg/, hundreds or thousands of different players for a policy deployed to a popular video game, then $\testCaseSet$ ostensibly needs to be large in order to adequately reflect such diversity.
The linear scaling in $\abs{\testCaseSet}$ presents not just a computational burden, but also costs in sample complexity (if the test cases are lengthy) or even in person-time if human quality assurance testers might be needed for test cases.

This work addresses the problem of composing an efficient \emph{test}, $\testTuple$, by selecting a small number of test cases $\testCaseGroup \subset \testCaseSet$ and test case weights $\hat{\testCaseDistribution} \in \simplex^{\abs{\testCaseGroup}}$ to approximate a full test, $\tuple{\testCaseSet, \testCaseDistribution \in \simplex^{\abs{\testCaseSet}}}$.
Complicating this task are two sources of uncertainty to which the efficient test must be robust.
First, $\testTuple$ ought to be used on new candidate deployment policies, so $\PolicySet_{\deploymentLabel}$ is unknown before $\testTuple$ is chosen.
Second, the desired \emph{target distribution}, $\testCaseDistribution$, defining the full test to approximate may drift after $\testTuple$ is chosen.

We assume access to a small set of representative \emph{tuning policies} $\PolicySet_{\tuningLabel} \subset \PolicySet$ for immediate testing (\cref{section:rposst} discusses practical considerations in the composition of $\PolicySet_{\tuningLabel}$).
Additionally, our algorithm takes as input a joint distribution $\jointTestCaseDistributionPolicyUncertainty$ over $\PolicySet_{\tuningLabel}$ and $\simplex^{\abs{\testCaseSet}}$ to represent the combined uncertainty about which policies the output test will be applied to and which target distribution to approximate.
See \cref{fig:algorithm_flowchart} for an illustration of the test composition pipeline.

As a concrete example of the terms above and the need for robustness in the face of uncertainty, consider a car-racing agent developed for a one-on-one racing game. The first source of uncertainty is over the future policies we may want to test. Consider the case where, at test construction time, we have policies from two training runs--one that produces aggressive (collision-prone) policies, and another that produces more polite policies, but we are uncertain about which type will be best suited for the game. In this case, we want the selected test cases to provide good evaluations on policies from either set, and thus require $\jointTestCaseDistributionPolicyUncertainty$ to reflect this uncertainty. Policies from both sets should be included in $\PolicySet_{\tuningLabel}$ and our algorithm needs to be robust to policies within $\PolicySet_{\tuningLabel}$.

The second source of uncertainty is over which test cases are most important. Imagine that we have some test cases that specifically target and penalize off-track infractions. In the future, game designers could request fewer infractions or allow for more risky racing lines. To hedge against both of these possibilities we can add two target distributions to $\jointTestCaseDistributionPolicyUncertainty$, one where off-track tests cases have higher weights than the other test cases and another where they have lower weights. The job of an algorithm (such as RPOSST) is then to ensure its tests are accurate according to both target distributions.
  
\section{Background}
\label{section:background}
In order to compose an efficient and robust test, we utilize established game-theoretic frameworks for modeling robustness and learning optimal decisions (specifically, regret minimization).
The following subsections present background material on these two topics.

\subsection{Robustness}

The idea of \emph{robustness} is to prepare for an unfavorable portion of possible outcomes sampled from an uncertainty distribution.
In our formulation of policy testing the uncertainty distribution covers the future policies in $\PolicySet_{\deploymentLabel}$ and the target distribution.
A \emph{percentile robustness measure}~\citep{charnes1959chanceConstrainedProgramming}, $\probMeasure$, is a formal representation of a robustness criterion as a probability distribution over percentiles.
For example, if $\probMeasure$  has all of its weight on 0.01, then an $\mTestCasesToSelect$-size test with weights $\hat{\testCaseDistribution}_{\testCaseGroup}$ that is robust according to $\probMeasure$, then the test minimizes test score error on $\hat{\testCaseDistribution}_{\testCaseGroup}$'s worst 1\% of policy--target-distribution pairs sampled from $\jointTestCaseDistributionPolicyUncertainty$.

The \emph{$k$-of-$N$ robustness measures}~\citep{chen2012tractable} are percentile robustness measures defined by parameters $k, N \in \naturals$, $1 \le k \le N$, that permit \emph{tractable} optimization procedures.
This parameterization reflects the mechanics of how an efficient test $\testTuple$ is evaluated on such a measure: $N$ policy--target-distribution pairs are sampled from  $\jointTestCaseDistributionPolicyUncertainty$ and $\hat{\testCaseDistribution}_{\testCaseGroup}$'s performance is averaged over the $k$ worst pairs for $\hat{\testCaseDistribution}_{\testCaseGroup}$.
Every $k$-of-$N$ robustness measure is a non-increasing function, \ie/, more weight is placed on smaller percentiles, and the fraction $\nicefrac{k}{N}$ represents the percentile (technically the fractile) around which the measure decreases.

In our test construction setting, the choice of $k$ and $N$ reflects the designer's tolerance for test scores that are bad because of ``unlucky'' outcomes from $\jointTestCaseDistributionPolicyUncertainty$ (that is, test scores with large error on policy--target-distribution pairs sampled from $\jointTestCaseDistributionPolicyUncertainty$, even if they are sampled infrequently).
Optimizing for performance under small percentiles (\eg/, setting $k = 1, \;N = 100$) yields tests with a small maximum test score error across $\PolicySet_{\tuningLabel}$.
Then, even if each candidate deployment policy resembles the tuning policy that has the largest test score error, the optimized test will yield small test score errors.
In contrast, optimizing for the uniform measure ($k = N$) optimizes for mean performance across $\PolicySet_{\tuningLabel}$, essentially assuming $\PolicySet_{\deploymentLabel} = \PolicySet_{\tuningLabel}$, which can lead to large test score error on the actual candidate deployment policies.

As $N \to \infty$, the $k$-of-$N$ robustness measure approaches the \emph{conditional value at risk} (\emph{CVaR}) robustness measure at the $\nicefrac{k}{N}$ fractile~\citep{chen2012tractable}, which evenly weights all of the fractiles $\le \nicefrac{k}{N}$ and puts a weight of zero on all larger fractiles.
Formally, the robustness optimization objective is to minimize the \emph{percentile performance loss}:
\begin{align}
    L_{\probMeasure, \jointTestCaseDistributionPolicyUncertainty}(\hat{\testCaseDistribution}_{\testCaseGroup})
        =
            \inf_{\integrableFn \in \IntegrableFnSet}
                \hspace{-3.5em}
                \underset{
                    \percentile \in [0, 1], \,
                    \Prob \subblock*{
                        \ell(\hat{\testCaseDistribution}_{\testCaseGroup}; \policy, \testCaseDistribution) \le \integrableFn(\eta)
                    } \ge \percentile
                }{\int}
                    \hspace{-3.5em}
                    \integrableFn(\eta)
                    \probMeasure(d\eta),
    \label{eq:percentile-performance-loss}
\end{align}
under a loss function
$\ell : \simplex^{\abs{\testCaseSet}} \times \PolicySet_{\tuningLabel} \times \simplex^{\abs{\testCaseSet}} \to \reals$
where we overload $\ell$ for incomplete test case weight vectors by filling in zeros for missing elements,
$\tuple{\policy, \testCaseDistribution} \sim \jointTestCaseDistributionPolicyUncertainty$,
and $\IntegrableFnSet$ is the class of real-valued, bounded, $\probMeasure$-integrable functions on $[0, 1]$.
An \emph{efficient} ($\mTestCasesToSelect$-size) $\probMeasure$-\emph{robust test} is a minimizer of
$L_{\probMeasure, \jointTestCaseDistributionPolicyUncertainty}$
across all $\hat{\testCaseDistribution}_{\testCaseGroup}$ where $\testCaseGroup = \mTestCasesToSelect$.

The optimization of the percentile performance loss under $k$-of-$N$ robustness measure, $\kOfNMeasure$, can be  modeled as a zero-sum imperfect information game~\citep{chen2012tractable}.
Here, a protagonist player constructs efficient tests and an antagonist chooses a tuning policy to test and a target distribution.
For their payoffs, the antagonist receives the test score error of the protagonist's test given the antagonist's tuning policy and target distribution while the protagonist receives the negation.
The $k$ and $N$ parameters determine which target distributions and tuning policies that the antagonist can choose from and how many pairs must be averaged across.
At the start of the game, $N$ target-distribution--tuning-policy pairs are sampled.
From these $N$ pairs, the antagonist must select $k$ of them.
Finally, one of these $k$ pairs is sampled, both players receive their payoffs, and the game ends.
A \emph{minimax} test for the protagonist, \ie/, one that minimizes the protagonist's maximum loss in this game is a $\kOfNMeasure$-robust test.

\subsection{Regret}
While the game above models the optimization process, it does not instruct the protagonist on \emph{how} to choose test cases to win. A no-regret \emph{online decision process} (\emph{ODP}) algorithm can find approximate minimax decisions by repeatedly playing out the game and improving over time from payoff feedback.
Formally, on each round $t$ of the game, an ODP algorithm chooses an efficient test
$\tuple{\testCaseGroup^t, \hat{\testCaseDistribution}_{\testCaseGroup^t}^t}$
and receives the \emph{payoff function}
$\cfv^t = -\grad_{\hat{\testCaseDistribution}^t_{\testCaseGroup^t}} \ell(\hat{\testCaseDistribution}^t_{\testCaseGroup^t}; \policy^t, \testCaseDistribution^t)$
as feedback given $\tuple{\policy^t, \testCaseDistribution^t}$ chosen by the antagonist.
If the antagonist always plays a best response to the ODP algorithm, that is, the tuning-policy--target-distribution pair that maximizes the loss of
$\hat{\testCaseDistribution}^t_{\testCaseGroup^t}$
on each round $t \in \set{1, \ldots, T}$, $T \ge 1$, then the \emph{no-regret} property ensures that at least one of the tests in the sequence $\tuple{
    \tuple{\testCaseGroup^t, \hat{\testCaseDistribution}_{\testCaseGroup^t}^t}
}_{t = 1}^T$ is at most $\bigO(\nicefrac{\maxGrad}{\sqrt{T}})$ away from the minimax value, where $\maxGrad > 0$ is the maximum magnitude of the loss gradient (see \citet{ED,exploitabilityDescentArxiv} and Appendix \cref{prop:best-optimality} for more details).

\emph{Regret matching$^+$}~\citep{cfrPlus,solvingHulhe} is a no-regret algorithm for simplex decision sets, \eg/, the $\mTestCasesToSelect$ dimensional test case weight space $\simplex^{\mTestCasesToSelect}$,
that selects
$\hat{\testCaseDistribution}^t_{\testCaseGroup^t}
   = \frac{q^{1:t - 1}}{\ones^{\top} q^{1:t - 1}}$
using \emph{pseudoregrets}
$q^{1:t} = \ramp{q^{1:t - 1} + \regret^t}$, $q^{1:0} = \zeros$,
where $\regret^t = \cfv^t - \subex{\cfv^t}^{\top} \hat{\testCaseDistribution}^t_{\testCaseGroup^t}$ is the \emph{instantaneous regret} vector ($\hat{\testCaseDistribution}^t_{\testCaseGroup^t} = \frac{1}{d}\ones$ if none of the pseudoregrets are positive).

\section{RPOSST}\label{section:rposst}
Our approach, \emph{robust population optimization for a small set of test cases} (\emph{RPOSST})
begins by evaluating each tuning policy $\policy \in \PolicySet_{\tuningLabel}$ on each test case $\testCase \in \testCaseSet$, yielding a
$\abs{\testCaseSet} \times \abs{\PolicySet_{\tuningLabel}}$
result matrix $A$ of test case results.
As an optimization approach, RPOSST aims to minimize prediction errors, as measured by a convex function
$\differenceFn: \reals \times \reals \to \reals$,
\eg/, the absolute difference
$\differenceFn(\hat{x}, x) = \abs{\hat{x} - x}$.
RPOSST robustly optimizes for a small set of test cases and a weighting over them according to how well it reproduces test scores admitted by $A$ as measured by a loss function
\[
    \ell:
        \hat{\testCaseDistribution}; \policy_j, \testCaseDistribution
        \mapsto
            \differenceFn(
                \underbrace{
                    E_{i \sim \hat{\testCaseDistribution}}\subblock*{
                        A_{i, j}
                    }
                }_{\hat{\testCaseDistribution}\text{'s test score for } \policy_j.},
                \underbrace{
                    E_{i \sim \testCaseDistribution}\subblock*{
                        A_{i, j}
                    }
                }_{\testCaseDistribution\text{'s test score for } \policy_j.}
            ),
\]
on test case distribution
$\hat{\testCaseDistribution} \in \simplex^{\abs{\testCaseSet}}$
compared to
$\testCaseDistribution \in \simplex^{\abs{\testCaseSet}}$
with respect to test results from the $j$\textsuperscript{th} tuning policy $\policy_j$.
Since $\hat{\testCaseDistribution}$ is being used to produce test scores that approximate those under $\testCaseDistribution$, we call $\testCaseDistribution$ a \emph{target distribution} in this context.
Our goal is to select a small number of test cases, so we constrain RPOSST to output weights
$\hat{\testCaseDistribution}_{\testCaseGroup} \in \simplex^{\mTestCasesToSelect}$
for groups of test cases $\testCaseGroup \subset \testCaseSet$ of size $\mTestCasesToSelect$.

Though $\testCaseSet$ is large, the cost of computing $A$ is balanced by the savings of using fewer test cases for future policies.
RPOSST is robust to any distribution over $\PolicySet_{\tuningLabel}$, so as long as this set covers the space of $\PolicySet_{\deploymentLabel}$ (\ie/, all $\pi \in \PolicySet_{\deploymentLabel}$ are convex mixtures of $\PolicySet_{\tuningLabel}$), this robustness imparts a minimum test accuracy guarantee even on deployment candidates.
Intuitively, this means the quality of RPOSST's tests will tend to improve with more diverse tuning policies.
Accordingly, it should be beneficial for a tuning policy to represent an extreme point in a reasonable region of policy space, or at least for it to be generated with a method similar to that which will generate deployment candidates (\eg/, sampled from checkpoints of RL training runs).
That way, the tuning policies include a diverse collection of skilled and unskilled policies with random variations, while retaining architectural and algorithmic similarities to future deployment candidates.

Following the earlier discussion of $k$-of-$N$ robustness, we frame the optimization in RPOSST as a zero-sum game.
By adversarially choosing policies to test, the antagonist forces RPOSST to compose tests that are better at accurately testing the more difficult-to-assess policies in the tuning set, providing a degree of robustness to the distribution of future deployment candidates.
Similarly, by adversarially choosing the target distribution, the antagonist also forces RPOSST to be robust along this dimension.
The steps of each round $t = 1, \ldots, T$ of our optimization game follows.
\begin{enumerate}
    \item The protagonist must choose an $\mTestCasesToSelect$-tuple of test cases $\testCaseGroup^t \subset \testCaseSet$ and weights
        $\hat{\testCaseDistribution}^t_{\testCaseGroup^t} \in \simplex^{\mTestCasesToSelect}$.
    \item $N$ policies to test and target distributions,
        $\tuple*{ \tuple*{ \policy_{j_i}, \testCaseDistribution_i} }_{i = 1}^N$,
        are sampled from uncertainty distribution
        $\jointTestCaseDistributionPolicyUncertainty$.
    \item The antagonist chooses the $k$ worst policies and target distributions, \ie/, those that maximize
        $\ell\subex*{
            \hat{\testCaseDistribution}^t_{\testCaseGroup^t};
            \policy_{j_i},
            \testCaseDistribution_i
        }.$
    \item One of the $k$ worst configurations is sampled uniformly, leading to the end of the round, at which point the protagonist receives the payoff
        $\cfv^t_{\testCaseGroup^t, (i)} = -\ell\subex*{
            \hat{\testCaseDistribution}^t_{\testCaseGroup^t};
            \policy_{j_{(i)}},
            \testCaseDistribution_{(i)}
        },$
        where the subscript $(i)$ denotes the $i$\textsuperscript{th} element of a sorted list in descending order (the $i$\textsuperscript{th} worst for the protagonist).
\end{enumerate}
The protagonist is allowed to update their strategy at the end of each round based on the expected payoff, $\E_{i \sim \Unif\subex*{\set{1, \ldots, k}}}\subblock*{ \cfv^t_{\testCaseGroup, (i)} }$, for each $\testCaseGroup \in \testCaseSet^{\mTestCasesToSelect}$ they could have chosen.
The more rounds of the game that are run (the larger $T$ is), the closer RPOSST gets to returning a minimax strategy, and consequently, a robust optimal selection of test cases and weights.
Thus, in application, $T$ can be set as large as is convenient under computational and time constraints.
\Cref{thm:rposst-seq} gives a precise rate for RPOSST's improvement, with high probability, as a function of $T$.
Although the protagonist must consider an exponential (in $\mTestCasesToSelect$) number of test case combinations, the premise of RPOSST is that we want a small set of test cases, so $\mTestCasesToSelect$ will be small.
To decrease computational requirements, RPOSST can be run in a loop to select test cases iteratively until $\mTestCasesToSelect$ have been selected, at a potential cost to test accuracy compared to optimizing for the entire $\mTestCasesToSelect$-tuple at once.

\subsection{Antagonist Information Models}

We consider two RPOSST algorithm variants that utilize different models of the information that the antagonist in our optimization game has before they make their choice.
These models correspond to two policy testing use cases.
The first, ``simultaneous move'' model is less pessimistic, but has impractical aspects, which are addressed by the subsequent ``sequential move'' model.

\textbf{Simultaneous move.}
The simultaneous move model is a na\"ive application of the original $k$-of-$N$ game by \citep{chen2012tractable}.
In this model, the antagonist does not observe which $\mTestCasesToSelect$-tuple of test cases, $\testCaseGroup^t$, is selected by the protagonist on each round $t$.
Instead, it is randomized with a distribution $\hat{\testCaseDistribution}^t_{\testCaseGroupDecisionLabel} \in \simplex^{\abs{\testCaseSet}^{\mTestCasesToSelect}}$.
This model corresponds to the policy testing use case where a new $\mTestCasesToSelect$-tuple of test cases is sampled independently for each test that is performed.
Every test only evaluates $\mTestCasesToSelect$ cases, as desired from a computational efficiency perspective, however, the particular test cases used in each test could be different, making results incomparable across tests.
See Appendix \cref{sec:sim-move-model} for additional details.

\begin{algorithm2e}[tb]
  \caption{RPOSST$_{\seqLabel}$ with regret matching$^+$ and Successive Rejects}\label{alg:rposst-seq}
  \DontPrintSemicolon
  \textbf{\textit{Inputs:}} $\langle k, N, T_1, \mTestCasesToSelect, \jointTestCaseDistributionPolicyUncertainty, \testCaseGroup^0, \ell, T_2 \rangle$
\vspace{0.2em} \hrule \vspace{0.2em}

  $q^{1:0}_{\testCaseGroup} \gets \zeros \in \reals^{\mTestCasesToSelect + \abs{\testCaseGroup^0}}$ \textbf{for} $\testCaseGroup \in \testCaseSet^{\mTestCasesToSelect}$

  $T' \sim \Unif(\set{1, \ldots, T_1})$

  \For{$t \gets 1, \ldots, T'$}{
      \For{$\testCaseGroup \in \testCaseSet^{\mTestCasesToSelect}$}{
          $z^t \gets \ones^{\top} q^{1:t - 1}_{\testCaseGroup}$

          $\hat{\testCaseDistribution}^t_{\testCaseGroup} \gets q^{1:t - 1}_{\testCaseGroup} / z^t$ \textbf{if} $z^t > 0$ \textbf{else} $\ones / \mTestCasesToSelect$

          \tcp{Add zeros to ensure $\hat{\testCaseDistribution}^t_{\testCaseGroup} \in \simplex^{\abs{\testCaseSet}}$.}
          $\hat{\testCaseDistribution}^t_{\testCaseGroup}(x) \gets 0$ \textbf{for} $x \in \testCaseSet \setminus (\testCaseGroup \cup \testCaseGroup^0)$

          $\subblock{ \ell_{\testCaseGroup, (i)} }_{i = 1}^k \gets
            \WorstKOfNLossesFn\subex*{\hat{\testCaseDistribution}^t_{\testCaseGroup}, \tuple{k, N}, \jointTestCaseDistributionPolicyUncertainty, \ell}$

          $\cfv^t_{\testCaseGroup} \gets
            \dfrac{-1}{k} \sum_{i = 1}^k \dfrac{\partial \ell_{\testCaseGroup, (i)}, \policy_{j_{(i)}}}{\partial \hat{\testCaseDistribution}^t_{\testCaseGroup}}$

          \tcp{Update regret matching$^+$.}
          $\regret^t_{\testCaseGroup} \gets \cfv^t_{\testCaseGroup} - (\hat{\testCaseDistribution}^t_{\testCaseGroup})^{\top} \cfv^t_{\testCaseGroup}$

          $q^{1:t}_{\testCaseGroup} \gets \ramp{q^{1:t - 1}_{\testCaseGroup} + \regret^t_{\testCaseGroup}}$
      }
  }
  $\testCaseGroup^* \gets
    \SuccessiveRejects\subex*{
      \testCaseGroup
        \mapsto
          \dfrac{1}{2 k \maxLoss} \ones^{\top}
            \WorstKOfNLossesFn\subex*{
              \hat{\testCaseDistribution}^{T'}_{\testCaseGroup},
              \tuple{k, N},
              \jointTestCaseDistributionPolicyUncertainty,
              \ell
            },
      T_2
    }$

  \Return $\testCaseGroup^*, \hat{\testCaseDistribution}^{T'}_{\testCaseGroup^*}$

  \vspace{0.5em} \hrule \vspace{0.2em}
  \setcounter{AlgoLine}{0}
  \SetKwProg{Subroutine}{Procedure}{}{}
  \Subroutine{$\WorstKOfNLossesFn$ \quad \textbf{Inputs:} $\langle \hat{\testCaseDistribution}, k, N, \jointTestCaseDistributionPolicyUncertainty, \ell \rangle$}{

\vspace{0.2em} \hrule \vspace{0.2em}

    \For{$i = 1 \ldots N$}{
      \tcp{Sample antagonist actions.}
      $\policy_{j_i}, \testCaseDistribution_i \sim \jointTestCaseDistributionPolicyUncertainty$

      \tcp{Evaluate $\hat{\testCaseDistribution}$.}
      $\ell_i \gets \ell(\hat{\testCaseDistribution}; \policy_{j_i},\testCaseDistribution_i)$
    }

    \tcp{Sort to identify the worst $k$.}
    $\SortFn\subex*{
      \subblock*{ \ell_i }_{i = 1}^N
    }$

    \Return $\subblock{ \ell_{(i)} }_{i = 1}^k$
  }
\end{algorithm2e}
 
\textbf{Sequential move.}
In the sequential move model, the antagonist observes $\testCaseGroup^t$ before acting.
The antagonist is thus able to tailor their choice of
$\tuple*{\tuple*{
    \policy_{j_{(i)}},
    \testCaseDistribution_{(i)}
}}_{i = 1}^k$
to whichever $\testCaseGroup^t$ is selected, and randomizing over the $\mTestCasesToSelect$-tuple of test cases has no benefit to the protagonist.
Since the antagonist observes $\testCaseGroup^t$, the protagonist must update all the weights that they would apply to each test case tuple $\testCaseGroup$ as if $\testCaseGroup^t = \testCaseGroup$.
Thus, the selection of $\testCaseGroup^t$ does not impact the protagonist's updates and we need not explicitly select an $\mTestCasesToSelect$-tuple until the very end of the algorithm, after $T' \sim \Unif\subex{\set{1, \ldots, T_1}}$ rounds.\footnote{RPOSST is run for $T'$ rather than $T_1$ rounds because we cannot guarantee a decrease in worst-case loss after every round. See the proof of \cref{thm:rposst-seq} for more details.}

Since the set of $N$ losses observed on each round are generally random, we cannot reuse them to identify which $\mTestCasesToSelect$-tuple leads to the lowest loss using the the test case weights computed after running for $T'$ rounds, $\tuple{\hat{\testCaseDistribution}^{T'}_{\testCaseGroup}}_{\testCaseGroup \in \testCaseSet^{\mTestCasesToSelect}}$.
In addition, we cannot access expected $k$-of-$N$ losses directly; we must estimate them by sampling from $\jointTestCaseDistributionPolicyUncertainty$.
Therefore, the selection of a single $\testCaseGroup$ is a ``best arm identification'' problem, where $\testCaseSet^{\mTestCasesToSelect}$ is the set of arms.
The Successive Rejects (SR)~\citep{audibert2010BestArmIdentification} algorithm is an exploration-only bandit algorithm that can be used to solve this problem with a worst-case guarantee on the probability that it identifies the best arm.
The more SR iterations that are run, the more likely it is to select the best arm.
\Cref{alg:rposst-seq} shows how to implement RPOSST for the sequential move model using regret matching$^+$ for tuning the test case weights and SR for the final selection of an $\mTestCasesToSelect$-tuple.

In specific applications, an example of which we will see in \cref{sec:deterministic-rposst} and our experiments, we can construct our optimization game so that it is deterministic, and consequently, we can replace SR with a simple argmax.

The RPOSST$_{\seqLabel}$ objective is the percentile performance loss
\begin{align}
    \min_{
        \substack{
            \testCaseGroup \in \testCaseSet^{\mTestCasesToSelect}\\
      \hat{\testCaseDistribution}_{\testCaseGroup} \in \simplex^{\mTestCasesToSelect}
        }
    }
    \inf_{\integrableFn \in \IntegrableFnSet}
        \hspace{-1em}
        \underset{
            \substack{
                \percentile \in [0, 1]\\
                \Prob \subblock*{
                    \ell(\hat{\testCaseDistribution}_{\testCaseGroup}; \policy_j, \testCaseDistribution) \le \integrableFn(\eta)
                } \ge \percentile
            }
        }{\int}
            \hspace{-3em}
            \integrableFn(\eta)
            \probMeasure_{\kOfN}(d\eta),
    \label{eq:rposst-seq-objective}
\end{align}
where
$\tuple{\policy_j, \testCaseDistribution} \sim \jointTestCaseDistributionPolicyUncertainty$.

The sequential move model represents the policy testing use case where we select and fix $\mTestCasesToSelect$ test cases and test case weights for all future test policies.
Test scores are easily reproducible and comparable across test applications since the test cases never change.

\begin{theorem}
    \label{thm:rposst-seq}
    After $T' \sim \Unif(\set{1, \ldots, T_1})$, $T_1 > 0$, rounds of its optimization game, \cref{alg:rposst-seq} selects an $\mTestCasesToSelect$-tuple of test cases, $\testCaseGroup^*$ and weights
$\hat{\testCaseDistribution}^{T'}_{\testCaseGroup^*} \in \simplex^{\mTestCasesToSelect}$
that, with probability $(1 - p)(1 - q)(1 - \alpha)$, $p, q, \alpha > 0$, are $\frac{\gap}{q}$-optimal for \cref{eq:rposst-seq-objective}, where
$\gap = \bigO\subex*{ \sqrt{\frac{1}{T_1} \mTestCasesToSelect} + \sqrt{\frac{1}{T_1} \log\subex*{ \nicefrac{1}{p} }} }$
and
$\alpha = \bigO\subex*{\e^{-T_2}}$. \end{theorem}
\textbf{All proofs deferred to the Appendix.}
In the extreme case where $\PolicySet_{\tuningLabel}$ covers $\PolicySet$, then this optimality result, (in terms of an upper bounded percentile loss integral), extends to all deployment candidates $\PolicySet_{\deploymentLabel}$.

\subsection{Deterministic CVaR RPOSST}\label{sec:deterministic-rposst}

While in general, an RPOSST algorithm has a randomized procedure and a non-deterministic optimality guarantee, we can actually select hyperparameters so that RPOSST is deterministic, making the procedure simpler and more reliable.
If we fix the ratio $\nicefrac{k}{N}$ and allow $N \to \infty$, the $k$-of-$N$ robustness measure converges toward the CVaR measure at the $\nicefrac{k}{N}$ fractile.
A $k$-of-$N$ algorithm where $N \to \infty$ cannot be implemented with the usual sampling procedure, but it can be implemented if the distribution characterizing our uncertainty, $\jointTestCaseDistributionPolicyUncertainty$, has finite support.

Sampling $\jointTestCaseDistributionPolicyUncertainty$ infinitely would result in sampling all tuning-policy--target-distribution pairs in its support exactly in proportion to their probabilities.
Rather than selecting $k$ tuning-policy--target-distribution pairs, the antagonist must select pairs until their cumulative probability sums to $\nicefrac{k}{N}$.
Effectively, the antagonist assigns weights
\[\alpha_{(i)} = \min \set*{
    \jointTestCaseDistributionPolicyUncertainty\subex*{
    \tuple*{\policy_{j_{(i)}}, \testCaseDistribution_{(i)}}
    },
    \nicefrac{k}{N} - \sum_{h = 1}^{i - 1} \alpha_{(h)}
}\]
to each tuning-policy--target-distribution pair in $\jointTestCaseDistributionPolicyUncertainty$'s support, where the ordering between pairs is determined by the loss each induces for the protagonist.
Finally, these tuning-policy--target-distribution pairs are sampled according to the normalized weights $\frac{\alpha_{(i)} N}{k}$.

The robustness guarantees become deterministic because the entire RPOSST algorithm, denoted as CVaR($\percentile$) RPOSST for the $\percentile = \nicefrac{k}{N}$ fractile, can be run using exact expectations (excluding randomness in $A$, which is taken as given in RPOSST).
Determinism in RPOSST$_{\seqLabel}$ allows us to directly check the exact expected loss of each test case distribution on each round, letting us track the lowest loss test case distribution across all rounds.
This tracking, in turn, allows us to avoid both sampling $T'$ and running the $\SuccessiveRejects$ algorithm to do the final selection.
Instead, we can simply return the lowest loss test case distribution across all $T$ rounds.

If there are $d$ tuning-policy--target-distribution pairs in $\jointTestCaseDistributionPolicyUncertainty$'s support, then the expected CVaR($\percentile$) loss of the protagonist on round $t$ is
$L^t = \min_{\testCaseGroup \in \testCaseSet^{\mTestCasesToSelect}}
    \sum_{i = 1}^d
        \frac{\alpha_{(i)}}{\percentile}
            \ell(\hat{\testCaseDistribution}^t_{\testCaseGroup}; \policy_{j_{(i)}, \testCaseDistribution_{(i)}})$.
The round with the lowest expected loss is
$t^* = \argmin_{t \in \set{1, \ldots, T}} L^t$,
and this definition allows us to state the following corollary.
\begin{corollary}
    \label{cor:deterministic-rposst-seq}
    Assume that
$\jointTestCaseDistributionPolicyUncertainty \in \simplex^d$
for some finite $d \ge 1$.
After $T$ rounds of the CVaR($\percentile$) RPOSST$_{\seqLabel}$ optimization game, where the protagonist chooses $\mTestCasesToSelect$-size tests according to regret matching$^+$ against a best response antagonist,
$\testCaseGroup^*$ and $\testCaseDistribution^{t^*}_{\testCaseGroup^*}$
are $\gap$-optimal for \cref{eq:rposst-seq-objective} under the $\percentile$-fractile CVaR robustness measure, where
$\gap = \bigO\subex*{ \sqrt{\frac{1}{T} \mTestCasesToSelect}}$. \end{corollary}
Pseudocode for CVaR($\percentile$) RPOSST$_{\seqLabel}$ is presented in Appendix \cref{alg:cvar-rposst-seq}.

\def\targetDistributionLabelText{TTD}
\def\tuningLabelText{TNP}

In addition, we can construct a series of ablations of CVaR RPOSST$_{\seqLabel}$ to act as baselines for experiments, and to make a connection to the test-construction literature.

CVaR RPOSST$_{\seqLabel}$ generalizes an intuitive algorithm: find the $\mTestCasesToSelect$-tuple of test cases that minimizes the maximum error assuming a uniform distribution over the tuple. This \emph{minimax uniform} algorithm is implemented by executing only the initialization and selection steps of CVaR($0$) RPOSST$_{\seqLabel}$ ($T = 0$).
Further simplifying,
\emph{minimax(\targetDistributionLabelText) uniform} performs the antagonist maximization only over target distributions and assumes a uniform distribution over tuning policies.
\emph{Minimax(\tuningLabelText) uniform} performs the antagonist maximization only over tuning policies and assumes a uniform target distribution.
\emph{Miniaverage uniform} assumes both a uniform distribution over tuning policies and for the target distribution.

Additionally, we could select test cases one at a time to minimize the maximum error, echoing greedy algorithms from the test-construction literature (Chapter 4 of ~\citet{vanderlinden2005linear}).
This \emph{iterative minimax} algorithm is almost the same as running the initialization and return steps of CVaR($0$) RPOSST$_{\seqLabel}$ to select a single test case in a loop.
The sole difference being that iterative minimax could select the same test case multiple times within its loop to adjust the test case weighting away from uniform.

\section{Experiments}\label{section:experiments}

\begin{figure*}[t]
    \begin{minipage}[t]{0.325\linewidth}
        \includegraphics[width=\linewidth]{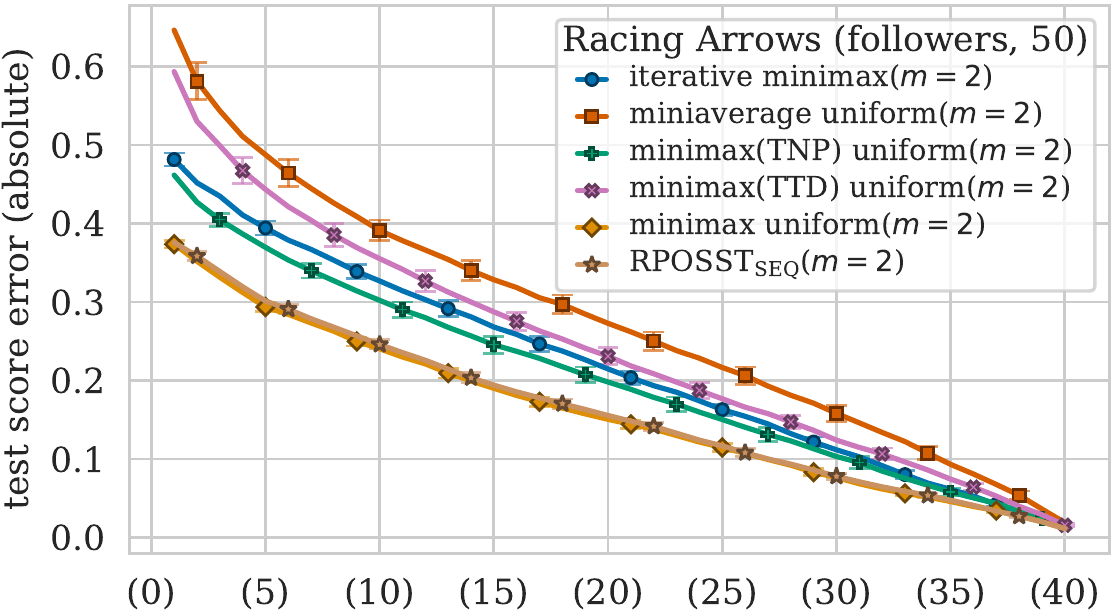}
    \end{minipage}\hfill \begin{minipage}[t]{0.325\linewidth}
        \includegraphics[width=\linewidth]{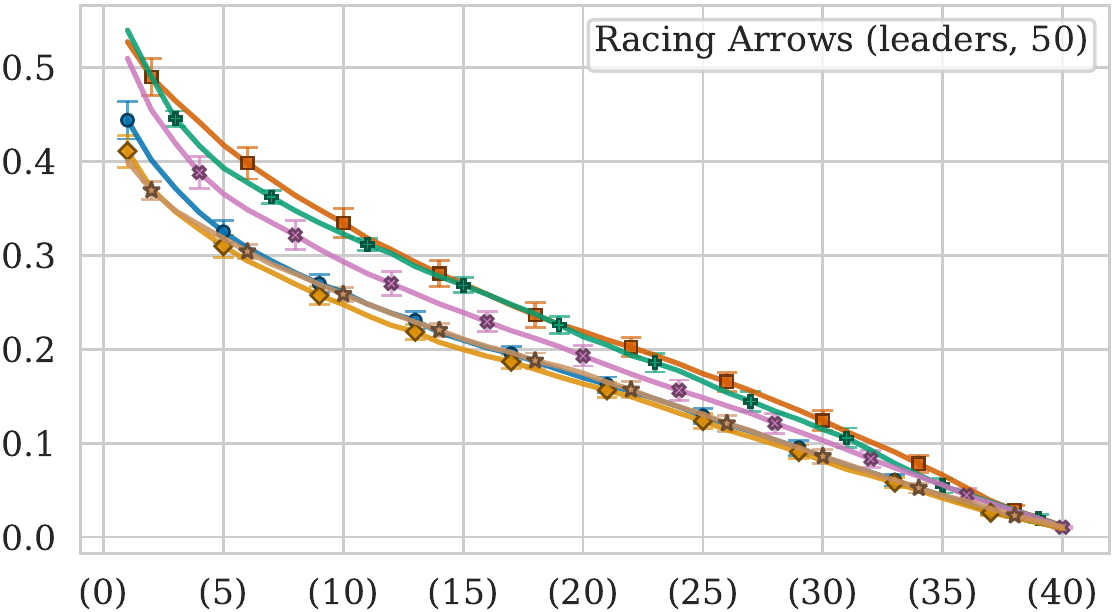}
    \end{minipage}\hfill \begin{minipage}[t]{0.325\linewidth}
        \includegraphics[width=\linewidth]{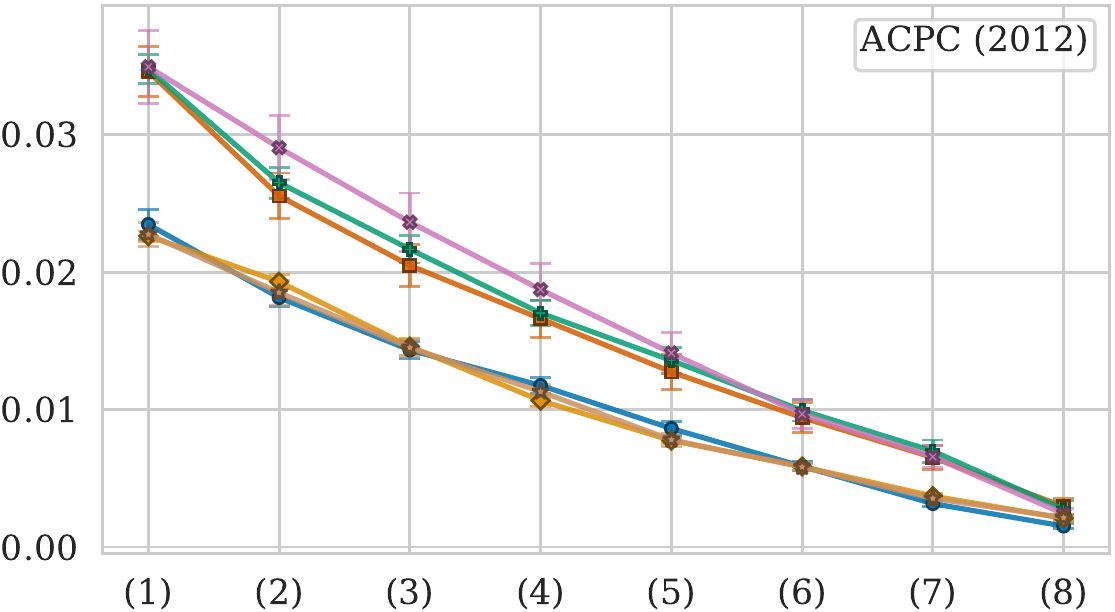}
    \end{minipage}\vspace{0.5em}
    \begin{minipage}[t]{0.325\linewidth}
        \includegraphics[width=\linewidth]{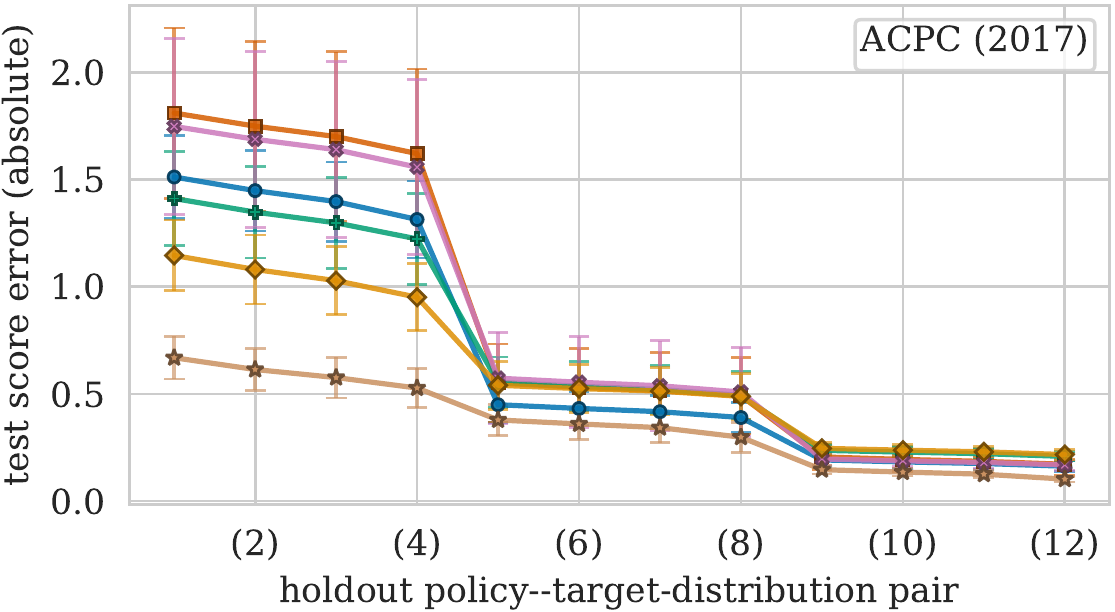}
    \end{minipage}\hfill \begin{minipage}[t]{0.325\linewidth}
        \includegraphics[width=\linewidth]{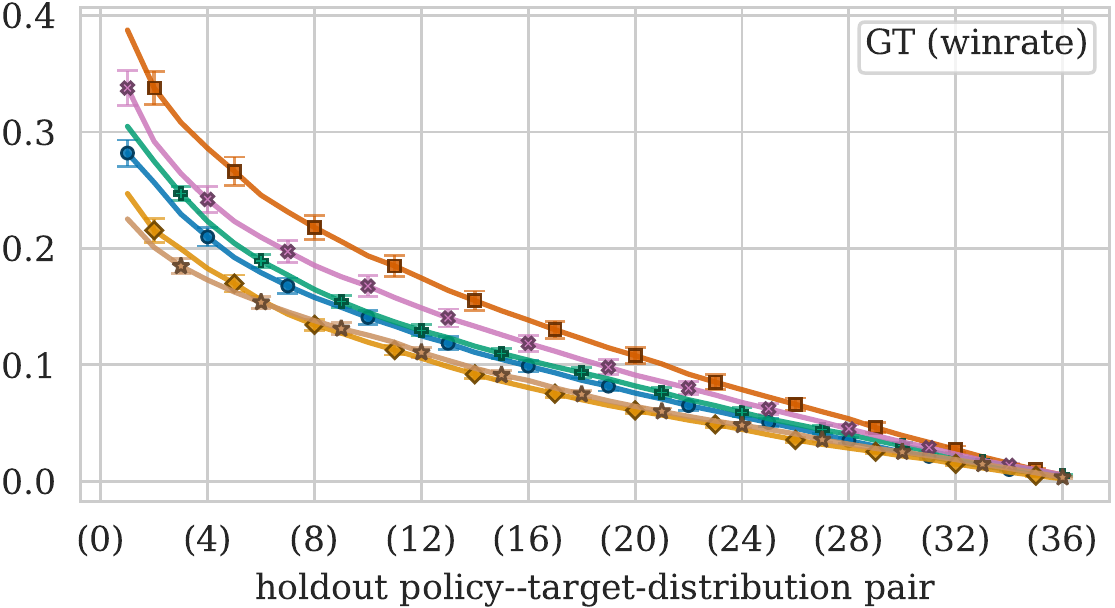}
    \end{minipage}\hfill \begin{minipage}[t]{0.325\linewidth}
        \includegraphics[width=\linewidth]{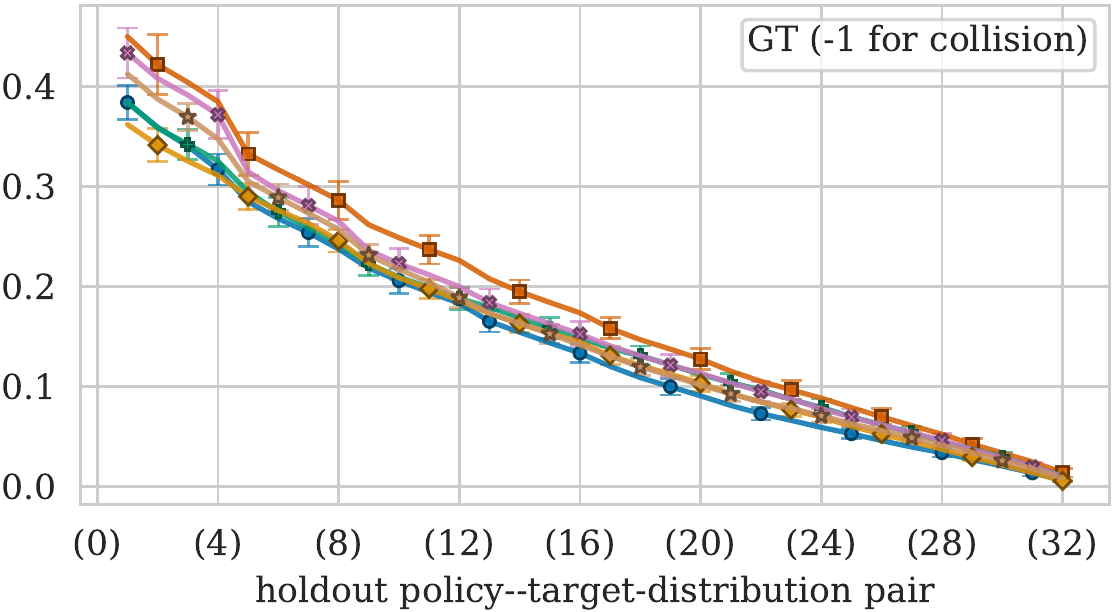}
    \end{minipage}\caption{Expected test score error (absolute difference) across holdout-policy--target-distribution pairs on (top left and middle) Racing Arrows, (top right) the 2012 two-player, limit competition of the ACPC, (bottom left) the 2017 two-player, no-limit competition of the ACPC, (bottom middle and right) \GranTurismo7/ races, between CVaR($\cvarPercentile$) RPOSST$_{\seqLabel}$ and baseline tests on $\numHoldoutReplicas$ randomly sampled sets of holdout policies ($20\%$ of the full set of policies; $80\%$ used as tuning policies). Holdout-policy--target-distribution pairs are sorted according to test score error. Each RPOSST$_{\seqLabel}$ instance was run for $500$ rounds ($T = 500$). Errorbars represent $95\%$ t-distribution confidence intervals.}
    \label{fig:test-score-error}
\end{figure*}
\begin{figure*}[t]
    \begin{subfigure}[t]{0.32\linewidth}
        \includegraphics[width=\linewidth]{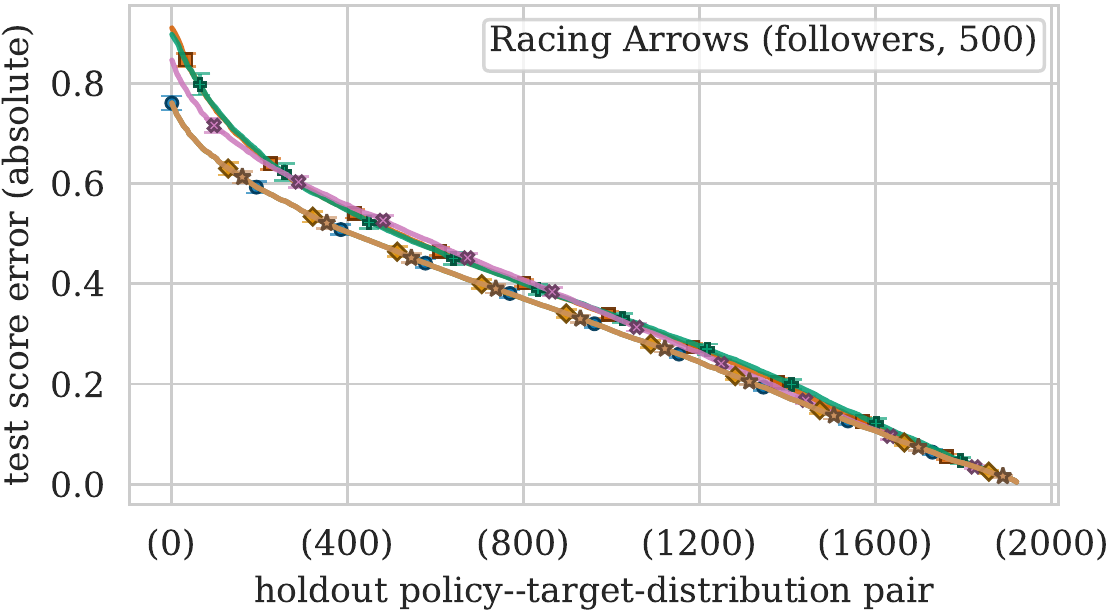}
        \caption{$\mTestCasesToSelect = 1$}
    \end{subfigure}\hfill \begin{subfigure}[t]{0.32\linewidth}
        \includegraphics[width=\linewidth]{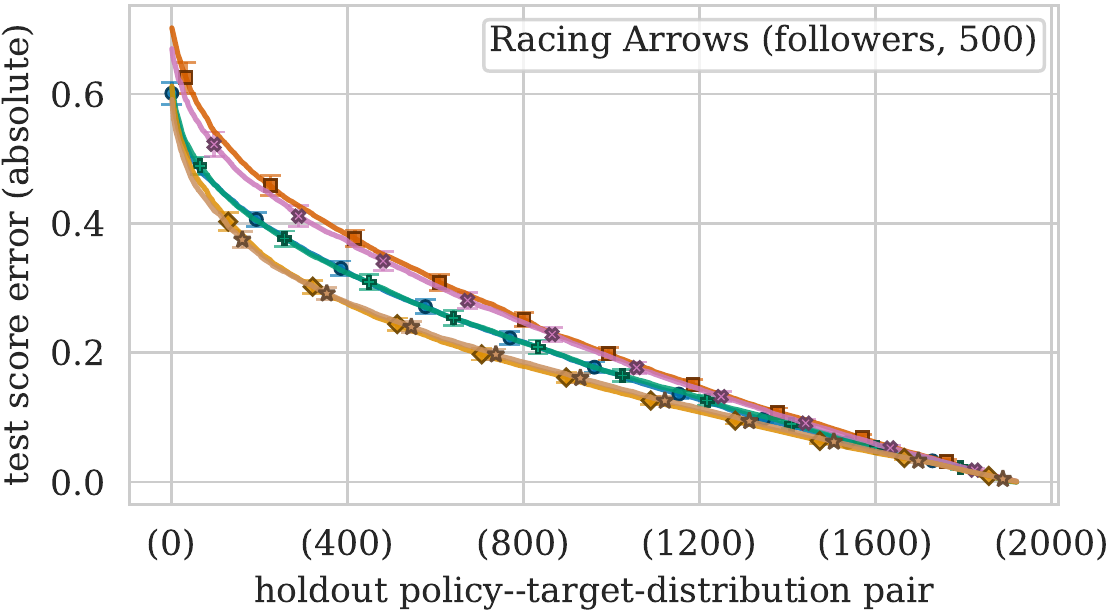}
        \caption{$\mTestCasesToSelect = 2$}
    \end{subfigure}\hfill \begin{subfigure}[t]{0.32\linewidth}
        \includegraphics[width=\linewidth]{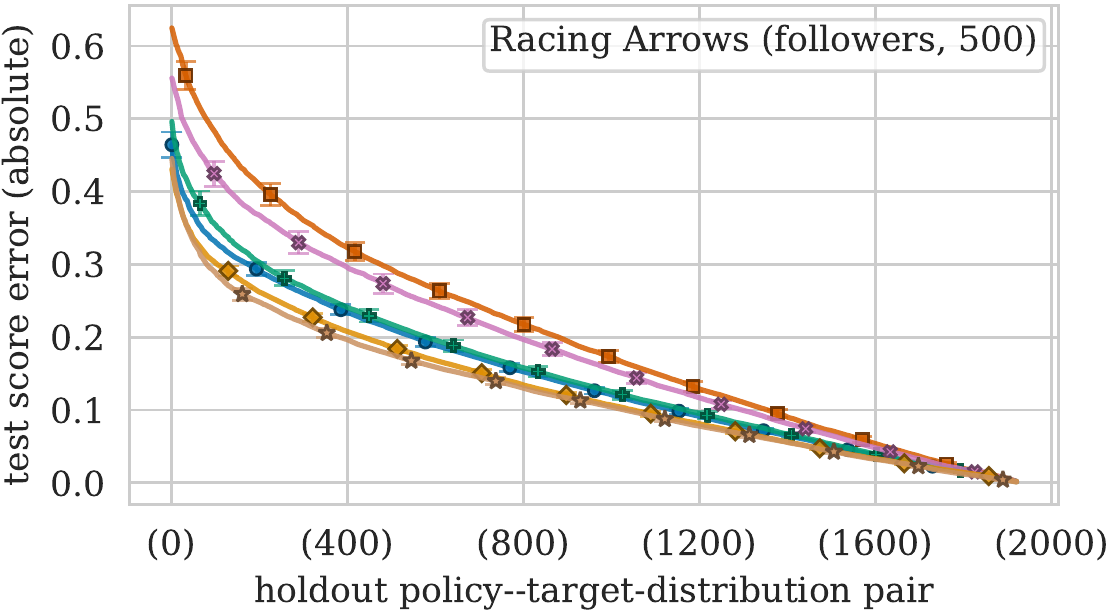}
        \caption{$\mTestCasesToSelect = 3$}
    \end{subfigure}
    \caption{Expected test score error across holdout-policy--target-distribution pairs on Racing Arrows where test cases are follower policies. Here, 500 Racing Arrows policies were sampled for both the follower and leader role and then $96\%$ of policies of both roles were held out before running RPOSST and each baseline. Each column uses a different test size $\mTestCasesToSelect$. $\numHoldoutReplicas$ sets of holdout policies were sampled and each RPOSST$_{\seqLabel}$ instance was run for $500$ rounds ($T = 500$).}
    \label{fig:racing-arrows-f-500}
\end{figure*}

We explore CVaR RPOSST$_{\seqLabel}$'s performance in three two-player games spanning the range of complexity from a toy one-shot game to a high-fidelity racing simulator, in comparison with minimax and miniaverage baselines.
We show that robustness does tend to decrease test score errors on holdout policies and that RPOSST specifically either outperforms or performs about as well as each baseline in each domain.

\subsection{Experimental Setup}

In each domain, we start with data from playing out every pairing of $n > 0$ policies, yielding a matrix of scores for the column policy.
Each policy along the rows of this matrix is then treated as a test case, making the score at row $i$ and column $j$ the result of evaluating policy $j$ on test case $i$.

To emulate unknown deployment candidate policies to be tested, we hold out $h > 0$ columns of this matrix and call the policy associated with a holdout column a \emph{holdout policy}.
The remaining columns represent the test case results for the set of tuning policies.
The resulting $n \times (n - h)$ matrix is shifted and rescaled so that all entries are between zero and one, and then it is set as the test result matrix $A$ that our methods take as input. Note, although $h$ test cases are generated by holdout policies, as test cases they cannot provide any special information about what tests would be effective on the holdout policies.
To simulate scenarios where the set of tuning policies covers the set of future candidate deployment policies to varying degrees, we run experiments with three different values of $h$: $0.2n$, $0.4n$, and $0.6n$.
$\numHoldoutReplicas$ different holdout sets are randomly sampled for each value of $h$ and in each domain.

Given results for $n$ test cases, the goal is to produce a distribution over $\mTestCasesToSelect < n$ test cases that provides accurate test results on the set of holdout policies, according to a set of target distributions.
For our experiments, we use $\mTestCasesToSelect \in \set{1, 2, 3}$ and the set of target distributions generated from the softmax function applied to the negative average test case result under four different scales, specifically,
$\exp\subex*{ \frac{-\beta}{n} A \ones }
    / \ones^{\top} \exp\subex*{ \frac{-\beta}{n} A \ones }$
for $\beta \in \set{0, 1, 2, 4}$,
so that the distributions put varying degrees of emphasis on test cases that are more difficult on average across the tuning policies.
We set the RPOSST uncertainty distribution, $\jointTestCaseDistributionPolicyUncertainty$, to be uniform over each tuning-policy--target-distribution pair.
We set the CVaR percentile to $\cvarPercentile$ so that it is nearly optimizing for the worst-case, but is slightly less pessimistic, to add an additional distinguishing factor to RPOSST compared to the minimax and minaverage baselines.
We use the absolute difference loss for both optimization and evaluation.

\subsection{Domains}\label{sec:domains}

We test RPOSST on the following three domains of varying complexity.
Each domain has two variants arising from asymmetry, multiple datasets, or alternative scoring rules.
Appendix \cref{sec:experimental-details} provides further details on each domain.

\textbf{Racing Arrows.} Racing Arrows is a two-player, zero-sum, one-shot, continuous action game invented for our experiments to replicate aspects of a passing scenario in a race featuring a ``leader'' player and faster ``follower'' player. The follower tries to pass the leader while the latter tries to block.
Scores are recorded as $0$ or $+1$ for a loss or win, respectively, for the column player, which is either the leader or the follower, depending on the configuration.
We run RPOSST on both configurations.
For our experiments, we sample 50 or 500 different leader and follower policies evenly spread through the valid policy space, angles in $\subblock{0, \pi}$, by taking 50 or 500 evenly spaced angles between $\subblock{0.05\pi, (1 - 0.05)\pi}$ and then shifting them independently with uniform samples in $\subblock{-0.05 \pi, 0.05 \pi}$.

\textbf{Annual Computer Poker Competition.} We take two open datasets from the Annual Computer Poker Competition (ACPC) \citep{bard2013annual} containing pairwise match data for poker agents submitted to the 2017 two-player, no-limit competition and the 2012 two-player, limit competition.
These competitions contain different agent populations since they are separated by five years and are in different game formats (limit and no-limit).
The 2017 competition consists of 15 agents and the 2012 competition consists of 12 agents.
Scores are recorded as chip differentials of duplicate matches (two sets of hands where players play with the same set of shuffled decks in both seats).

\textbf{Gran Turismo\textsuperscript{\texttrademark} one-on-one races.} \GranTurismo7/ (GT)\footnote{\url{https://www.gran-turismo.com/us/}} is a high fidelity racing simulator on the PlayStation\textsuperscript{\texttrademark} platform.
Previous versions of GT served as benchmarks for training RL policies \citep{fuchs2021super,song2021autonomous} including policies that outraced the best human competitors~\citep{wurman2022outracing} in four-on-four racing.
We consider a simpler one-on-one racing scenario (see Appendix \cref{sec:gt7} for details).
We carry out two experiments, one where test case results are average winrates, and another where policies receive $0$ for a loss, $+1$ for a win, and $-1$ if there was a collision, making the game non-zero-sum.
The test case pool is comprised of $43$ trained RL policies and $3$ built-in ``AI'' policies.

\subsection{Results and Analysis}\label{sec:results_and_analysis}

The results of running CVaR($\cvarPercentile$) RPOSST$_{\seqLabel}$ on each domain, with $\mTestCasesToSelect = 2$ and $20\%$ of policies marked as holdout policies, are shown in \cref{fig:test-score-error}.
The same set of figures with $\mTestCasesToSelect = 1$ and $\mTestCasesToSelect = 3$, as well as 40\% and 60\% holdout policies, are qualitatively similar, except that the differences between the algorithms are typically smaller, and are provided in Appendix \cref{sec:supplemental-experimental-results}.

\begin{figure*}
  \begin{center}
    \includegraphics[width=\linewidth, keepaspectratio]{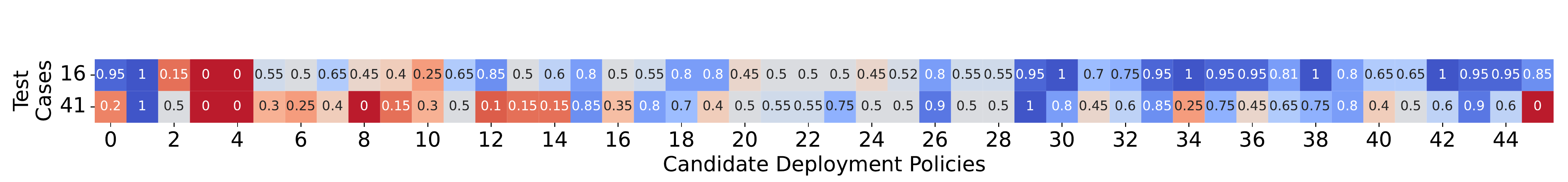}
  \end{center}
  \caption{The GT test results of candidate deployment policies against the test case pair most favoured by RPOSST. Blue and red indicates positive and negative winrates respectively for the candidate deployment policy.}\label{fig:pair_chosen_by_rposst}
\end{figure*}

Looking across each domain and variant, we can see that RPOSST$_{\seqLabel}$ performs nearly as well or better than all of the minimax and miniaverage baselines, particularly in terms of maximum error across holdout-policy--target-distribution pairs.
Interestingly, RPOSST$_{\seqLabel}$ has noticeably lower error in ACPC 2017 and GT (winrate) on the four most difficult holdout-policy--target-distribution pairs to accurately evaluate.
The improvement over the next best method is substantial in ACPC 2017 because RPOSST is the only method with an unlimited ability to optimize with a non-uniform test case weighting.\footnote{Iterative minimax can change its test case distribution away from uniform, but only indirectly by selecting a test case it already selected on a previous iteration before it fills its test-case quota.}
On the other variant in each domain, RPOSST$_{\seqLabel}$ is within the group of the lowest error methods.
In the two Racing Arrows domains, RPOSST$_{\seqLabel}$ and minimax uniform substantially outperform the other methods, at least on the most difficult holdout-policy--target distribution pairs.
This result shows that robustness is indeed beneficial here, but the uniform distribution over the selected two opponents happens to be quite effective.
The GT variant where $-1$ is assigned to a collision appears to be more difficult than the winrate variant, as all the methods cluster together in this variant at higher errors than in the winrate variant.

These results illustrate the utility of incorporating robustness generally, as all of the robust methods tend to outperform miniaverage uniform.
Minimax uniform and iterative minimax are the only baselines that minimize their maximum error over both tuning policy and target distribution uncertainty, and they are usually the next best methods after RPOSST$_{\seqLabel}$.
Minimax(TNP) uniform typically outperforms minimax(TTD) uniform, showing that it is more important to be robust to the tuning policy than the target distribution, in these domains.
When the target distributions are the same in the optimization and holdout evaluation phases, robustness should directly improve the minimum performance across holdout realizations.
Since no effort was made to enforce any relationship between the tuning and holdout policies, this result suggests that robustness to the tuning policy can yield large error reductions when $\PolicySet_{\tuningLabel}$ are even somewhat similar to the holdout policies.

As an example of RPOSST's capabilities, consider the pairs of opponent policies chosen as test cases in GT (winrate) over 100 experiment seeds (Appendix \cref{tab:top_picks_holdout_20}).
RPOSST$_{\seqLabel}$ is both more accurate (\cref{fig:test-score-error}) and very consistent, choosing the same pair 90\% of the time.
\Cref{fig:pair_chosen_by_rposst} illustrates the portion of the result matrix for just the two test cases most frequently chosen by RPOSST$_{\seqLabel}$ (test races against opponents 16 and 41).  The race against policy 41 (bottom row) is chosen because that policy wins/loses about half the time, providing a 50/50 information split.
Policy 16 is a weaker policy in many ways (more blue in the top row) but it serves to differentiate the worst policies (darker red squares in the left side of the matrix) from the rest of the policies, and to highlight the strongest policies.
Specifically, the best performing policies almost always win against policy 16, which provides a strong complementary signal to the noisier but more competitive policy 41 test case.
Overall, the two test cases indicate policies 1, 29, and 43 (darkest blue columns) are the strongest for deployment.
Policy 1 is a built-in AI in an overpowered car but 29 and 43 are very strong RL policies.  Looking at the overall winrate matrix (Appendix \cref{figure:gt_winrate_matrix}) we see that the same conclusion (the three darkest blue columns overall) would have been chosen using all 46 test cases.
Compressing from 46 test cases to two presents a massive saving in test time for future policies, and shows RPOSST$_{\seqLabel}$ can construct small tests to select deployment policies in a real and complex video game.

The results in \cref{fig:racing-arrows-f-500} repeat the previous analysis in Racing Arrows but with ten times the number of policies.
Only the results where follower policies are treated as test cases are shown, but the corresponding results where leader policies are test cases appear similar and are shown in Appendix \cref{sec:supplemental-experimental-results}.
$96\%$ of policies are held out, including those used as test cases, so there are only 20 test cases and tuning policies for RPOSST and the other algorithms to utilize.
This experiment emulates a scenario where an efficient test is constructed once with a relatively small number of tuning policies and then reused for many future deployment candidates.
As in the previous experiments, RPOSST is almost always one of the best methods.

\section{Related Work}

The bulk of the work on policy selection in RL focuses on selecting opponents for \emph{training} with self-play algorithms~\citep{hernandez2021comparison}. In that case, diversity is key for training additional policies to cooperate~\citep{rahman2022towards} or compete~\citep{liu2021towards,mcaleer2022anytime} with pre-existing policies.  However, the selection of policies as training opponents is often guided by aggregate performance metrics across entire populations~\citep{li2019generalized,lanctot2017unified,omidshafiei2019alpha,balduzzi2018re} and thus do not reduce the number of opponent pairings (test cases) required for assessments.

On the testing side, researchers in complex domains develop procedures for testing skill competency using hand-calibrated~\citep{wurman2022outracing} or randomly generated tests with complex percentile-scoring functions~\citep{team2021open}.
Our work seeks to automate and target test construction in such scenarios.
Complementary work~\citep{rowland2019multiagent} treats the computation of a result matrix as a multi-armed bandit problem, each entry represented by one arm.
While this method can greatly reduce sampling costs in the presence of low-variance outcomes,
it does not generalize to policies outside its input population, with the testing of a new policy requiring adding extra arms to be estimated from scratch. However, this method could be used in tandem with RPOSST to reduce the samples required to compute $A$.

Learning to rank methods~\citep{oosterhuis2021robust,bruch2021alternative,hu2018reinforcement} aim to find a function that ranks a set of items (\eg/, documents) based on the relevance of a given query, with hopes to generalize to future queries. Indeed, \citet{akiyama2016learning} use learning to rank to evaluate action sequences. However, predicting unseen policy performances under this model requires the tuning policies to be the queries, which would produce a ranking of the test cases themselves. The scores from such tests would therefore be incomparable across policies, violating one of our main objectives.

Test construction in educational modeling~\citep{vanderlinden2005linear} starts from an
item bank and a statistical model (\eg/, Item Response Theory~\citep{embretson2000item}) predicting the probability of answering each item correctly given a student's (unobserved) skill level.  That model yields an \emph{information matrix} and then automatic test construction methods, including linear optimization or greedy heuristics, can then build a finite-sized test.
By contrast, we do not assume a model of the response variance or a univariate skill measurement, so a closed-form calculation of information is often infeasible.  However,  we do empirically compare our optimization approach to the greedy heuristic.

\section{Conclusion and Future Work}
RPOSST is, to the best of our knowledge, the first algorithm to directly address test construction for reinforcement learning policies.  By leveraging the $k$-of-$N$ framework, RPOSST provides bounds on the approximation error of the resulting test despite uncertainty over the exact policies that will be evaluated and the desired test case weighting in the future.  Thus, RPOSST provides a much needed tool for policy selection in real-world deployment scenarios.  An interesting direction for future work is generating the test cases themselves~\citep{marris2021multi,pugh2016quality}, which is challenging on its own~\citep{balduzzi2019open}.

\begin{acknowledgements}
    Thanks to Francesco Riccio for reviewing this work. Thanks to the whole Sony AI team for experiment infrastructure.
\end{acknowledgements}
 \bibliography{morrill_257}

\newpage
\appendix
\onecolumn

\section{Appendix}

\section{Glossary}

\begin{description}
    \item[Policy.] A policy to solve a control problem or play a game, potentially generated by an RL algorithm.
    \item[Deployment policy.] A policy used in production, \eg/, deployed to end users, used in a competition, or integrated into a technology demonstration.
    \item[Deployment candidate.] A policy in consideration for deployment.
    \item[Test.] The aggregate result of test cases applied to a policy.
    \item[Test case.] An atomic unit of a test that reveals a particular skill or emulates a specific deployment scenario. RPOSST selects a small number of test cases and a distribution over them so that we can avoid executing all conceivable test cases on every deployment candidate every time we want to deploy a policy.
    \item[Test case result.] The numerical result of evaluating a policy on a test case.
        This number should be a good estimate of the policy's expected performance in the test case scenario, but it maybe noisy if the test case is stochastic, \eg/, the average test case result observed from Monte Carlo rollouts.
    \item[Test score.] The final score produced by a test, \ie/, the average test case result across test cases, perhaps weighted by the relative importance of each test case.
    \item[Tuning policy.] A policy used at the start of the RPOSST procedure to gather information about test cases. Each tuning policy is evaluated on each test case to construct the test case result matrix that forms the basis of RPOSST's loss function.
\end{description}

\section{Theory Background}

We make use of six basic results, which are restated here for completeness.
\begin{proposition}[Azuma-Hoeffding inequality.]
  \label{prop:azumaHoeffdingInequality}
  For constants $\tuple{c^t \in \reals}_{t = 1}^T$, martingale difference sequence $\tuple{Y^t \in \reals}_{t = 1}^T$ where $\abs{Y^t} \le c^t$ for each $t$, and $\tau \ge 0$,
  \[
    \Prob\subblock*{\abs*{\sum_{t = 1}^T Y^t} \ge \tau}
      \le 2 \exp\subex*{\frac{-\tau^2}{2 \sum_{t = 1}^T (c^t)^2}}.
  \]
\end{proposition}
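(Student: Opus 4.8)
The plan is to run the classical Chernoff-bound argument for martingales. First I would fix a filtration $\mathcal{F}_0, \mathcal{F}_1, \ldots$ with respect to which $\tuple{Y^t}_{t=1}^T$ is a martingale difference sequence, so that $\E\subblock*{Y^t \given \mathcal{F}_{t-1}} = 0$ and $\abs{Y^t} \le c^t$ for each $t$. Writing $S = \sum_{t=1}^T Y^t$, I would reduce the two-sided bound to a one-sided one: since $\tuple{-Y^t}_{t=1}^T$ is also a martingale difference sequence obeying the same bounds, it suffices to show $\Prob\subblock*{S \ge \tau} \le \exp\subex*{\nicefrac{-\tau^2}{2\sum_{t=1}^T (c^t)^2}}$ and then union bound over $\set*{S \ge \tau}$ and $\set*{-S \ge \tau}$, which contributes the leading factor of $2$.

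For the one-sided tail I would apply the exponential Markov inequality: for every $\lambda > 0$, $\Prob\subblock*{S \ge \tau} \le \e^{-\lambda \tau}\, \E\subblock*{\exp\subex*{\lambda S}}$. The crux is to control the moment generating function $\E\subblock*{\exp\subex*{\lambda S}}$ by peeling off one increment at a time. Conditioning on $\mathcal{F}_{T-1}$ and pulling out the $\mathcal{F}_{T-1}$-measurable factor $\exp\subex*{\lambda \sum_{t=1}^{T-1} Y^t}$, the surviving conditional expectation $\E\subblock*{\e^{\lambda Y^T} \given \mathcal{F}_{T-1}}$ is exactly where Hoeffding's lemma enters: because $Y^T$ is conditionally mean zero and lies in an interval of width at most $2c^T$, the lemma gives $\E\subblock*{\e^{\lambda Y^T} \given \mathcal{F}_{T-1}} \le \exp\subex*{\nicefrac{\lambda^2 (c^T)^2}{2}}$. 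Iterating this bound down through $t = T-1, \ldots, 1$ by the tower rule yields $\E\subblock*{\exp\subex*{\lambda S}} \le \exp\subex*{\frac{\lambda^2}{2} \sum_{t=1}^T (c^t)^2}$.

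Combining the two pieces gives $\Prob\subblock*{S \ge \tau} \le \exp\subex*{-\lambda\tau + \frac{\lambda^2}{2}\sum_{t=1}^T (c^t)^2}$ for every $\lambda > 0$, and the final step is to optimize this free parameter: the exponent is minimized at $\lambda = \tau / \sum_{t=1}^T (c^t)^2$, producing $\nicefrac{-\tau^2}{2\sum_{t=1}^T (c^t)^2}$, after which doubling for the two-sided event finishes the proof. I expect the only real obstacle to be the conditional application of Hoeffding's lemma — one must check that the lemma's mean-zero and bounded-range hypotheses hold conditionally on $\mathcal{F}_{t-1}$ (the range being the width-$2c^t$ interval implied by $\abs{Y^t} \le c^t$) rather than only marginally, so that the tower-rule telescoping is valid; the remaining steps, the telescoping itself and the one-dimensional minimization over $\lambda$, are routine.
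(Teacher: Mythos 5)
Your proposal is correct: it is the standard Chernoff-bound proof of Azuma--Hoeffding (exponential Markov inequality, Hoeffding's lemma applied conditionally to each increment, tower-rule telescoping of the moment generating function, then optimizing $\lambda = \tau / \sum_{t=1}^T (c^t)^2$ and doubling for the two-sided event), and the point you flag --- that Hoeffding's lemma must be invoked conditionally on $\mathcal{F}_{t-1}$, using the conditional mean-zero property and the width-$2c^t$ range --- is precisely the step that makes the telescoping valid. The paper itself provides no proof of this proposition, deferring instead to an external reference (Theorem 3.14 of the work it cites), whose argument is this same classical one, so your route coincides with the intended proof.
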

For proof, see that of Theorem 3.14 by \citet{azumaHoeffdingInequality}.
\begin{proposition}[Regret matching$^+$ regret bound]
    \label{prop:rmp-regret-bound}
    Consider an online decision process with $\mTestCasesToSelect$ actions and the set of bounded, linear loss functions,
    $\LossSet = [0, \maxLoss]^{\mTestCasesToSelect}$.
    Regret matching$^+$ accumulates pseudoregrets
    $q^{1:t} = \ramp{q^{1:t - 1} + \regret^t}$, $q^{1:0} = \zeros$,
    where
    $\regret^t = (\ell^t)^{\top} \testCaseDistribution^t - \ell^t$
    is the instantaneous regret on round $t$ under loss function $\ell^t \in \LossSet$, and
    $\testCaseDistribution^t = q^{1:t - 1} / (\ones^{\top} q^{1:t - 1})$
    if
    $\ones^{\top} q^{1:t - 1} > 0$
    or
    $\testCaseDistribution^t = \frac{1}{\mTestCasesToSelect} \ones$ otherwise,
    is regret matching$^+$'s action distribution on round $t$.
    After $T$ rounds, regret matching$^+$'s cumulative regret is bounded as
    $\sum_{t = 1}^T \regret^t
        \le \maxLoss \sqrt{T \mTestCasesToSelect}$.
\end{proposition}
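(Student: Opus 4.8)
The plan is to follow the Blackwell-approachability argument that underlies regret matching, tracking the squared Euclidean norm $\norm{q^{1:t}}^2$ of the pseudoregret vector and showing it grows at most linearly in $t$. The two ingredients I need are an \emph{orthogonality} property relating $q^{1:t-1}$ to the instantaneous regret $\regret^t$, and the fact that the positive-part operator $\ramp{\cdot}$ is nonexpansive toward the origin (coordinatewise, $\ramp{x}^2 \le x^2$).

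First I would establish that $\ip{q^{1:t-1}}{\regret^t} = 0$ on every round $t$. Since $\regret^t_i = \ip{\ell^t}{\testCaseDistribution^t} - \ell^t_i$ is affine in the loss and $\testCaseDistribution^t$ lies in the simplex, $\ip{\testCaseDistribution^t}{\regret^t} = \ip{\ell^t}{\testCaseDistribution^t}\subex*{\sum_i \testCaseDistribution^t_i} - \ip{\testCaseDistribution^t}{\ell^t} = 0$; because $\testCaseDistribution^t$ is proportional to $q^{1:t-1}$ whenever $q^{1:t-1} \ne \zeros$ (and the inner product vanishes trivially otherwise), the identity transfers to $q^{1:t-1}$. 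Second, using the nonexpansiveness I would write $\norm{q^{1:t}}^2 = \norm{\ramp{q^{1:t-1} + \regret^t}}^2 \le \norm{q^{1:t-1} + \regret^t}^2 = \norm{q^{1:t-1}}^2 + 2\ip{q^{1:t-1}}{\regret^t} + \norm{\regret^t}^2$ and drop the cross term by orthogonality. Telescoping from $q^{1:0} = \zeros$ yields $\norm{q^{1:T}}^2 \le \sum_{t=1}^T \norm{\regret^t}^2$. Since $\ell^t \in [0, \maxLoss]^{\mTestCasesToSelect}$, each coordinate of $\regret^t$ lies in $[-\maxLoss, \maxLoss]$, so $\norm{\regret^t}^2 \le \mTestCasesToSelect \maxLoss^2$ and hence $\norm{q^{1:T}} \le \maxLoss\sqrt{T \mTestCasesToSelect}$.

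Finally I would connect this norm bound back to the cumulative regret by a short induction: from $q^{1:t} = \ramp{q^{1:t-1} + \regret^t} \ge q^{1:t-1} + \regret^t$ (coordinatewise) and $q^{1:0} = \zeros$, it follows that $q^{1:T}_i \ge \sum_{t=1}^T \regret^t_i$ for every action $i$, so the largest accumulated regret is at most $\max_i q^{1:T}_i \le \norm{q^{1:T}} \le \maxLoss\sqrt{T \mTestCasesToSelect}$, which is the claimed bound. The step I expect to be the crux is the orthogonality identity, since everything downstream is mechanical: it is exactly where the specific regret-matching$^+$ rule (playing proportionally to the thresholded cumulative regret) enters and forces the cross term to vanish, and it is what makes the positive-part truncation harmless rather than an obstacle.
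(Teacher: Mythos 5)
Your proof is correct, and it is essentially the standard Blackwell-approachability argument for regret matching$^+$: the paper itself does not prove this proposition but defers to \citet{solvingHulhe}, where the bound is established by exactly this potential argument (orthogonality of $q^{1:t-1}$ and $\regret^t$, nonexpansiveness of $\ramp{\cdot}$, telescoping $\norm{q^{1:t}}^2$, then bounding the coordinatewise cumulative regret by $\norm{q^{1:T}}$). No gaps; all three steps (the orthogonality identity, the squared-norm recursion, and the coordinatewise induction $q^{1:T} \ge \sum_{t=1}^T \regret^t$) are sound.
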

For proof, see \citet{solvingHulhe}.
\begin{proposition}[The linearization trick]
    \label{prop:linearization-trick}
    Consider an online decision process with convex decision set $\DecisionSet \subseteq \reals^{\mTestCasesToSelect}$ and a set of bounded, convex loss functions
    $\LossSet \subseteq \set*{\ell \where \ell: \DecisionSet \to [0, \maxLoss]}$,
    where each loss function $\ell \in \LossSet$ has subgradients with bounded maximum magnitude, \ie/,
    $\norm{\grad \ell(\odpDecision)}_{\infty} \le \maxGrad$,
    for all
    $\odpDecision \in \DecisionSet$.
    The instantaneous regret under loss function $\ell \in \LossSet$ is upper bounded by the instantaneous regret under the loss function subgradient $\grad \ell(\odpDecision)$ given decision $\odpDecision \in \DecisionSet$, \ie/,
    \[
        \ell(\odpDecision) - \ell(\odpDecision')
            \le
                \subex*{\grad \ell(\odpDecision)}^{\top} \odpDecision - \subex*{\grad \ell(\odpDecision)}^{\top} \odpDecision'.
    \]
\end{proposition}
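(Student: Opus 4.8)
The plan is to obtain the inequality as an immediate consequence of the defining property of a subgradient of a convex function, so that the argument is essentially a single rearrangement. The key fact I would invoke is the \emph{subgradient inequality}: because $\ell$ is convex on the convex set $\DecisionSet$ and $\grad \ell(\odpDecision)$ denotes a subgradient of $\ell$ at $\odpDecision$, for every $\odpDecision' \in \DecisionSet$ we have
\[
    \ell(\odpDecision') \ge \ell(\odpDecision) + \subex*{\grad \ell(\odpDecision)}^{\top} \subex*{\odpDecision' - \odpDecision}.
\]
This inequality is exactly the statement that a convex function lies above each of its linear (subgradient) supports, so it is available for free once we note that $\grad \ell(\odpDecision)$ is a subgradient by assumption.

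The second and final step is purely algebraic. First I would distribute the inner product and move $\ell(\odpDecision)$ to the left, giving
\[
    \ell(\odpDecision') - \ell(\odpDecision) \ge \subex*{\grad \ell(\odpDecision)}^{\top} \odpDecision' - \subex*{\grad \ell(\odpDecision)}^{\top} \odpDecision.
\]
Negating both sides, which reverses the inequality, then yields exactly
\[
    \ell(\odpDecision) - \ell(\odpDecision') \le \subex*{\grad \ell(\odpDecision)}^{\top} \odpDecision - \subex*{\grad \ell(\odpDecision)}^{\top} \odpDecision',
\]
the claimed bound, with the right-hand side read off as the instantaneous regret of the linearized loss $\odpDecision \mapsto \subex*{\grad \ell(\odpDecision)}^{\top} \odpDecision$ evaluated at the pair $\subex*{\odpDecision, \odpDecision'}$.

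I do not expect a genuine obstacle here, as the result is the textbook first-order characterization of convexity. The only point worth flagging is that neither the range bound $\ell(\DecisionSet) \subseteq [0, \maxLoss]$ nor the gradient-magnitude bound $\norm{\grad \ell(\odpDecision)}_{\infty} \le \maxGrad$ is actually used in establishing this inequality; those hypotheses are carried along only so that, in the intended application, the linear surrogate losses remain bounded and can be handed to the regret-matching$^+$ bound of \cref{prop:rmp-regret-bound}. Hence the proof reduces to quoting the subgradient inequality and rearranging.
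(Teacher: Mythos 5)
Your proof is correct and follows essentially the same route as the paper's: both invoke the first-order convexity (subgradient) lower bound $\ell(\odpDecision') \ge \ell(\odpDecision) + \subex*{\grad \ell(\odpDecision)}^{\top}\subex*{\odpDecision' - \odpDecision}$ and then rearrange, differing only in whether the bound is substituted directly or negated after rearrangement. Your side remark that the range and gradient-magnitude bounds are not needed for this inequality is also accurate; the paper likewise uses them only in the downstream application.
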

\begin{proof}
    From the convexity of $\ell$, its first-order Taylor expansion lower bounds $\ell$, \ie/,
    $\ell(\odpDecision')
        \ge
            \ell(\odpDecision)
            + \subex*{\grad \ell(\odpDecision)}^{\top}
            \subex*{ \odpDecision' - \odpDecision}$,
    for all $\odpDecision, \odpDecision' \in \DecisionSet$.
    Therefore,
    \begin{align*}
        \ell(\odpDecision) - \ell(\odpDecision')
            &\le
                \ell(\odpDecision)
                - \subex*{
                    \ell(\odpDecision)
                    + \subex*{\grad \ell(\odpDecision)}^{\top} \subex*{ \odpDecision' - \odpDecision }
                }\\
            &=
                \subex*{\grad \ell(\odpDecision)}^{\top} \odpDecision
                - \subex*{\grad \ell(\odpDecision)}^{\top} \odpDecision',
    \end{align*}
    as required.
\end{proof}
\begin{proposition}[Lemma 2 of \citet{ED,exploitabilityDescentArxiv}]
    \label{prop:best-optimality}
    Assume that on each round $t$ of an online decision process with decision set $\DecisionSet \subseteq \reals^{\mTestCasesToSelect}$ and bounded loss functions from
    $\LossSet \subseteq \set*{\ell \where \ell: \DecisionSet \to [0, \maxLoss]}$,
    the loss function $\ell^t$ maximizes the loss of $\odpDecision^t \in \DecisionSet$ chosen by the decision-maker, \ie/,
    $\ell^t
        \in \argmax_{\ell \in \LossSet}
            \ell(\odpDecision^t)$.
    On the round $t^*$ where the minimum loss was observed,
    $t^* \in \argmin_{t \in \set{1, \ldots, T}}
        \ell^t(\odpDecision^t)
    $,
    the decision $\odpDecision^{t^*}$ has a maximum loss that is no more than $\frac{1}{T} \regret^{1:T}(\odpDecision)$ larger than that of any alternative decision $\odpDecision \in \DecisionSet$, \ie/,
    $\ell^{t^*}(\odpDecision^{t^*}) - \ell^{\odpDecision}(\odpDecision) \le \frac{1}{T} \regret^{1:T}(\odpDecision)$,
    where $\ell^{\odpDecision}(\odpDecision) \in \LossSet$ is a loss function that maximizes the loss on $\odpDecision$.
\end{proposition}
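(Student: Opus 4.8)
The plan is to establish the bound through a short chain of elementary inequalities; the statement is deterministic, so no concentration or probabilistic tools are needed. First I would fix an arbitrary comparator $\odpDecision \in \DecisionSet$ and recall that the cumulative regret against it is $\regret^{1:T}(\odpDecision) = \sum_{t = 1}^T \subex*{ \ell^t(\odpDecision^t) - \ell^t(\odpDecision) }$. The adversarial assumption $\ell^t \in \argmax_{\ell \in \LossSet} \ell(\odpDecision^t)$ means that $\ell^t(\odpDecision^t)$ is precisely the \emph{maximum} loss of the played decision $\odpDecision^t$; in particular $\ell^{t^*}(\odpDecision^{t^*})$ is the maximum loss of the returned decision $\odpDecision^{t^*}$, which is exactly the quantity the theorem must control.

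The first step is to pass from the minimum to the average: because $t^*$ attains $\min_{t} \ell^t(\odpDecision^t)$, the minimum observed loss is at most the time average, $\ell^{t^*}(\odpDecision^{t^*}) \le \frac{1}{T} \sum_{t = 1}^T \ell^t(\odpDecision^t)$. The second step introduces the comparator by adding and subtracting $\ell^t(\odpDecision)$ term by term, which makes the cumulative regret surface:
\[
    \frac{1}{T} \sum_{t = 1}^T \ell^t(\odpDecision^t)
        = \frac{1}{T} \regret^{1:T}(\odpDecision)
            + \frac{1}{T} \sum_{t = 1}^T \ell^t(\odpDecision).
\]
The third step bounds the residual average loss evaluated at the comparator. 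Since $\ell^{\odpDecision}$ is, by definition, a loss function in $\LossSet$ that maximizes loss at $\odpDecision$, and each played loss $\ell^t$ also lies in $\LossSet$, we have $\ell^t(\odpDecision) \le \ell^{\odpDecision}(\odpDecision)$ for every $t$, hence $\frac{1}{T} \sum_{t = 1}^T \ell^t(\odpDecision) \le \ell^{\odpDecision}(\odpDecision)$. Chaining the three steps yields $\ell^{t^*}(\odpDecision^{t^*}) \le \frac{1}{T} \regret^{1:T}(\odpDecision) + \ell^{\odpDecision}(\odpDecision)$, and moving $\ell^{\odpDecision}(\odpDecision)$ to the left gives exactly the claimed inequality.

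I do not expect a genuine obstacle here, since every step is a one-line inequality; the only points requiring care are conceptual rather than computational. The crucial observation is that the adversary plays a best response at each played decision $\odpDecision^t$ but \emph{not} at the fixed comparator $\odpDecision$, so the comparator's per-round losses $\ell^t(\odpDecision)$ can only be dominated by the maximum-loss value $\ell^{\odpDecision}(\odpDecision)$ — this is precisely the inequality that converts a bound phrased in terms of regret into a guarantee on worst-case (maximum) loss rather than merely average loss. It is also worth flagging that the result holds for \emph{arbitrary} $\odpDecision \in \DecisionSet$, so that once a no-regret procedure (\eg/, regret matching$^+$ via \cref{prop:rmp-regret-bound}) supplies a sublinear bound on $\regret^{1:T}(\odpDecision)$, instantiating $\odpDecision$ at a minimax decision turns this proposition into the $\bigO(\nicefrac{1}{\sqrt{T}})$ optimality rate invoked elsewhere in the paper.
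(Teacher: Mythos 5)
Your proof is correct and takes essentially the same route as the paper's: both arguments combine $\ell^{t^*}(\odpDecision^{t^*}) = \min_{t \in \set{1, \ldots, T}} \ell^t(\odpDecision^t) \le \frac{1}{T}\sum_{t=1}^T \ell^t(\odpDecision^t)$ with the regret decomposition and the domination $\ell^t(\odpDecision) \le \ell^{\odpDecision}(\odpDecision)$ for every $t$ (since $\ell^t \in \LossSet$ and $\ell^{\odpDecision}$ maximizes loss at $\odpDecision$). The only difference is presentational — you upper-bound the minimum observed loss while the paper lower-bounds the average regret — and your version has the minor virtue of making explicit the comparator-domination step that the paper's second inequality absorbs silently.
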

\begin{proof}
    Since the loss function on each round is chosen to maximize loss, the average regret for not choosing $\odpDecision \in \DecisionSet$ is lower bounded as
    \begin{align*}
        \dfrac{1}{T} \regret^{1:T}
            &\ge
                \dfrac{1}{T} \min_{t \in \set{1, \ldots, T}} T \ell^t(\odpDecision^t) - \dfrac{1}{T} \sum_{t = 1}^T \ell^t(\odpDecision)\\
            &\ge
                \ell^{t^*}(\odpDecision^{t^*}) - \ell^{\odpDecision}(\odpDecision),
    \end{align*}
    as required.
\end{proof}
\begin{proposition}[Theorem 4 of \citet{cfrbr}]
    \label{prop:sample-optimality}
    Assume that on each round $t$ of an online decision process with decision set $\DecisionSet \subseteq \reals^{\mTestCasesToSelect}$ and bounded (possibly random) loss functions from
    $\LossSet \subseteq \set*{\ell \where \ell: \DecisionSet \to [0, \maxLoss]}$,
    the loss function $\ell^t$ maximizes the loss of $\odpDecision^t \in \DecisionSet$ chosen by the decision-maker, \ie/,
    $\ell^t
        \in \argmax_{\ell \in \LossSet}
            \ell(\odpDecision^t)$.
    The loss function that the decision-maker observes on each round $t$ may be a random loss function $\hat{\ell}^t$ where $\E\subblock*{\hat{\ell}^t} = \ell^t$.
    On round
    $T' \sim \Unif(\set{1, \ldots, T})$
    after $T$ rounds of the online decision process, the decision $\odpDecision^{T'}$ has a maximum loss that is no more than
    $\frac{1}{qT} \regret^{1:T}(\odpDecision)$
    larger than that of any alternative decision $\odpDecision \in \DecisionSet$ with probability $1 - q$, $q \in (0, 1]$, \ie/,
    $\ell^{T'}(\odpDecision^{T'}) - \ell^{\odpDecision}(\odpDecision) \le \frac{1}{qT} \regret^{1:T}(\odpDecision)$
    holds with probability $1 - q$,
    where $\ell^{\odpDecision}(\odpDecision) \in \LossSet$ is a loss function that maximizes the loss on $\odpDecision$
    and the cumulative regret $\regret^{1:T}$ is with respect to the expected loss functions, $\tuple{\ell^t}_{t = 1}^T$.
\end{proposition}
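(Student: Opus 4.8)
The plan is to reuse the purely deterministic regret decomposition that underlies \Cref{prop:best-optimality}, and then to replace the ``best observed round'' $t^*$ by a uniformly random round $T' \sim \Unif(\set{1, \ldots, T})$ using Markov's inequality. The price one pays for randomizing the round selection, rather than choosing the minimizing round, is precisely the extra factor $\nicefrac{1}{q}$ in front of the regret together with the failure probability $q$; the rest of the argument is the same algebra as in the deterministic lemma.

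First I would record the algebraic step. Exactly as in \Cref{prop:best-optimality}, since each $\ell^t$ maximizes the loss of $\odpDecision^t$ and $\ell^{\odpDecision}$ maximizes the loss of $\odpDecision$, we have $\ell^t(\odpDecision) \le \ell^{\odpDecision}(\odpDecision)$ for every $t$, so
\begin{align*}
    \frac{1}{T}\sum_{t=1}^T \ell^t(\odpDecision^t)
        = \frac{1}{T}\regret^{1:T}(\odpDecision) + \frac{1}{T}\sum_{t=1}^T \ell^t(\odpDecision)
        \le \ell^{\odpDecision}(\odpDecision) + \frac{1}{T}\regret^{1:T}(\odpDecision).
\end{align*}
Because $T'$ is uniform on $\set{1, \ldots, T}$, the left-hand side is exactly $\E_{T'}\subblock{\ell^{T'}(\odpDecision^{T'})}$, so the random variable $X \as \ell^{T'}(\odpDecision^{T'}) - \ell^{\odpDecision}(\odpDecision)$ satisfies $\E_{T'}\subblock{X} \le \frac{1}{T}\regret^{1:T}(\odpDecision)$.

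Second I would apply Markov's inequality to $X$. The crucial observation is that $X$ is non-negative for the comparator of interest: the guarantee is meaningful against the minimax-optimal $\odpDecision \in \argmin_{\odpDecision' \in \DecisionSet}\max_{\ell \in \LossSet}\ell(\odpDecision')$, whose worst-case loss $\ell^{\odpDecision}(\odpDecision)$ equals the game value and therefore lower-bounds the worst-case loss $\ell^{T'}(\odpDecision^{T'}) = \max_{\ell \in \LossSet}\ell(\odpDecision^{T'})$ of every iterate, so $X \ge 0$ surely; the same reasoning shows $\regret^{1:T}(\odpDecision) \ge 0$ termwise. Markov's inequality then yields
\begin{align*}
    \Prob_{T'}\subblock*{ X \ge \tfrac{1}{qT}\regret^{1:T}(\odpDecision) }
        \le \frac{\E_{T'}\subblock{X}}{\tfrac{1}{qT}\regret^{1:T}(\odpDecision)}
        \le \frac{\tfrac{1}{T}\regret^{1:T}(\odpDecision)}{\tfrac{1}{qT}\regret^{1:T}(\odpDecision)}
        = q,
\end{align*}
so with probability $1 - q$ we get $\ell^{T'}(\odpDecision^{T'}) - \ell^{\odpDecision}(\odpDecision) < \frac{1}{qT}\regret^{1:T}(\odpDecision)$, as claimed. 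In the degenerate case $\regret^{1:T}(\odpDecision) = 0$, the bound $\E_{T'}\subblock{X} \le 0$ together with $X \ge 0$ forces $X \equiv 0$, so the inequality holds with equality and division is never actually needed.

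The step I expect to be the real obstacle is the accounting of the two distinct sources of randomness, since the decisions $\odpDecision^t$ (and hence $\regret^{1:T}(\odpDecision)$ itself) are random through the \emph{observed} loss functions $\hat{\ell}^t$, whereas every quantity in the bound is defined through the \emph{expected} loss functions $\ell^t = \E\subblock{\hat{\ell}^t}$. I would handle this by noting that the algebraic step above is purely deterministic and holds for \emph{every} realization of the $\hat{\ell}^t$'s: conditioned on any realized decision sequence, the only remaining randomness is the draw of $T'$, and the Markov computation is carried out conditionally on that sequence. Since the conclusion then holds with probability at least $1 - q$ over $T'$ for each such realization, it holds with probability at least $1 - q$ over the joint randomness as well, and no concentration of the random losses themselves is required — their role is only to clarify that $\regret^{1:T}$ is measured against the expected losses even though the decision-maker acts on samples.
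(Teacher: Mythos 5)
The paper gives no proof of this proposition---it simply defers to \citet{cfrbr}---and your argument is exactly the standard proof of that theorem: bound the expected best-response loss of the uniformly sampled iterate by the average via the regret decomposition, apply Markov's inequality over the draw of $T'$, and handle the two sources of randomness by conditioning on the realized sequence of sampled losses $\hat{\ell}^t$ so that only the (independent) randomness of $T'$ remains. Your flagging of the nonnegativity requirement for Markov's inequality is also the correct reading of the statement: for an arbitrary comparator $\odpDecision$ the random variable $\ell^{T'}(\odpDecision^{T'}) - \ell^{\odpDecision}(\odpDecision)$ can take negative values with positive probability and the literal ``any alternative decision'' claim can then fail, whereas for a minimax-optimal comparator one has $\ell^{T'}(\odpDecision^{T'}) \ge \min_{\odpDecision'}\max_{\ell \in \LossSet}\ell(\odpDecision') = \ell^{\odpDecision}(\odpDecision)$ surely and termwise-nonnegative regret, which is precisely the form in which \citet{cfrbr} state the result (exploitability) and in which the paper invokes it in the proof of \cref{thm:rposst-seq}.
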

See \citet{cfrbr} for proof.
\begin{proposition}[Successive Rejects error probability]
    \label{prop:sr-error-prob}
    Consider a best action identification task with $\mTestCasesToSelect$ actions from set $\Actions$.
    Each time an action $a \in \Actions$ is selected, a random sample of that action's loss, $\ell(a) \in [-0.5, 0.5]$, under a fixed but random loss function $\ell$, is observed.
    The goal is to identify an action $a^* \in \Actions^* \subset \Actions$ with the lowest expected loss, $\E\subblock*{\ell(a^*)}$, after $T$ samples.
    The probability that the action returned by the Successive Rejects algorithm is in $\Actions^*$ is at least
    \[
        1 - \dfrac{\mTestCasesToSelect\subex{ \mTestCasesToSelect - 1}}{2}
            \exp\subex*{ - \dfrac{T - \mTestCasesToSelect}{\overline{\log}(\mTestCasesToSelect)H_2} },
    \]
    where
    $\overline{\log}(\mTestCasesToSelect) = \frac{1}{2} + \sum_{i = 2}^{\mTestCasesToSelect} \frac{1}{i}$,
    $H_2 = \max_{i \in \set{1, \ldots, \abs{\Actions \setminus \Actions^*}}}
        \frac{i}{\subex*{\E\subblock*{\ell(a_{(i)})} - \E\subblock*{\ell(a^*)}}^2}$,
    and $a_{(i)}$ is the action that achieves the $i^{\text{th}}$ smallest loss (with ties broken arbitrarily) among the suboptimal actions.
\end{proposition}
See \citet{audibert2010BestArmIdentification} for proof.

\section{Sequential-Move Model Theory}

\begin{lemma}
  \label{lem:regret_matching_k_of_n_hp_regret_bound}
  Consider a $k$-of-$N$ game with $\mTestCasesToSelect$ actions and the set of bounded, convex loss functions
  $\LossSet = \set*{\ell \where \ell: \simplex^{\mTestCasesToSelect} \to [0, \maxLoss]}$,
  where each loss function $\ell \in \LossSet$ has subgradients with bounded maximum magnitude, \ie/,
  $\norm{\grad \ell(\testCaseDistribution)}_{\infty} \le \maxGrad$,
  for all
  $\testCaseDistribution \in \simplex^{\mTestCasesToSelect}$.
  Let the $k$-worst loss functions from $N$ of those sampled from the given uncertainty distribution $\jointTestCaseDistributionPolicyUncertainty$ on round $t$ be
  $\tuple{ \ell^t_{(i)} \in \LossSet }_{i = 1}^k$.
  The randomly sampled $k$-of-$N$ loss function on round $t$ is then the average
  $\bar{\ell}^t
      = \frac{1}{k} \sum_{i = 1}^k \ell_{(i)}$.
  After $T$ rounds, regret matching$^+$ on the random loss gradients $\grad \bar{\ell}^t(\testCaseDistribution^t)$ has no more than
  $2 \maxGrad \sqrt{T \mTestCasesToSelect} + 2 \maxLoss \sqrt{2 T \log{\nicefrac{1}{p}}}$
  cumulative regret on the expected $k$-of-$N$ losses, $\tuple*{ \E\subblock{ \bar{\ell} }^t }_{t = 1}^T$, with probability $1 - p$, $p > 0$.
\end{lemma}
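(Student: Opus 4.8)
The plan is to bound the regret against the \emph{expected} $k$-of-$N$ losses by splitting it into (i) the regret actually incurred against the \emph{sampled} losses $\bar{\ell}^t$, which a deterministic regret bound controls, and (ii) a martingale difference sum capturing the gap between sampled and expected losses, which Azuma--Hoeffding controls. Writing $g^t \as \E\subblock{\bar{\ell}^t \given \mathcal{F}_{t-1}}$ for the expected $k$-of-$N$ loss (with $\mathcal{F}_{t-1}$ the history through round $t-1$), for any fixed comparator $\testCaseDistribution \in \simplex^{\mTestCasesToSelect}$ I would use the identity
\begin{equation*}
    \sum_{t=1}^T \subex*{g^t(\testCaseDistribution^t) - g^t(\testCaseDistribution)}
    = \sum_{t=1}^T \subex*{\bar{\ell}^t(\testCaseDistribution^t) - \bar{\ell}^t(\testCaseDistribution)}
    + \sum_{t=1}^T Y^t,
\end{equation*}
where $Y^t \as \subex{g^t - \bar{\ell}^t}(\testCaseDistribution^t) + \subex{\bar{\ell}^t - g^t}(\testCaseDistribution)$, and bound the two sums separately.

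For the first sum (the sampled regret), I would note that each $\bar{\ell}^t$, being the average of the $k$ largest among $N$ convex losses, is itself convex, with a subgradient of the form $\frac1k\sum_{i=1}^k \grad\ell^t_{(i)}$ obeying $\norm{\grad\bar{\ell}^t}_\infty \le \maxGrad$ by the triangle inequality. Since regret matching$^+$ is run on the gradient vectors $\grad\bar{\ell}^t(\testCaseDistribution^t) \in [-\maxGrad, \maxGrad]^{\mTestCasesToSelect}$ and its instantaneous regret is invariant to constant shifts of the loss vector, I would shift these vectors into $[0, 2\maxGrad]^{\mTestCasesToSelect}$ and apply the regret matching$^+$ bound (\cref{prop:rmp-regret-bound}) with $\maxLoss$ replaced by $2\maxGrad$, giving $\sum_t \subex{\grad\bar{\ell}^t(\testCaseDistribution^t)}^\top(\testCaseDistribution^t - \testCaseDistribution) \le 2\maxGrad\sqrt{T\mTestCasesToSelect}$. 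The linearization trick (\cref{prop:linearization-trick}) then lifts this linear-loss regret back to the convex losses, yielding $\sum_t \subex{\bar{\ell}^t(\testCaseDistribution^t) - \bar{\ell}^t(\testCaseDistribution)} \le 2\maxGrad\sqrt{T\mTestCasesToSelect}$.

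For the second sum, I would verify that $\tuple{Y^t}_{t=1}^T$ is a martingale difference sequence: $\testCaseDistribution^t$ is $\mathcal{F}_{t-1}$-measurable (it is computed from the pseudoregrets accumulated on rounds $1,\dots,t-1$), the comparator $\testCaseDistribution$ is fixed, and $\E\subblock{\bar{\ell}^t(\cdot)\given\mathcal{F}_{t-1}} = g^t(\cdot)$, so $\E\subblock{Y^t \given \mathcal{F}_{t-1}} = 0$. Because every loss value lies in $[0, \maxLoss]$, each of the two differences composing $Y^t$ has magnitude at most $\maxLoss$, so $\abs{Y^t} \le 2\maxLoss$. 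Applying the upper-tail (one-sided) form of Azuma--Hoeffding (\cref{prop:azumaHoeffdingInequality}) with $c^t = 2\maxLoss$ gives $\sum_t Y^t \le 2\maxLoss\sqrt{2T\log\nicefrac{1}{p}}$ with probability at least $1 - p$. Summing the two bounds yields the claim.

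The main obstacle I anticipate is the probabilistic bookkeeping rather than any single inequality: one must fix the filtration so that $\testCaseDistribution^t$ is predictable while $\bar{\ell}^t$ is freshly sampled, and---crucially---decompose through loss \emph{values} rather than loss \emph{gradients}. A gradient-based martingale would carry increments of order $\maxGrad$ (via $\norm{\testCaseDistribution^t - \testCaseDistribution}_1 \le 2$), whereas the value-based decomposition produces exactly the $\maxLoss$-scaled concentration term in the statement. A secondary point worth making explicit is that the bound is stated against a fixed comparator chosen independently of the sampling randomness, which is all that the downstream minimax analysis supporting \cref{thm:rposst-seq} requires.
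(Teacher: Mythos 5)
Your proof is correct and follows essentially the same route as the paper's: decompose the regret on the expected $k$-of-$N$ losses into the sampled regret, bounded deterministically via the regret matching$^+$ bound plus the linearization trick as $2\maxGrad\sqrt{T\mTestCasesToSelect}$, plus a martingale difference sum with increments bounded by $2\maxLoss$, controlled by Azuma--Hoeffding to give $2\maxLoss\sqrt{2T\log\subex*{\nicefrac{1}{p}}}$ with probability $1-p$. Your extra details---the shift-invariance argument explaining the factor of $2$ in the gradient term, and the explicit filtration bookkeeping making $\testCaseDistribution^t$ predictable while $\bar{\ell}^t$ is freshly sampled---are points the paper's proof leaves implicit, and they only strengthen the argument.
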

\begin{proof}
  Since regret matching$^+$ observes and learns directly from $\grad \bar{\ell}^t$, its regret for not always choosing $\testCaseDistribution \in \simplex^{\mTestCasesToSelect}$, under the sampled loss functions, is deterministically upper bounded as
  \[
      \Regret^{1:T}
          = \sum_{t = 1}^T
              \underbrace{
                  \bar{\ell}^t(\testCaseDistribution^t)
                  - \bar{\ell}^t(\testCaseDistribution)
              }_{\as \Regret^t}
          \le 2 \maxGrad \sqrt{T \mTestCasesToSelect},
  \]
  where $\tuple{\testCaseDistribution^t \in \simplex^{\mTestCasesToSelect}}_{t = 1}^T$ are the decisions made by regret matching$^+$.
  This bound comes from regret matching$^+$'s regret bound on linear losses (\cref{prop:rmp-regret-bound}) and the linearization trick (\cref{prop:linearization-trick}), which states that the regret on loss gradients upper bounds that of the loss itself, \ie/,
  $\Regret^{1:T}
      \le
          \sum_{t = 1}^T
              \subex*{\grad \bar{\ell}^t(\testCaseDistribution^t)}^{\top} \testCaseDistribution^t
              - \subex*{\grad \bar{\ell}^t(\testCaseDistribution)}^{\top} \testCaseDistribution$.

  The rest of the proof largely follows the proof of \citet{farina2020stochasticRegretMin}'s Proposition 1.
  The sequence of differences,
  $\tuple*{
      \E\subblock*{\Regret^t} - \Regret^t \le 2\maxLoss
  }_{t = 1}^T,$
  is a bounded martingale difference sequence.

  The probability that the expected cumulative regret,
  $\E[\Regret^{1:T}]$,
  is bounded by the cumulative sampled regret plus slack
  $\lambda \ge 0$ is bounded according to the Azuma-Hoeffding inequality (\cref{prop:azumaHoeffdingInequality}) as
  \begin{align}
  &\Prob\subblock*{\E[\Regret^{1:T}] \le \Regret^{1:T} + \lambda}\\
  &\le
      \Prob\subblock*{\sum_{t = 1}^T \E[\Regret^t] - \Regret^t \le \lambda}\\
  &=
      1 - \Prob\subblock*{
      \sum_{t = 1}^T
          \E[\Regret^t] - \Regret^t
      \ge \lambda}\\
  &\le
      1 - \exp\subex*{\dfrac{2\lambda^2}{4T \subex*{2\maxLoss}^2}}.
  \end{align}
  Setting $\lambda = 2\maxLoss \sqrt{2T\log(\nicefrac{1}{p})}$ ensures that
  \[
  \E[\Regret^{1:T}]
      \le \Regret^{1:T} + 2\maxLoss \sqrt{2 T \log{\nicefrac{1}{p}}}
  \]
  with probability $1 - p$.
  Since $\Regret^{1:T} \le 2 \maxLoss \sqrt{T \mTestCasesToSelect}$,
  \[
  \E[\Regret^{1:T}]
      \le 2 \maxGrad \sqrt{T \mTestCasesToSelect} + 2\maxLoss \sqrt{2 T \log{\nicefrac{1}{p}}}
  \]
  with probability $1 - p$, as required.
\end{proof} 
\begin{theorem}
    \label{thm:rposst-seq}
    After $T' \sim \Unif(\set{1, \ldots, T_1})$, $T_1 > 0$, rounds of its optimization game, \cref{alg:rposst-seq} selects an $\mTestCasesToSelect$-tuple of test cases, $\testCaseGroup^*$ and weights
$\hat{\testCaseDistribution}^{T'}_{\testCaseGroup^*} \in \simplex^{\mTestCasesToSelect}$
that, with probability $(1 - p)(1 - q)(1 - \alpha)$, $p, q, \alpha > 0$, are $\frac{\gap}{q}$-optimal for \cref{eq:rposst-seq-objective}, where
$\gap = \bigO\subex*{ \sqrt{\frac{1}{T_1} \mTestCasesToSelect} + \sqrt{\frac{1}{T_1} \log\subex*{ \nicefrac{1}{p} }} }$
and
$\alpha = \bigO\subex*{\e^{-T_2}}$. \end{theorem}
\begin{proof}
    \def\kOfNLossFn{L_{\probMeasure_{\kOfN}, \jointTestCaseDistributionPolicyUncertainty}}
    Recall that the $k$-of-$N$ loss $\bar{\ell}^t$ that RPOSST$_{\seqLabel}$ updates from on each round $t = 1, \ldots, T_1$ is a Monte Carlo estimate of the $k$-of-$N$ percentile loss,
    \begin{align}
        \kOfNLossFn(\hat{\testCaseDistribution}^t_{\testCaseGroup})
            = \inf_{\integrableFn \in \IntegrableFnSet}
        \hspace{-1em}
        \underset{
            \substack{
                \percentile \in [0, 1]\\
                \Prob_{\policy_j, \testCaseDistribution} \subblock*{
					\ell(\hat{\testCaseDistribution}^t_{\testCaseGroup}; \policy_j, \testCaseDistribution) \le \integrableFn(\eta)
                } \ge \percentile
            }
        }{\int}
            \hspace{-3em}
            \integrableFn(\eta)
            \probMeasure_{\kOfN}(d\eta)
            = \E_{\policy_j, \testCaseDistribution}\subblock{ \bar{\ell}^t },
        \label{eq:loss-given-test-case-group}
    \end{align}
    where $(\policy_j, \testCaseDistribution) \sim \jointTestCaseDistributionPolicyUncertainty$.
    The sequence of test case weights,
    $\tuple{ \testCaseDistribution_{\testCaseGroup}^t }_{t = 1}^{T_1}$,
    for each $\mTestCasesToSelect$-tuple of test cases $\testCaseGroup \subset \testCaseSet$ is therefore random.
    All of the following probabilities and expectations are with respect to these random variables.

    \Cref{lem:regret_matching_k_of_n_hp_regret_bound} guarantees that RPOSST$_{\seqLabel}$, in generating the test case weight sequence $\tuple{ \testCaseDistribution_{\testCaseGroup}^t }_{t = 1}^{T_1}$ has no more than
    $C = 2 \maxGrad \sqrt{T_1 \mTestCasesToSelect} + 2\maxLoss \sqrt{2 T_1 \log{\nicefrac{1}{p}}}$
    cumulative regret on the $k$-of-$N$ percentile losses,
    \[\regret^{1:T_1}_{\testCaseDistribution_{\testCaseGroup}}
        = \sum_{t = 1}^T
            \kOfNLossFn(\hat{\testCaseDistribution}^t_{\testCaseGroup})
            - \kOfNLossFn(\testCaseDistribution_{\testCaseGroup}),
    \]
    for not always selecting test case weights $\testCaseDistribution_{\testCaseGroup}$,
    with probability $1 - p$.
    That is, $1 - p = \Prob\subblock*{ \regret^{1:T_1}_{\testCaseDistribution_{\testCaseGroup}} \le C}$.

    \cref{prop:sample-optimality} guarantees that, on round $T' \sim \Unif(1, \ldots, T_1)$, the weights for each $\mTestCasesToSelect$-tuple are
    $\frac{1}{qT_1} \regret^{1:T_1}_{\testCaseDistribution_{\testCaseGroup}}$
    close to optimal for \cref{eq:loss-given-test-case-group}, with probability $1 - q$.
    That is,
    $1 - q
        = \Prob\subblock*{
            \kOfNLossFn(\hat{\testCaseDistribution}^{T'}_{\testCaseGroup})
            - \kOfNLossFn(\testCaseDistribution_{\testCaseGroup})
                \le \frac{\regret^{1:T_1}_{\testCaseDistribution_{\testCaseGroup}}}{qT_1}
        }$,
    and this holds regardless of the value of $\regret^{1:T_1}_{\testCaseDistribution_{\testCaseGroup}}$, \ie/,
    $\Prob\subblock*{
        \kOfNLossFn(\hat{\testCaseDistribution}^{T'}_{\testCaseGroup})
        - \kOfNLossFn(\testCaseDistribution_{\testCaseGroup})
            \le \frac{\regret^{1:T_1}_{\testCaseDistribution_{\testCaseGroup}}}{qT_1}
    } = \Prob\subblock*{
        \kOfNLossFn(\hat{\testCaseDistribution}^{T'}_{\testCaseGroup})
        - \kOfNLossFn(\testCaseDistribution_{\testCaseGroup})
            \le \frac{\regret^{1:T_1}_{\testCaseDistribution_{\testCaseGroup}}}{qT_1}
        \given
        \regret^{1:T_1}_{\testCaseDistribution_{\testCaseGroup}} \le C'
    }$
    for all $C' \in \reals$.

    Combining these two results, we see that the probability that $\hat{\testCaseDistribution}^{T'}_{\testCaseGroup}$ has at most $\frac{C}{qT_1}$ excess $k$-of-$N$ percentile loss is
    \begin{align}
        \Prob\subblock*{
            \kOfNLossFn(\hat{\testCaseDistribution}^{T'}_{\testCaseGroup})
            - \kOfNLossFn(\testCaseDistribution_{\testCaseGroup})
                \le \frac{C}{qT_1}
        }
            &= \Prob\subblock*{
                \kOfNLossFn(\hat{\testCaseDistribution}^{T'}_{\testCaseGroup})
                - \kOfNLossFn(\testCaseDistribution_{\testCaseGroup})
                    \le \frac{\regret^{1:T_1}_{\testCaseDistribution_{\testCaseGroup}}}{qT_1},
                \regret^{1:T_1}_{\testCaseDistribution_{\testCaseGroup}} \le C
            }\\
            &= \Prob\subblock*{
                \kOfNLossFn(\hat{\testCaseDistribution}^{T'}_{\testCaseGroup})
                - \kOfNLossFn(\testCaseDistribution_{\testCaseGroup})
                    \le \frac{\regret^{1:T_1}_{\testCaseDistribution_{\testCaseGroup}}}{qT_1}
                \given
                \regret^{1:T_1}_{\testCaseDistribution_{\testCaseGroup}} \le C
            } \Prob\subblock*{
                \regret^{1:T_1}_{\testCaseDistribution_{\testCaseGroup}} \le C
            }\\
            &= \Prob\subblock*{
                \kOfNLossFn(\hat{\testCaseDistribution}^{T'}_{\testCaseGroup})
                - \kOfNLossFn(\testCaseDistribution_{\testCaseGroup})
                    \le \frac{\regret^{1:T_1}_{\testCaseDistribution_{\testCaseGroup}}}{qT_1}
            } \Prob\subblock*{
                \regret^{1:T_1}_{\testCaseDistribution_{\testCaseGroup}} \le C
            }\\
            &= (1 - p)(1 - q).
    \end{align}

    The last remaining step is to complete the outer minimization in \cref{eq:rposst-seq-objective} to select a single $\mTestCasesToSelect$-tuple of test cases.
    Since the $k$-of-$N$ loss observed on each round is random, we cannot compute a simple argmin using the test case weights on round $T'$, and are instead faced with a best arm identification problem.
    For this, we run the Successive Rejects algorithm, which we know from \cref{prop:sr-error-prob} identifies a minimum loss $\mTestCasesToSelect$-tuple of test cases with probability at least
    \[
        \alpha = 1 - \dfrac{\mTestCasesToSelect\subex{ \mTestCasesToSelect - 1}}{2}
            \exp\subex*{ - \dfrac{T_2 - \mTestCasesToSelect}{\overline{\log}(\mTestCasesToSelect)H_2} }.
    \]
    The probability of selecting the best $\mTestCasesToSelect$-tuple using the test case weights on round $T'$ is independent of whether or not the regret bound $C$ was actually achieved or if the test case weights on $T'$ are actually nearly optimal for any given $\mTestCasesToSelect$-tuple, the probability of which we previously characterized as $(1 - p)(1 - q)$.
    Therefore, the probability of achieving $\frac{C}{qT_1}$-optimality given each $\mTestCasesToSelect$-tuple and selecting the best $\mTestCasesToSelect$-tuple is the product $(1 - p)(1 - q)(1 - \alpha)$, as required.
\end{proof}

The $\sqrt{\abs{\testCaseSet}}$ dependence in \cref{thm:rposst-seq} could be improved to $\sqrt{\log\subex*{\abs{\testCaseSet}}}$ if regret matching$^+$ (within or without CFR, respectively) was replaced with an algorithm like Hedge~\citep{hedge}, but this tends to lead to worse performance in practice (see, \eg/, \citet{solvingHulhe,burch2017time}).

In the deterministic CVaR RPOSST case, we get the following corollary.
\begin{corollary}
    \label{cor:deterministic-rposst-seq}
    Assume that
$\jointTestCaseDistributionPolicyUncertainty \in \simplex^d$
for some finite $d \ge 1$.
After $T$ rounds of the CVaR($\percentile$) RPOSST$_{\seqLabel}$ optimization game, where the protagonist chooses $\mTestCasesToSelect$-size tests according to regret matching$^+$ against a best response antagonist,
$\testCaseGroup^*$ and $\testCaseDistribution^{t^*}_{\testCaseGroup^*}$
are $\gap$-optimal for \cref{eq:rposst-seq-objective} under the $\percentile$-fractile CVaR robustness measure, where
$\gap = \bigO\subex*{ \sqrt{\frac{1}{T} \mTestCasesToSelect}}$. \end{corollary}
\begin{proof}
    \Cref{prop:rmp-regret-bound} and \cref{prop:best-optimality} ensures that there is a round $t^*_{\testCaseGroup} \le T$ where
    $\testCaseDistribution_{\testCaseGroup}^{t^*_{\testCaseGroup}}$
    is
    $2 \maxGrad \sqrt{\mTestCasesToSelect \dfrac{1}{T}}$-optimal on the deterministic $k$-of-$N$ losses.
    Since the $k$-of-$N$ loss function observed on each round is deterministic, we can perform a simple minimization across $\set{1, \ldots, T}$ and the $\mTestCasesToSelect$-tuple of test cases to find the minimizers $t^*$ and $\testCaseGroup^*$, leading to the stated optimality guarantee.
\end{proof}

\section{Deterministic CVaR$(\percentile)$ RPOSST$_{\seqLabel}$ Pseudocode}
\begin{algorithm2e}[tb]
    \caption{Deterministic CVaR$(\percentile)$ RPOSST$_{\seqLabel}$ with regret matching$^+$}\label{alg:cvar-rposst-seq}
    \DontPrintSemicolon
    \textbf{\textit{Inputs:}} $\langle \percentile, T, \mTestCasesToSelect, \jointTestCaseDistributionPolicyUncertainty, \testCaseGroup^0, \ell \rangle$
    \vspace{0.2em} \hrule \vspace{0.2em}

    $q^{1:0}_{\testCaseGroup} \gets \zeros \in \reals^{\mTestCasesToSelect + \abs{\testCaseGroup^0}}$ \textbf{for} $\testCaseGroup \in \testCaseSet^{\mTestCasesToSelect}$

    $t^* \gets 1$

    $\bar{\cfv}^{t^*} \gets -\infty$

    \For{$t \gets 1, \ldots, T$}{
        \For{$\testCaseGroup \in \testCaseSet^{\mTestCasesToSelect}$}{
            $z^t \gets \ones^{\top} q^{1:t - 1}_{\testCaseGroup}$

            $\hat{\testCaseDistribution}^t_{\testCaseGroup} \gets q^{1:t - 1}_{\testCaseGroup} / z^t$ \textbf{if} $z^t > 0$ \textbf{else} $\ones / \mTestCasesToSelect$

            \tcp{Fill in zeros so that $\hat{\testCaseDistribution}^t_{\testCaseGroup} \in \simplex^{\abs{\testCaseSet}}$.}
            $\hat{\testCaseDistribution}^t_{\testCaseGroup}(x) \gets 0$ \textbf{for} $x \in \testCaseSet \setminus (\testCaseGroup \cup \testCaseGroup^0)$

$\subblock*{ \ell_{\testCaseGroup, (i)} }_{i = 1}^d \gets
              \WorstFactileLossesFn\subex*{\hat{\testCaseDistribution}^t_{\testCaseGroup}, \percentile, \jointTestCaseDistributionPolicyUncertainty, \ell}$

            $\cfv^t_{\testCaseGroup} \gets
              -\sum_{i = 1}^d \dfrac{\partial \ell_{\testCaseGroup, (i)}, \policy_{j_{(i)}}}{\partial \hat{\testCaseDistribution}^t_{\testCaseGroup}}$

            \tcp{Update regret matching$^+$.}
            $\bar{\cfv}_{\testCaseGroup}^t \gets (\hat{\testCaseDistribution}^t_{\testCaseGroup})^{\top} \cfv^t_{\testCaseGroup}$

            $\regret^t_{\testCaseGroup} \gets \cfv^t_{\testCaseGroup} - \bar{\cfv}_{\testCaseGroup}^t$

            $q^{1:t}_{\testCaseGroup} \gets \ramp{q^{1:t - 1}_{\testCaseGroup} + \regret^t_{\testCaseGroup}}$

            \tcp{Update the best round.}
            \If{$\bar{\cfv}_{\testCaseGroup}^t > \bar{\cfv}^{t^*}$}{
              $t^* \gets t$

              $\bar{\cfv}^{t^*} \gets \bar{\cfv}_{\testCaseGroup}^t$
            }
        }
    }

    \Return $\testCaseGroup^{t^*}, \hat{\testCaseDistribution}^{t^*}_{\testCaseGroup^*}$

    \vspace{0.5em} \hrule \vspace{0.2em}
    \setcounter{AlgoLine}{0}
    \SetKwProg{Subroutine}{Procedure}{}{}
    \Subroutine{$\WorstFactileLossesFn$ \quad \textbf{Inputs:} $\langle \hat{\testCaseDistribution}, \percentile, \jointTestCaseDistributionPolicyUncertainty, \ell \rangle$}{
      \vspace{0.2em} \hrule \vspace{0.2em}

      \tcp{The support of $\jointTestCaseDistributionPolicyUncertainty$, $\supportFn(\jointTestCaseDistributionPolicyUncertainty)$, is assumed to be a finite number $d = \abs{\supportFn(\jointTestCaseDistributionPolicyUncertainty)}$.}

      \For{$\policy_{j_i}, \testCaseDistribution_i \in \supportFn(\jointTestCaseDistributionPolicyUncertainty)$}{
        \tcp{Evaluate $\hat{\testCaseDistribution}$.}
        $\ell_i \gets \ell(\hat{\testCaseDistribution}; \policy_{j_i},\testCaseDistribution_i)$
      }

      $\SortFn\subex*{
        \set*{ i \where \ell_i }_{i = 1}^d
      }$

      \tcp{Assign weights to each loss function.}
      \tcp{Iterate over $\jointTestCaseDistributionPolicyUncertainty$'s support sorted accoding to descending loss value from the previous step.}
      $\beta \gets 0$

      \For{$\policy_{j_{(i)}}, \testCaseDistribution_{(i)} \in \supportFn(\jointTestCaseDistributionPolicyUncertainty)$}{
        $\alpha_{(i)} = \min \set*{
          \jointTestCaseDistributionPolicyUncertainty\subex*{
            \tuple*{\policy_{j_{(i)}}, \testCaseDistribution_{(i)}}
          },
          \percentile - \beta
        }$

        $\beta \gets \beta + \alpha_{(i)}$
      }

      \Return $\subblock*{ \frac{\alpha_{(i)}}{\percentile} \ell_{(i)} }_{i = 1}^d$
    }
  \end{algorithm2e}
 Pseudocode for CVaR($\percentile$) RPOSST$_{\seqLabel}$ is presented in \cref{alg:cvar-rposst-seq}.

\section{Simultaneous-Move Model}\label{sec:sim-move-model}
We present a more in-depth description of the simultaneous move antagonist model which describes the $\text{RPOSST}_{SIM}$ as introduced in Section~\ref{section:rposst}. This description is complemented by pseudocode describing its workings in \cref{alg:rposst-sim}.

In this model, the antagonist does not observe which $\mTestCasesToSelect$-tuple of test cases, $\testCaseGroup$, is sampled from the protagonist's $\hat{\testCaseDistribution}^t_{\testCaseGroupDecisionLabel} \in \simplex^{\abs{\testCaseSet}^{\mTestCasesToSelect}}$ distribution, making the antagonist role more difficult.
The simultaneous move model corresponds to the policy testing use case where a new $\mTestCasesToSelect$-tuple of test cases is sampled independently for each test that is performed.
Effectively, the protagonist and antagonist choose $\testCaseGroup$ and
$\tuple*{\tuple*{
    \policy_{j_{(i)}},
    \testCaseDistribution_{(i)}
}}_{i = 1}^k$
respectively in a simultaneous fashion. In this model, the antagonist must choose a single list of tuples
$\tuple*{\tuple*{
    \policy_{j_{(i)}},
    \testCaseDistribution_{(i)}
}}_{i = 1}^k$
that will lead to a large loss across all of the $\mTestCasesToSelect$-tuples of test cases that the protagonist might choose, thereby preventing the antagonist from exploiting the lacking aspects of each individual $\mTestCasesToSelect$-tuple.

The protagonist in the simultaneous move model must carefully choose $\hat{\testCaseDistribution}^t_{\testCaseGroupDecisionLabel}$ and each $\mTestCasesToSelect$-tuple distribution,
$\subblock*{\hat{\testCaseDistribution}^t_{\testCaseGroup}}_{\testCaseGroup \in \testCaseSet^{\mTestCasesToSelect}}$,
to thwart the antagonist.
We organize the protagonist's actions into two sequential decisions: first choosing the $\mTestCasesToSelect$-tuple $\testCaseGroup$ and then choosing $\hat{\testCaseDistribution}^t_{\testCaseGroup}$ given $\testCaseGroup$.
We then use CFR$^+$ to refine both $\hat{\testCaseDistribution}^t_{\testCaseGroupDecisionLabel}$ and each $\hat{\testCaseDistribution}^t_{\testCaseGroup}$ after each round.

\begin{algorithm2e}
  \DontPrintSemicolon
  \textbf{\textit{Inputs:}} $\tuple*{k, N, T, \mTestCasesToSelect, \jointTestCaseDistributionPolicyUncertainty, \testCaseGroup^0, \ell}$
  \vspace{0.2em} \hrule \vspace{0.2em}

  \tcp{Initialize pseudoregrets.}
  $q^{1:0}_{\testCaseGroupDecisionLabel} \gets \zeros \in \reals^{\abs{\testCaseSet^{\mTestCasesToSelect}}}$

  $q^{1:0}_{\testCaseGroup} \gets \zeros \in \reals^{\mTestCasesToSelect + \abs{\testCaseGroup^0}}$ \textbf{for} $\testCaseGroup \in \testCaseSet^{\mTestCasesToSelect}$

  \tcp{Initialize average distributions.}
  $\hat{\testCaseDistribution}^{1:0}_{\testCaseGroupDecisionLabel} \gets
      \zeros \in \reals^{\abs{\testCaseSet^{\mTestCasesToSelect}}}$

  $\hat{\testCaseDistribution}^{1:0}_{\testCaseGroup} \gets
      \zeros \in \reals^{\mTestCasesToSelect + \abs{\testCaseGroup^0}}$ \textbf{for} $\testCaseGroup \in \testCaseSet^{\mTestCasesToSelect}$

  \For{$t \gets 1, \ldots, T$}{

\tcp{Sample antagonist actions.}
      $\policy_{j_i}, \testCaseDistribution_i \sim \jointTestCaseDistributionPolicyUncertainty$ \textbf{for} $i = 1 \ldots N$

      \tcp{Generate test case distributions.}
      $z^t_{\testCaseGroupDecisionLabel} \gets \ones^{\top} q^{1:t - 1}_{\testCaseGroupDecisionLabel}$

      $\hat{\testCaseDistribution}^t_{\testCaseGroupDecisionLabel} \gets q^{1:t - 1}_{\testCaseGroupDecisionLabel} / z^t_{\testCaseGroupDecisionLabel}$ \textbf{if} $z^t_{\testCaseGroupDecisionLabel} > 0$ \textbf{else} $\ones / \abs{\testCaseSet^{\mTestCasesToSelect}}$

      \For{$\testCaseGroup \in \testCaseSet^{\mTestCasesToSelect}$}{
        $z^t_{\testCaseGroup} \gets \ones^{\top} q^{1:t - 1}_{\testCaseGroup}$

        $\hat{\testCaseDistribution}^t_{\testCaseGroup} \gets q^{1:t - 1}_{\testCaseGroup} / z^t_{\testCaseGroup}$ \textbf{if} $z^t_{\testCaseGroup} > 0$ \textbf{else} $\ones / \mTestCasesToSelect$

        \tcp{Fill in zeros so that $\hat{\testCaseDistribution}^t_{\testCaseGroup} \in \simplex^{\abs{\testCaseSet}}$.}
        $\hat{\testCaseDistribution}^t_{\testCaseGroup}(x) \gets 0$ \textbf{for} $x \in \testCaseSet \setminus (\testCaseGroup \cup \testCaseGroup^0)$
      }

      \tcp{Evaluate the CFR$^+$ distributions.}
      $\ell_i \gets
          (\hat{\testCaseDistribution}^t_{\testCaseGroupDecisionLabel})^{\top}
          \subblock*{
              \ell(\hat{\testCaseDistribution}^t_{\testCaseGroup}; \policy_{j_i}, \testCaseDistribution_i)
          }_{\testCaseGroup \in \testCaseSet^{\mTestCasesToSelect}}
      $ \textbf{for} $i = 1, \ldots, N$

      \tcp{Sort to identify the worst $k$.}
      $\SortBy\subex*{
          \subblock*{ \tuple*{\testCaseDistribution_i, \policy_{j_i}} }_{i = 1}^N,
          \subblock*{ \ell_i }_{i = 1}^N
      }$

      \tcp{Update CFR$^+$.}
      \For{$\testCaseGroup \in \testCaseSet^{\mTestCasesToSelect}$}{
          $\ell_{\testCaseGroup, (i)} \gets \ell(\hat{\testCaseDistribution}^t_{\testCaseGroup}; \policy_{j_{(i)}}, \testCaseDistribution_{(i)})$
              \textbf{for} $i = 1, \ldots, k$

          $\cfv^t_{\testCaseGroup} \gets \frac{-1}{k} \sum_{i = 1}^k \frac{\partial \ell_{\testCaseGroup, (i)}}{\partial \hat{\testCaseDistribution}^t_{\testCaseGroup}}$

          $\regret^t_{\testCaseGroup} \gets \cfv^t_{\testCaseGroup} - (\hat{\testCaseDistribution}^t_{\testCaseGroup})^{\top} \cfv^t_{\testCaseGroup}$

          $q^{1:t}_{\testCaseGroup} \gets \ramp{q^{1:t - 1}_{\testCaseGroup} + \regret^t_{\testCaseGroup}}$
      }
      $\cfv^t_{\testCaseGroupDecisionLabel} \gets
          \frac{-1}{k} \sum_{i = 1}^k
              \subblock*{
                  \ell_{\testCaseGroup, (i)}
              }_{\testCaseGroup \in \testCaseSet^{\mTestCasesToSelect}}$

      $\regret^t_{\testCaseGroupDecisionLabel} \gets \cfv^t_{\testCaseGroupDecisionLabel} - (\hat{\testCaseDistribution}^t_{\testCaseGroupDecisionLabel})^{\top} \cfv^t_{\testCaseGroupDecisionLabel}$

      $q^{1:t}_{\testCaseGroupDecisionLabel} \gets \ramp{q^{1:t - 1}_{\testCaseGroupDecisionLabel} + \regret^t_{\testCaseGroupDecisionLabel}}$

      \tcp{Update average distributions.}
      $\hat{\testCaseDistribution}^{1:t}_{\testCaseGroupDecisionLabel} \gets
          \hat{\testCaseDistribution}^{1:t - 1}_{\testCaseGroupDecisionLabel}
          + t \hat{\testCaseDistribution}^t_{\testCaseGroupDecisionLabel}$

      $\hat{\testCaseDistribution}^{1:t}_{\testCaseGroup} \gets
          \hat{\testCaseDistribution}^{1:t - 1}_{\testCaseGroup}
          + t \hat{\testCaseDistribution}^t_{\testCaseGroupDecisionLabel}(\testCaseGroup) \hat{\testCaseDistribution}^t_{\testCaseGroup}
      $ \textbf{for} $\testCaseGroup \in \testCaseSet^{\mTestCasesToSelect}$
  }
  \Return
      $\tuple*{
          \dfrac{
              \hat{\testCaseDistribution}^{1:T}_{\testCaseGroupDecisionLabel}
          }{
              \ones^{\top} \hat{\testCaseDistribution}^{1:T}
          },
          \subblock*{
              \dfrac{
                  \hat{\testCaseDistribution}^{1:T}_{\testCaseGroup}
              }{
                  \ones^{\top} \hat{\testCaseDistribution}^{1:T}_{\testCaseGroup}
              }
          }_{\testCaseGroup \in \testCaseSet^{\mTestCasesToSelect}}
      }$
  \caption{RPOSST$_{\simLabel}$ (simultaneous model; CFR$^+$)}
  \label{alg:rposst-sim}
\end{algorithm2e}
 
Instantiating the percentile performance loss of \cref{eq:percentile-performance-loss} for the simultaneous move model, the RPOSST objective is,
\begin{align}
    \min_{
        \substack{
						\hat{\testCaseDistribution}_{\testCaseSet} \in \simplex^{\abs{\testCaseSet}^{\mTestCasesToSelect}}\\
            \subblock*{
								\hat{\testCaseDistribution}_{\testCaseGroup} \in \simplex^{\mTestCasesToSelect}
            }_{\testCaseGroup \in \testCaseSet^{\mTestCasesToSelect}}
        }
    }
    \inf_{\integrableFn \in \IntegrableFnSet}
        \hspace{-1.5em}
        \underset{
            \substack{
                \percentile \in [0, 1]\\
                \Prob \subblock*{
                    \E_{
												\testCaseGroup \sim \hat{\testCaseDistribution}_{\testCaseSet}
                    }\subblock*{
												\ell(\hat{\testCaseDistribution}_{\testCaseGroup}; \testCaseDistribution, \policy_j)
                    } \le \integrableFn(\eta)
                } \ge \percentile
            }
        }{\int}
            \hspace{-4.5em}
            \integrableFn(\eta)
            \probMeasure_{\kOfN}(d\eta),
    \label{eq:rposst-sim-objective}
\end{align}
where
$\testCaseDistribution, \policy_j \sim \jointTestCaseDistributionPolicyUncertainty$.

After (linearly) averaging the protagonist's choices of $\hat{\testCaseDistribution}^t_{\testCaseGroupDecisionLabel}$ and $\subblock*{\hat{\testCaseDistribution}^t_{\testCaseGroup}}_{\testCaseGroup \in \testCaseSet^{\mTestCasesToSelect}}$ across each round, \cref{alg:rposst-sim} returns the average distributions $\bar{\hat{\testCaseDistribution}}^T_{\testCaseGroupDecisionLabel}$
and
$\subblock*{\bar{\hat{\testCaseDistribution}}^T_{\testCaseGroup}}_{\testCaseGroup \in \testCaseSet^{\mTestCasesToSelect}}$.

The simultaneous-move model can be made deterministic using a CVaR measure in the same way as the sequential-move model.
If we fix the ratio $\nicefrac{k}{N}$ and allow $N \to \infty$, the $k$-of-$N$ robustness measure converges toward the CVaR measure at the $\nicefrac{k}{N}$ fractile.
Furthermore, if our the distribution characterizing our uncertainty, $\jointTestCaseDistributionPolicyUncertainty$,
is over a discrete set of manageable size, then we can run RPOSST on CVaR robustness measures.
In RPOSST$_{\simLabel}$, the lowest loss test case distributions across all rounds can also be tracked instead of averaging all of the distributions.

\section{Experimental Details}\label{sec:experimental-details}
\begin{table*}[t]
	\caption{Approximate amount of time required to run $T = 500$ rounds of CVaR($\cvarPercentile$) RPOSST$_{\seqLabel}$ in each domain. Runtimes are similar across both variants in each domain and across holdout policy set sizes.}
	\label{tab:experiment_runtimes}
	\centering
    \begin{tabular}[c]{l|l}
        \hline
        domain & runtime / seed\\
        \hline
        \hline
        Racing Arrows & $\sim 2$ minutes\\
        \hline
        ACPC & $\sim 10$ seconds\\
        \hline
        GT & $\sim 90$ seconds\\
        \hline
    \end{tabular}
\end{table*}

In this section we provide further details on some of the experimental setups used in \cref{section:experiments}.

All CVaR($\cvarPercentile$) RPOSST$_{\seqLabel}$ procedures were run on a 16 core AMD$^{\text{\textregistered}}$ Ryzen 7 5800h CPU with 30.7 GiB of memory.
See \cref{tab:experiment_runtimes} for the time required to run CVaR($\cvarPercentile$) RPOSST$_{\seqLabel}$ on each domain.

\subsection{Racing Arrows}
Racing Arrows is a two-player, zero-sum, one-shot, continuous action game that replicates simple aspects of a passing scenario in a race featuring a "leader" player and faster "follower" player.
The goal of the follower is to pass the leader while the goal of the leader
is to block the follower.

Both players privately choose an angle in the half-circle between 0 and pi for their arrow.
The speed of each player is represented as the length of their arrow. The leader and follower are assigned a speed
according to their roles, where the leader's speed of $0.8$ is slightly slower than
the follower's speed of $1$ to give the follower a chance to pass.
The distance a player travels is the height of their arrow, \ie/,
$\text{speed} \cdot \sin(\text{angle})$.

The follower is blocked and the leader wins if the difference between the two arrows is below $\nicefrac{\pi}{10}$, that is, the leader is close enough to block the follower.
If the follower is not blocked, then the player who traveled the farthest wins.
Players receive $+1$ for a win, $0$ for a loss, or $0.5$ if they travel exactly the same distance (these payoffs sum to the constant $+1$, which is isomorphic to true zero-sum payoffs).

\subsection{Annual Computer Poker Competition}

The Annual Computer Poker Competition (ACPC) was run to test autonomous poker playing agents from 2006 to 2017.
The logs of play are freely available online.\footnote{\url{http://www.computerpokercompetition.org/downloads/competitions}}
Typically, these competitions are Texas hold'em variants: two-player limit, two-player no-limit, and 3-player limit, where ``limit'' and ``no-limit'' indicates whether players are only allowed to bet in fixed increments or if they can bet any number of chips from their current stack, respectively.
Chip stacks reset to their initial sizes after every hand (Doyle's game) so that players can be evaluated on their average one-hand performance across deck shufflings and seat positions.

To reduce variance, hands are played in duplicate, which means that the same deck order is played out multiple times so that each player has a turn playing with the same hands.
For example, if Alice in seat 1 is dealt the ace and king of spades and Bob in seat 2 is dealt the 2 and 7 of hearts in one hand, then Alice and Bob will also play the same hand in opposite positions, where Bob is dealt the ace and king of spades in seat 1, and Alice is dealt the 2 and 7 of hearts.
Alice's duplicate score is then the number of chips she wins over what Bob won in the same position, averaged across both positions.

Our experiments use duplicate score data, \ie/, a test case result here is a duplicate score between two agents, from the 2012 two-player limit and the 2017 two-player no-limit events.

\subsection{\GranTurismo7/}\label{sec:gt7}

\begin{figure}[!h]
	\centering
	\begin{subfigure}{0.55\linewidth}
			\includegraphics[width=\columnwidth, keepaspectratio]{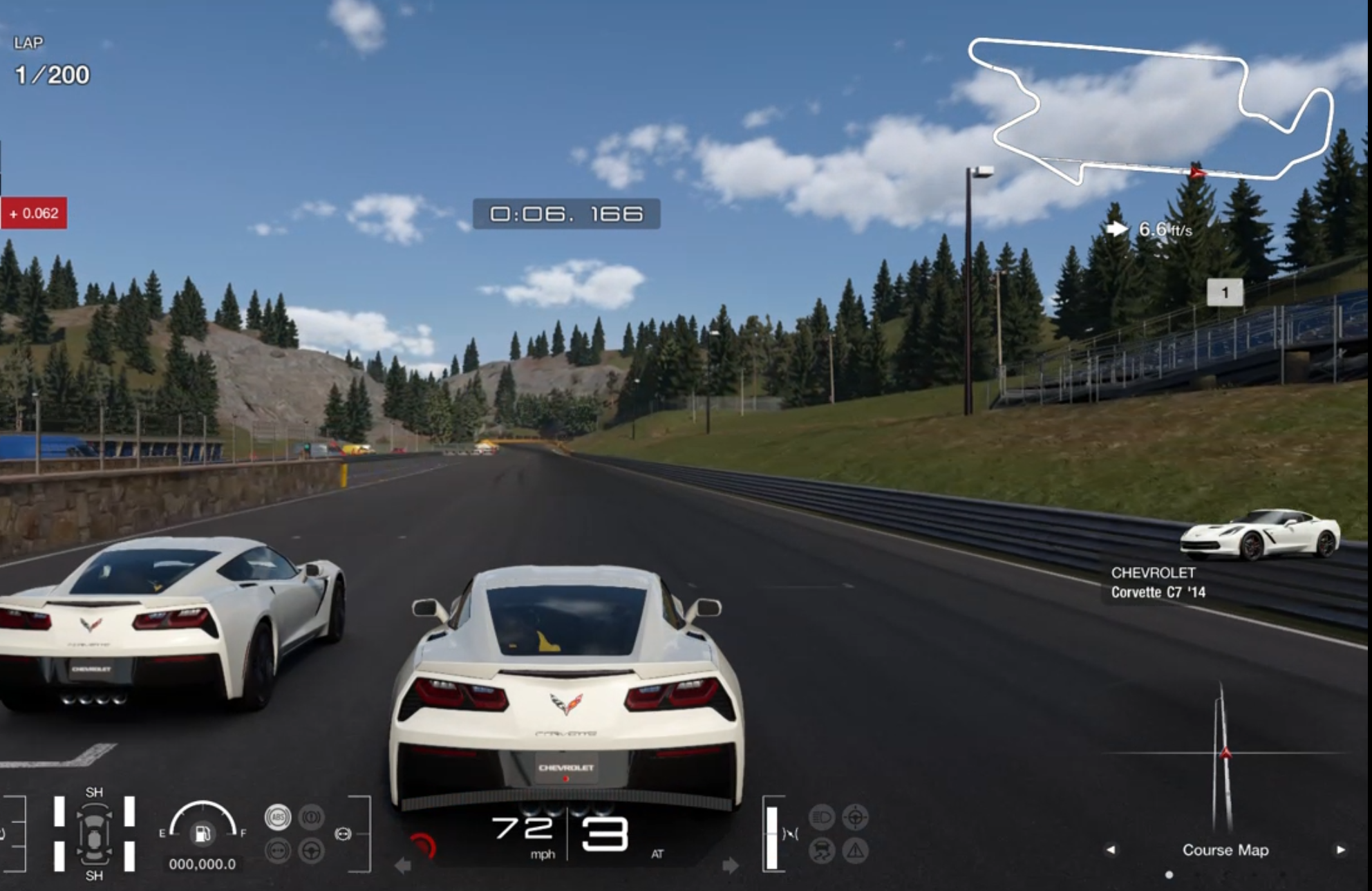}
			\caption{}
			\label{fig:trial-mountain}
	\end{subfigure}
	\hfill
	\begin{subfigure}{0.42\linewidth}
			\includegraphics[width=\linewidth, clip, trim=4em 20em 4em 10em, keepaspectratio]{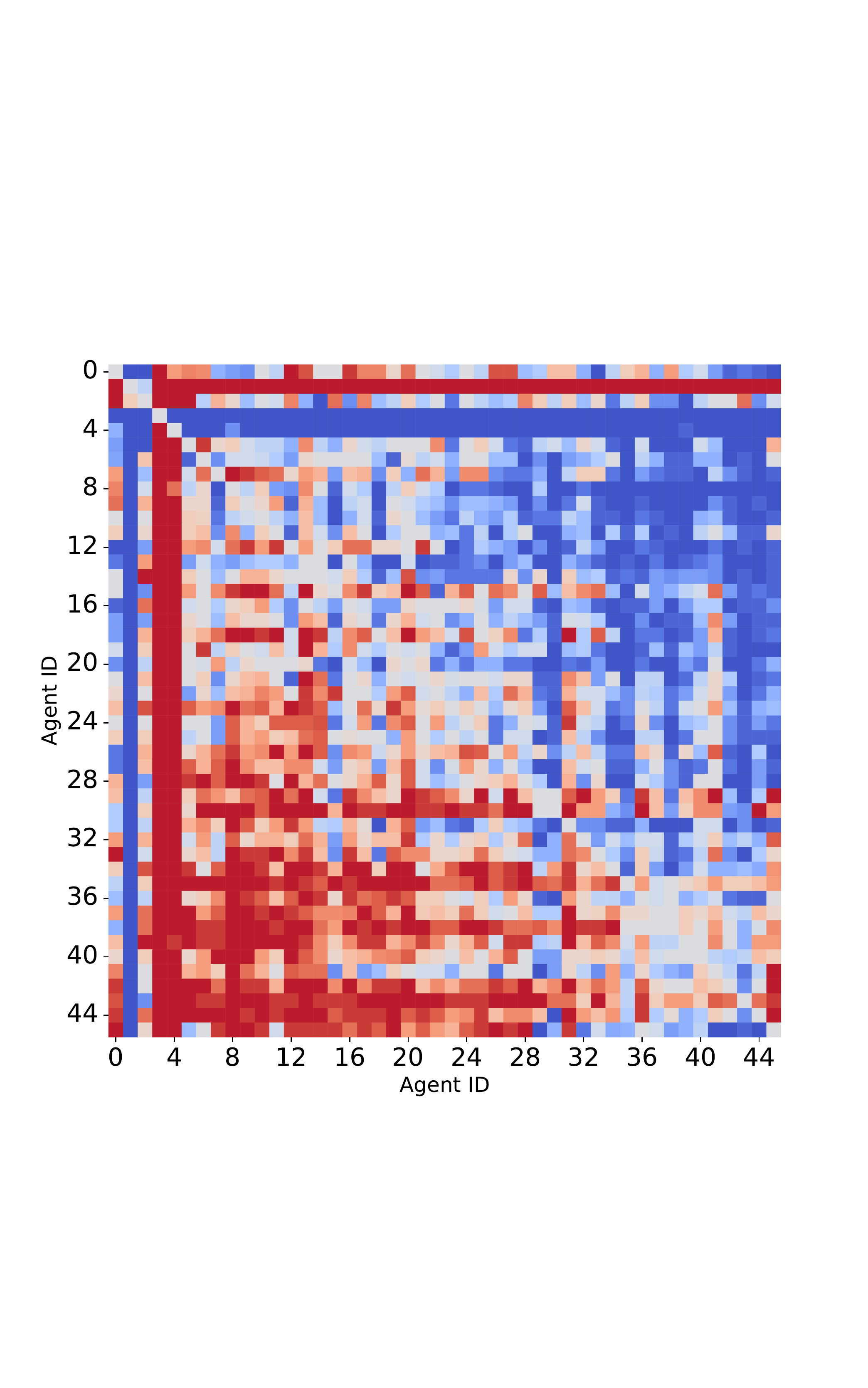}
			\caption{}
			\label{figure:gt_winrate_matrix}
	\end{subfigure}
 \caption{Figure~\ref{fig:trial-mountain} shows a screenshot of two RL agents racing at the Trial Mountain racetrack. The layout of the track can be seen in the top right. Figure~\ref{figure:gt_winrate_matrix} shows the result matrix for the zero-sum experiment. Blue / red colors indicate positive / negative winrates from the point of view of the column player. Agents 0-2 correspond to the different built-in AIs, with the remaining agents being the trained RL agents sorted according to skill. Diagonal values denote an agent playing against itself, which we artificially set to 50\%.}
\end{figure}

Our \GranTurismo7/ experiments were conducted using the \GranTurismo7/ racing simulator.  Previous versions of the \GranTurismo7/ franchise have been used to exhibit  reinforcement learning results~\citep{fuchs2021super,song2021autonomous} including outracing top human drivers~\citep{wurman2022outracing}. Note our focus was not on agent training but rather the problem of selecting the best policy for a deployment, so for training we used the same training parameters reported by by Wurman et al. except for changes to training scenarios to match the track and car combination chosen for this experiment, training only for one-on-one competition, and utilizing a version of self-play to simplify the training process.

The experiment was conducted at the Trial Mountain racetrack (see \cref{fig:trial-mountain}) with the RL policy (and any RL-trained opponent policies) driving a Chevrolet Corvette C7 Stingray '14 using Sport Hard tires. The track and car were chosen because the long straightaways and sharp turns at Trial Mountain led to competitive racing among various RL policies as there are many different areas of the track where passes can occur and the long straightaways allow the agent to use the slipstream of the other car to stay in touch with the car in front.

From a single one-on-one training run we evaluated checkpoints from epochs 5, 200, and then every 75 epochs between epoch 1000 and 4000 for a total of 43 checkpoints. We also evaluated 3 built-in AI agent using cars and tires that made them competitive with the RL agents. Overall we evaluated 46 policies, each of which was considered as a candidate deployment policy or an opponent in a test case.

To create the result matrix shown in \cref{figure:gt_winrate_matrix}, each race was run 20 times with a side-by-side standstill start with the candidate and opponent policies swapping sides half the time to enforce symmetry. An agent would obtain 1 or 0 for winning or losing the race respectively. The diagonal denoting a race between an agent against itself was filled in with $0.5$ entries. As a second experiment on \GranTurismo7/ for a non-zero sum game, using the sportsmanship rule mentioned in \cref{section:experiments} we recomputed the result matrix from \cref{figure:gt_winrate_matrix} so as to penalize trajectories where any car collisions had happened, giving both agents a payoff of $-1$. We remove the entries in the result matrix related to built-in AIs as they are highly collision averse and therefore the sportsmanship constraints would not change their test results, reducing the test case pool size to 43.

\subsection{Supplemental Experimental Results}\label{sec:supplemental-experimental-results}

\begin{figure*}[t]
    \begin{subfigure}[t]{0.32\linewidth}
        \includegraphics[width=\linewidth]{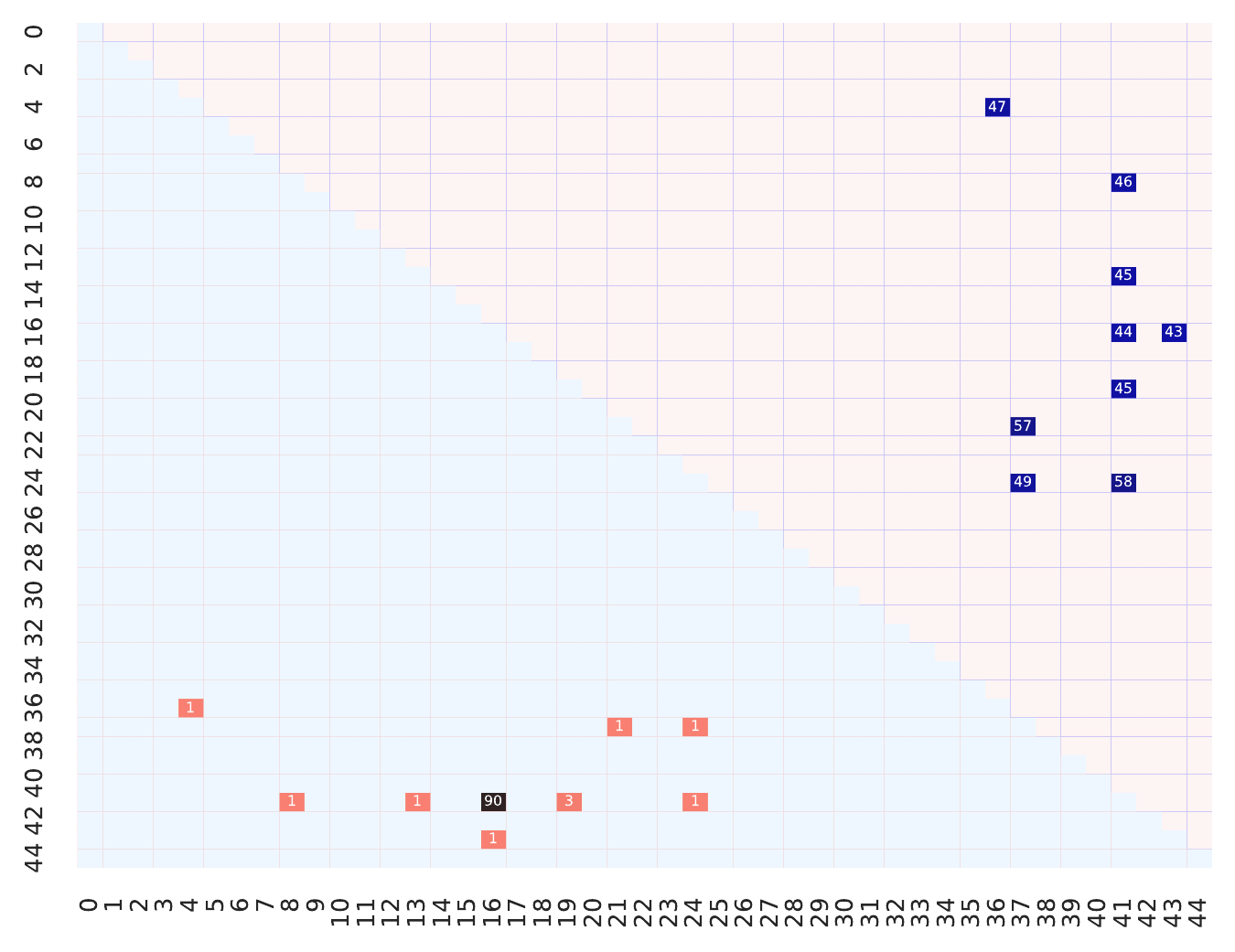}
        \caption{\kOfN}
        \label{fig:frequency_matrix_k_of_n}
    \end{subfigure}\hfill \begin{subfigure}[t]{0.32\linewidth}
        \includegraphics[width=\linewidth]{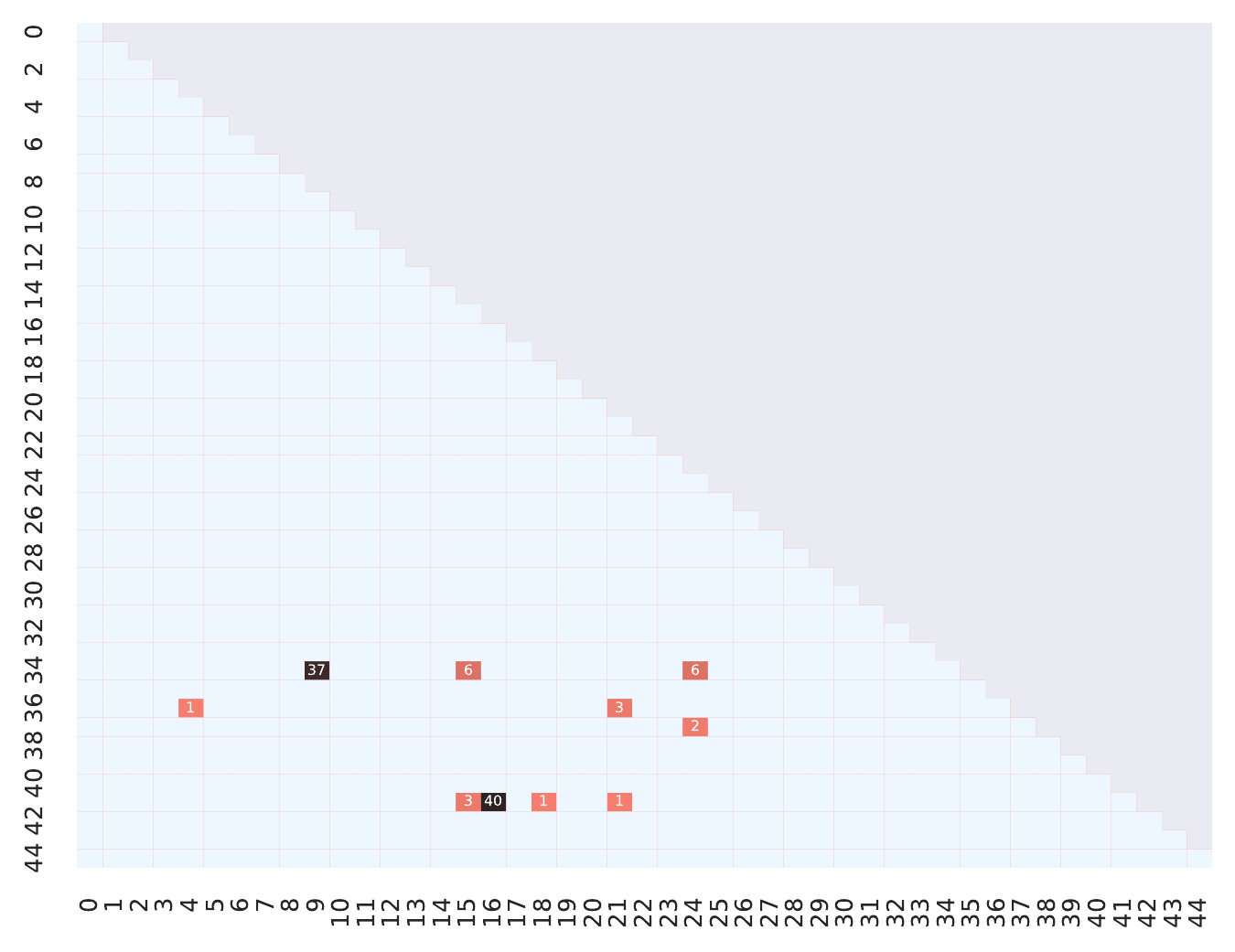}
        \caption{Minimax uniform}
    \end{subfigure}\hfill \begin{subfigure}[t]{0.32\linewidth}
        \includegraphics[width=\linewidth]{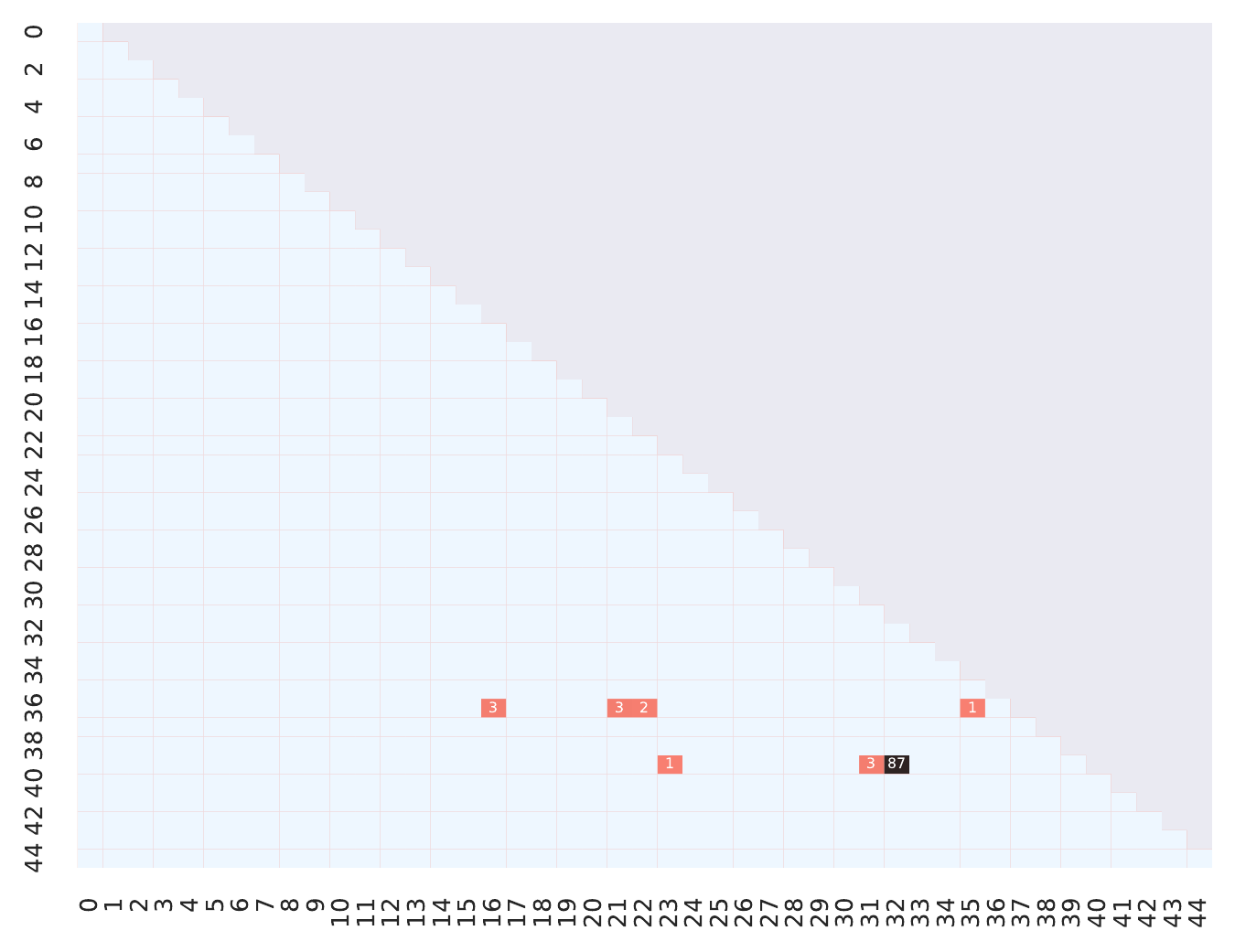}
        \caption{Iterative minimax}
    \end{subfigure}\vspace{0.5em}
    \begin{subfigure}[t]{0.32\linewidth}
        \includegraphics[width=\linewidth]{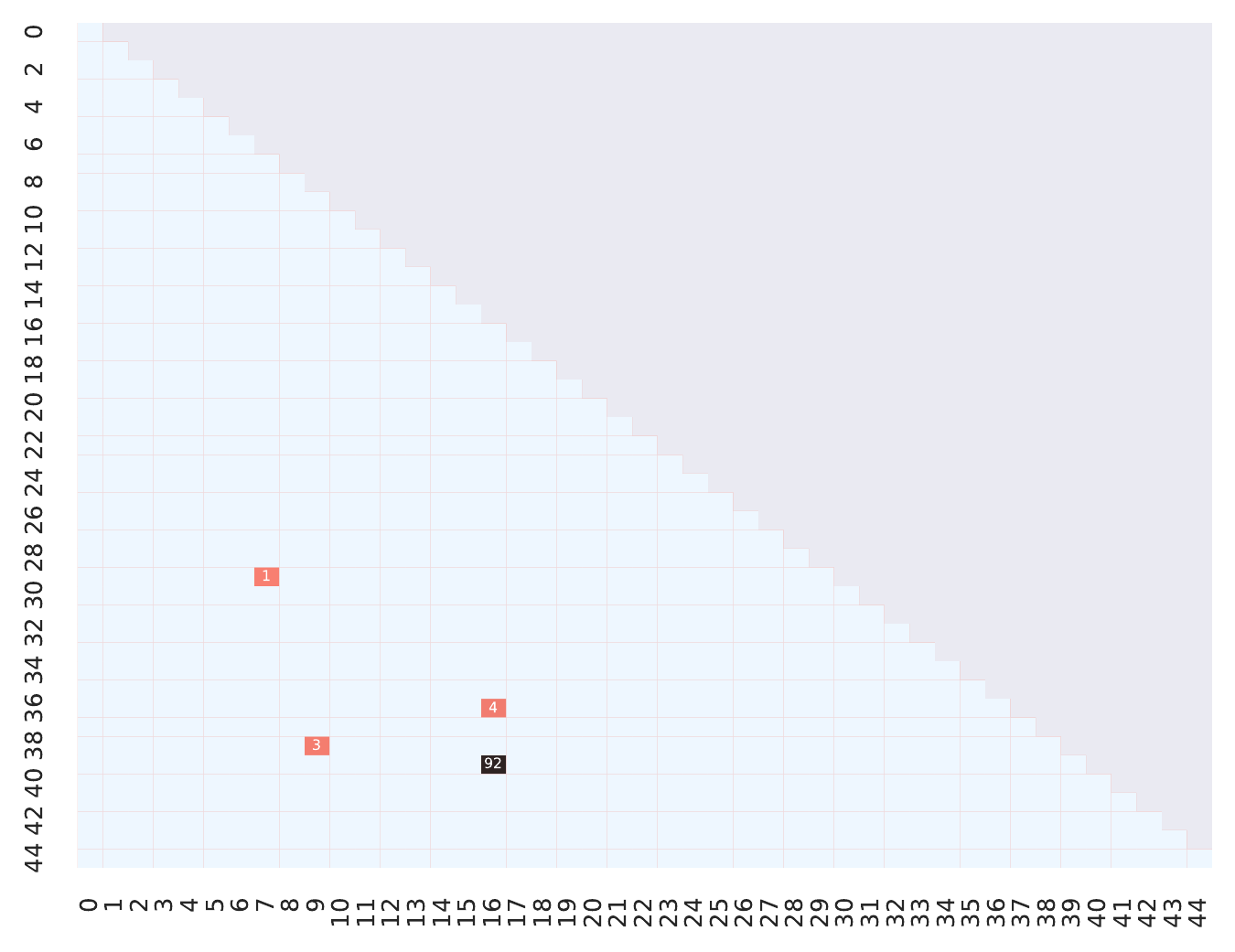}
        \caption{Minimax(TNP) uniform}
    \end{subfigure}\hfill \begin{subfigure}[t]{0.32\linewidth}
        \includegraphics[width=\linewidth]{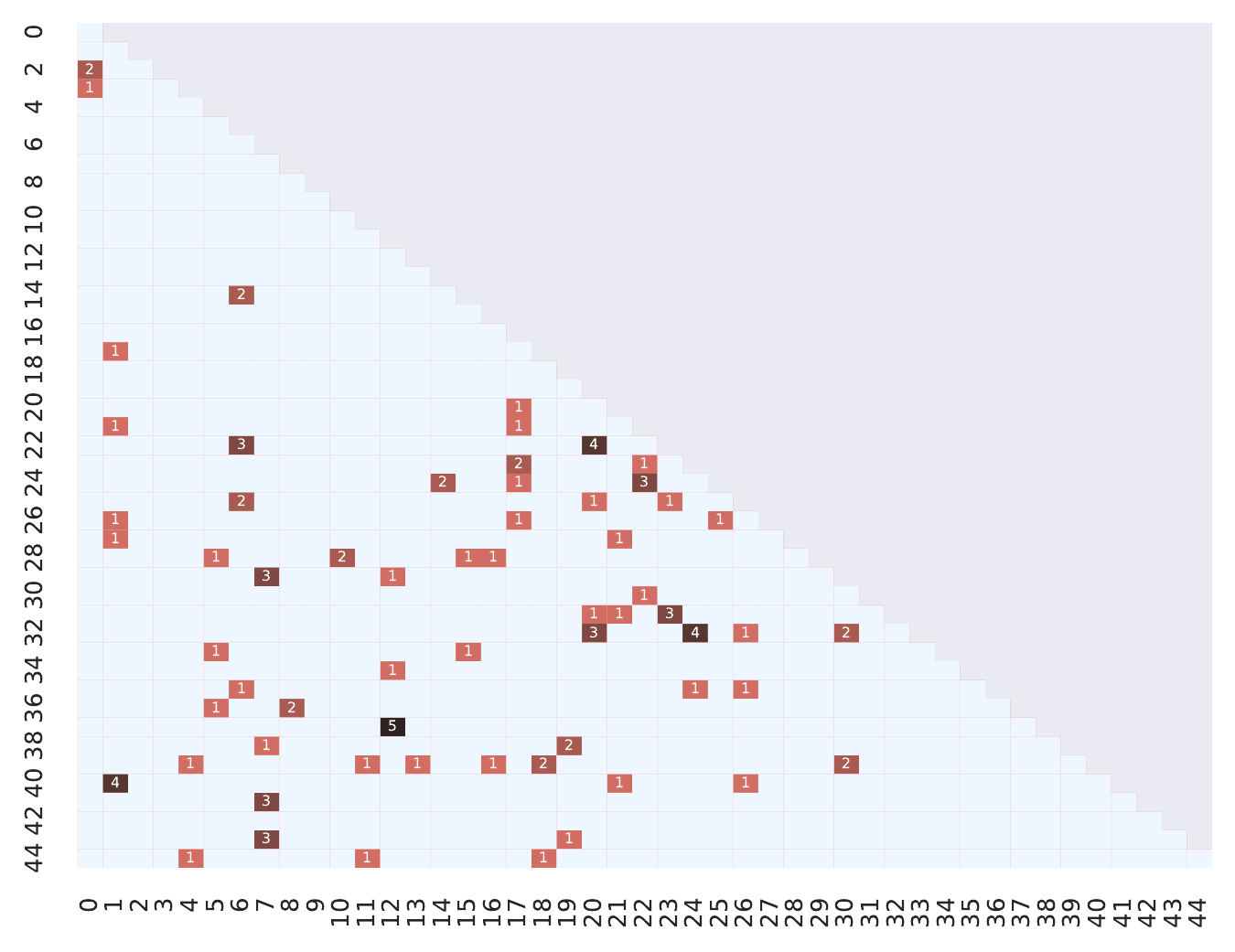}
        \caption{Miniaverage uniform}
    \end{subfigure}\hfill \begin{subfigure}[t]{0.32\linewidth}
        \includegraphics[width=\linewidth]{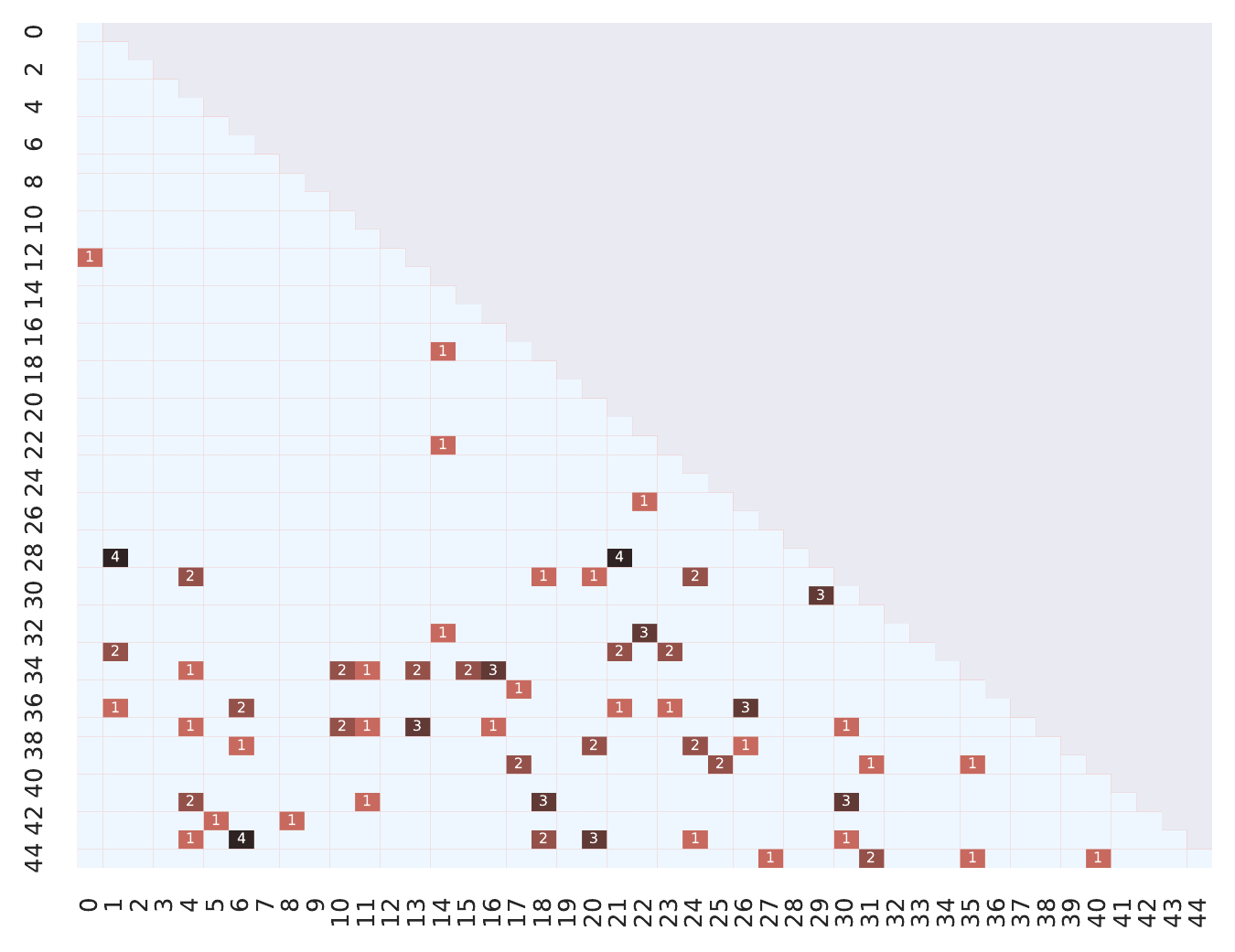}
        \caption{Minimax(TTD) uniform}
    \end{subfigure}\caption{Triangular matrices denoting frequencies of test case pairs choosen by all 6 algorithmic ablations over 100 runs of the winrate GT experiment on a holdout of size 20\%. This visualization is possible because only 2 test cases where chosen as output. The upper triangular matrix from \cref{fig:frequency_matrix_k_of_n} denotes average probability mass given to test case $i$. All other algorithms are limited to uniformly mixing over test cases so the upper triangular matrix is omitted for clarity.}
    \label{fig:frequency_matrices}
\end{figure*}
\begin{table*}[t]
    \caption{Top two test case pairs and corresponding selection frequencies chosen by each algorithm over the 100 seeds in the large GT experiment.}
    \label{tab:top_picks_holdout_20}
    \centering
    \begin{tabular}[c]{|l|cc|}
        \hline
        Algorithm & Pairs & Frequency \\
        \hline
\multirow{2}*{RPOSST$_{\seqLabel}$} & (41, 16) & 90 \\
                                            & (41, 19) & 3 \\
        \hline
        \multirow{2}*{Minimax uniform} & (41, 16) & 40 \\
                                        & (34, 9) & 37 \\
        \hline
\multirow{2}*{Iterative minimax} & (39, 32) & 87 \\
                                        & (39, 31) & 3 \\
        \hline
\multirow{2}*{Minimax(TNP) uniform} & (40, 16) & 92 \\
                                            & (36, 16) & 4 \\
        \hline
\multirow{2}*{Miniaverage uniform} & (37, 12) & 5 \\
                                            & (40, 1) & 4 \\
        \hline
\multirow{2}*{Minimax(TTD) uniform} & (43, 6) & 4 \\
                                            & (28, 1) & 4 \\
        \hline
    \end{tabular}
\end{table*}

\Cref{fig:test-score-error} in \cref{section:experiments} analyses the quantitative performance of RPOSST and its algorithmic ablations with respect to measuring test scores on a holdout set of unseen candidate deployment policies. We complement that analysis with a qualitative study of behaviors exhibited by the algorithms using the large GT experiment with holdout of size 20 as a representative example. We are interested in examining (1) how deterministic each algorithm's output is with respect to the selection of test case pairs and (2) whether different algorithms choose the same test-cases.

The lower triangular matrices from \cref{fig:frequency_matrices} show the frequency at which test case pairs were chosen over the 100 seeds. The top 2 most selected test case pairs for each algorithm are presented in \cref{tab:top_picks_holdout_20}. We observe that RPOSST, alongside Iterative minimax and Minimax(TNP) uniform are very deterministic algorithms, favouring the selection of the same test case pair over 90\%, 87\% and 92\% of the seeds respectively. We deem this a desirable property, as variance  in evaluation scenarios is undesirable because it can hamper interpretability and reproducibility. In contrast, Minimax uniform exhibits a bimodal choice. The remaining algorithms feature a very high variance in their choice of test case pairs, with their most chosen test case pair being selected 5\% of the time, spreading selection widely.

From \cref{tab:top_picks_holdout_20}, test case 16 is heavily favoured by half of the algorithms (RPOSST$_{\seqLabel}$, Minimax uniform and Minimax(TNP) uniform), followed to a lesser extent by test case 41. This indicates that all these algorithms find useful structure in such pairs of agents.

In \cref{fig:racing-arrows-f-50,fig:racing-arrows-l-50,fig:gt-large-matrix,fig:gt-non-zero-sum,fig:acpc2012,fig:acpc2017}, we show the performance of RPOSST$_{\seqLabel}$ and baselines in each domain across test sizes ($\mTestCasesToSelect \in \set{1, 2, 3}$) and holdout proportions ($20\%$, $40\%$, and $60\%$).
\Cref{fig:racing-arrows-l-500} shows the results for the 500 policy Racing Arrows experiment where the leader policies are treated as test cases.

\begin{figure*}[t]
\begin{minipage}[t]{0.32\linewidth}
        \includegraphics[width=\linewidth]{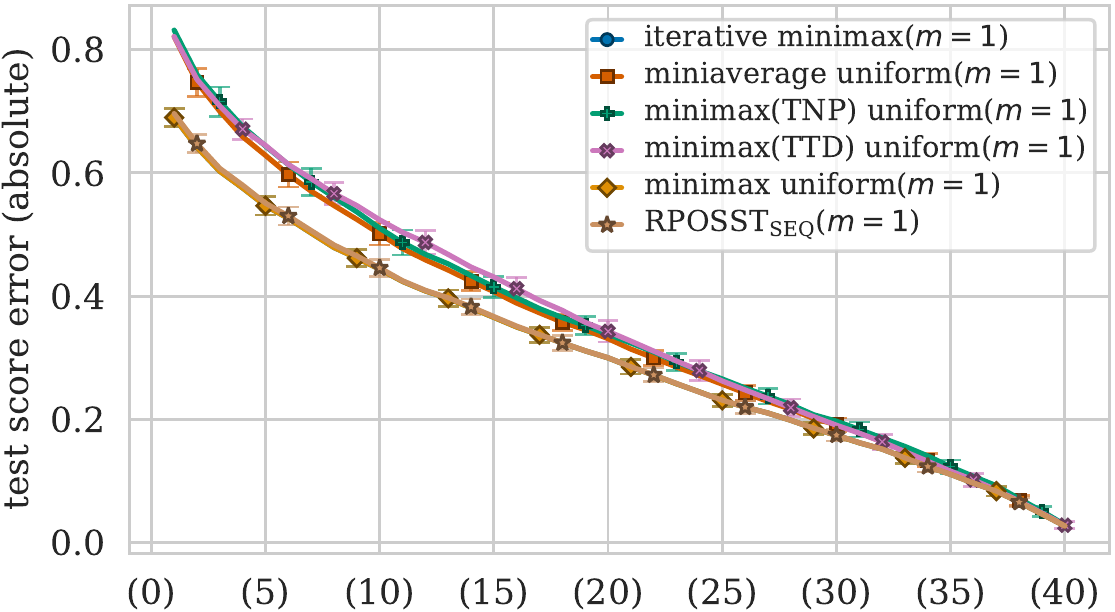}
    \end{minipage}\hfill \begin{minipage}[t]{0.32\linewidth}
        \includegraphics[width=\linewidth]{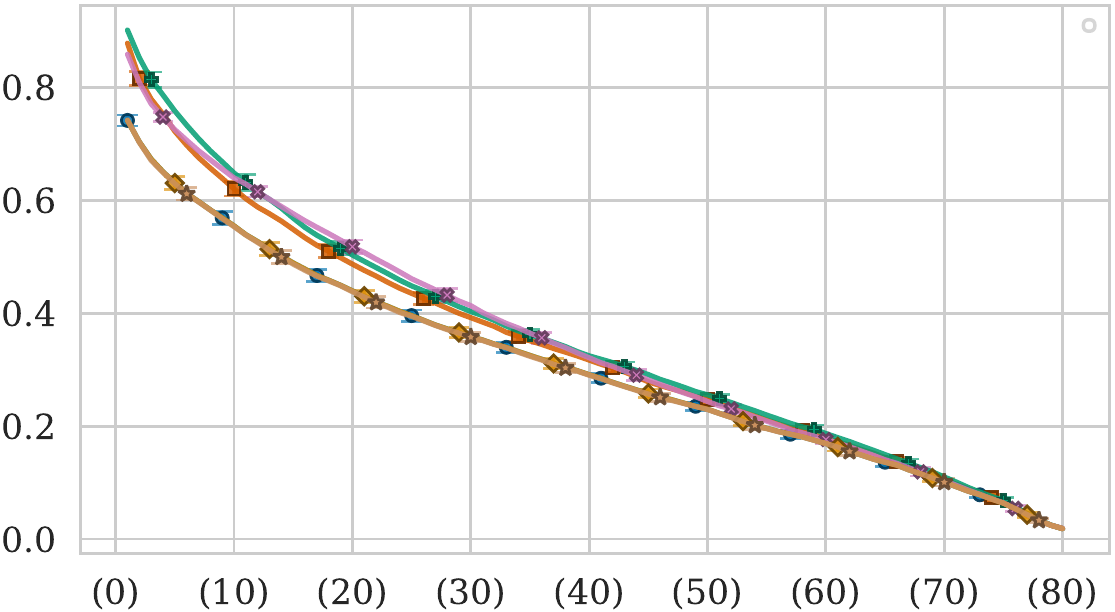}
    \end{minipage}\hfill \begin{minipage}[t]{0.32\linewidth}
        \includegraphics[width=\linewidth]{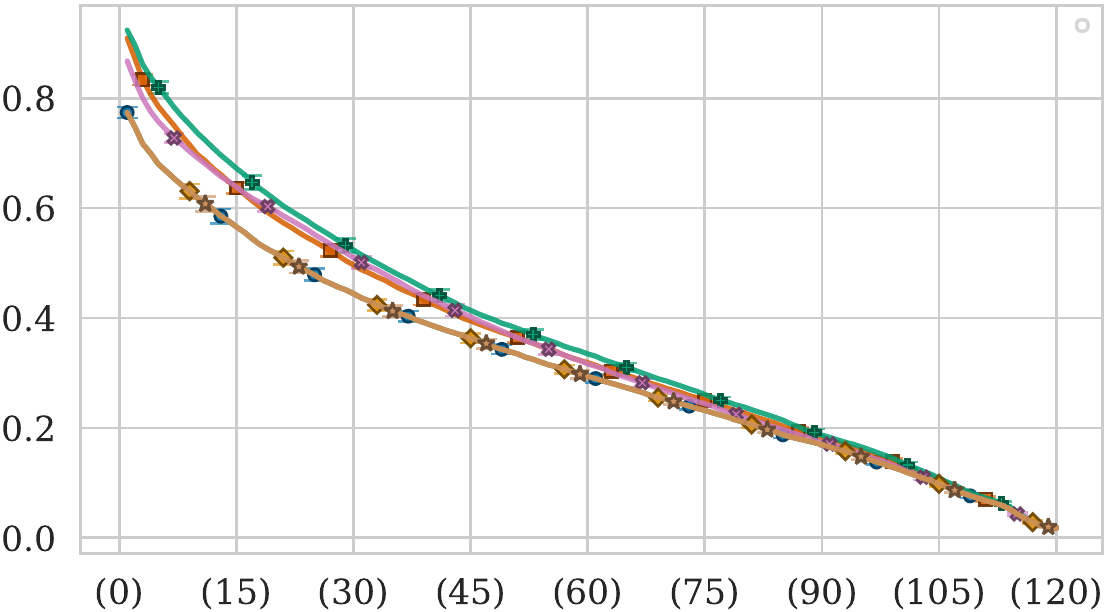}
    \end{minipage}\vspace{0.5em}

\begin{minipage}[t]{0.32\linewidth}
        \includegraphics[width=\linewidth]{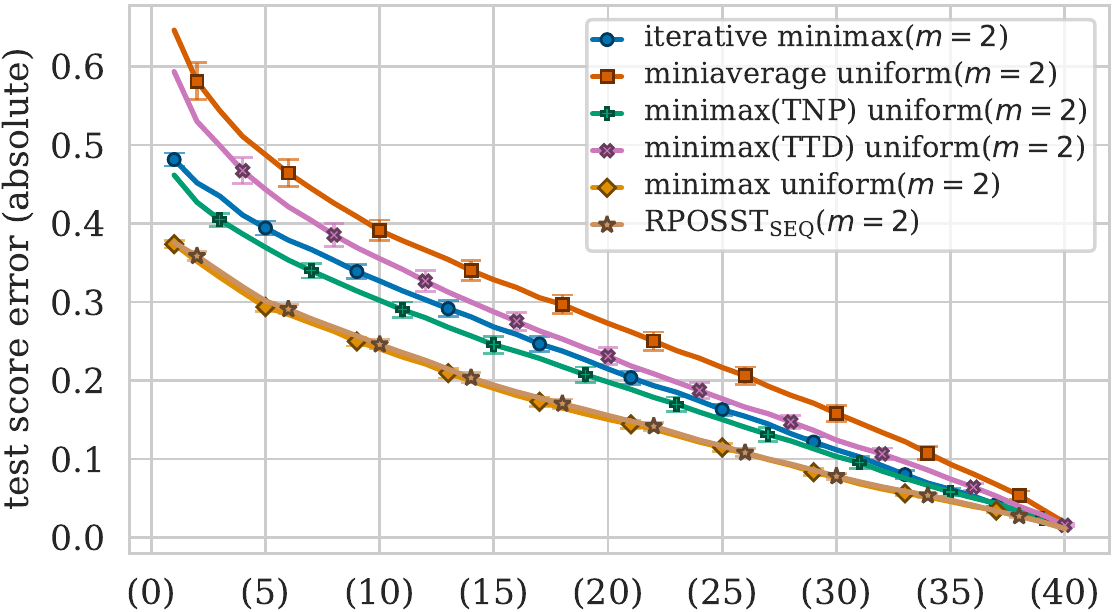}
    \end{minipage}\hfill \begin{minipage}[t]{0.32\linewidth}
        \includegraphics[width=\linewidth]{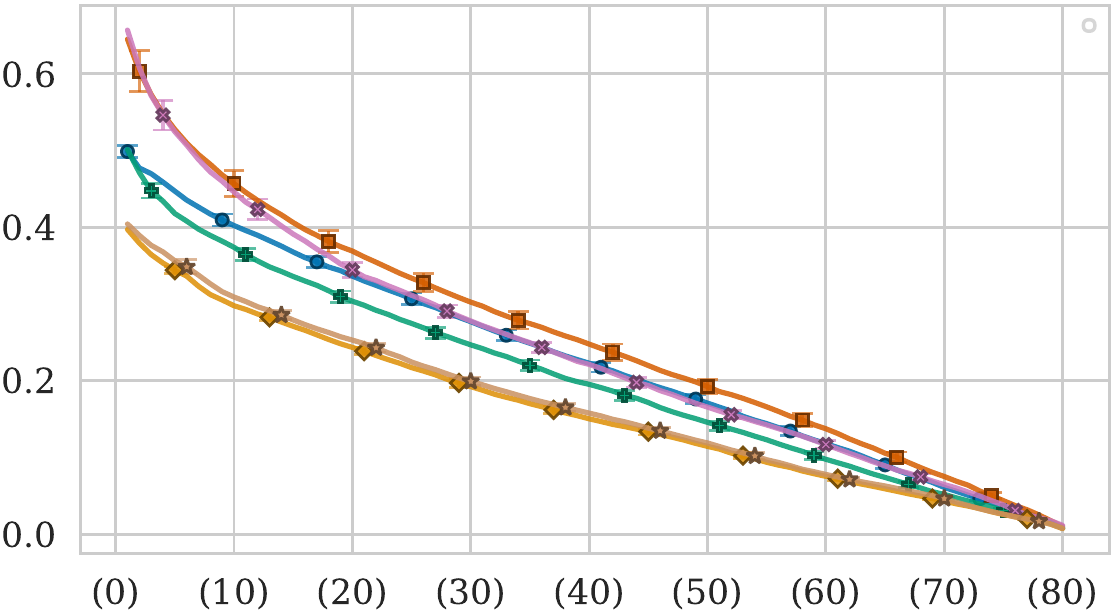}
    \end{minipage}\hfill \begin{minipage}[t]{0.32\linewidth}
        \includegraphics[width=\linewidth]{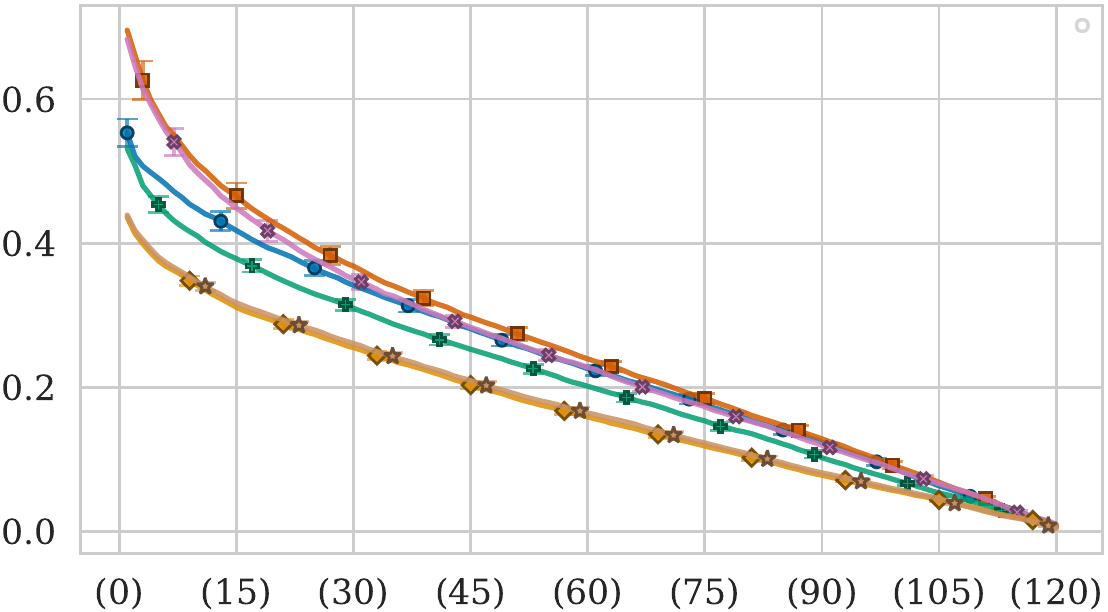}
    \end{minipage}\vspace{0.5em}

\begin{minipage}[t]{0.32\linewidth}
        \includegraphics[width=\linewidth]{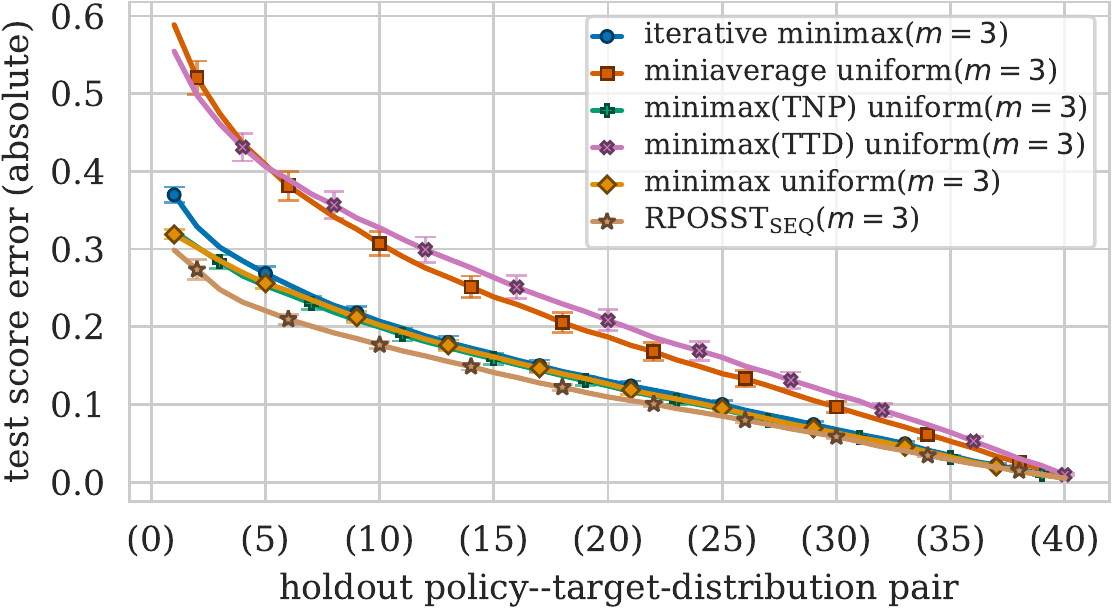}
    \end{minipage}\hfill \begin{minipage}[t]{0.32\linewidth}
        \includegraphics[width=\linewidth]{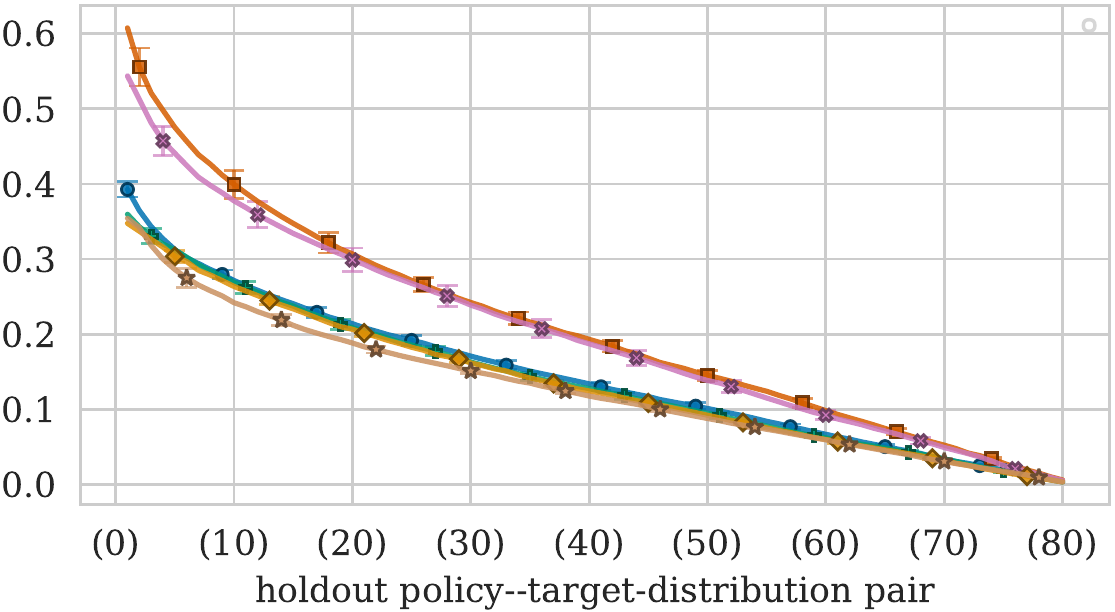}
    \end{minipage}\hfill \begin{minipage}[t]{0.32\linewidth}
        \includegraphics[width=\linewidth]{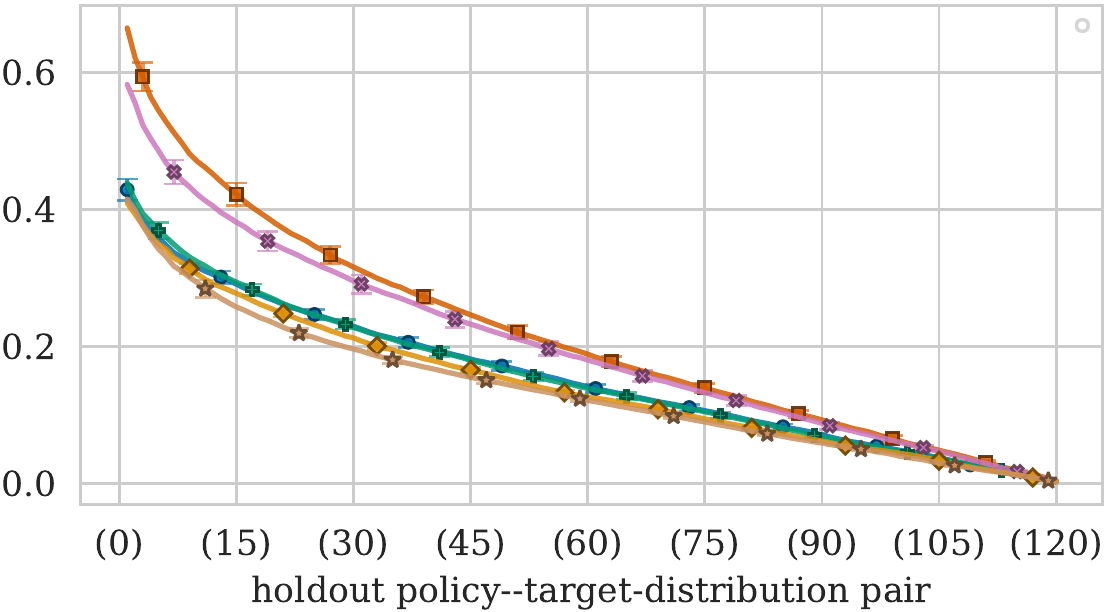}
    \end{minipage}\caption{Expected test score error (absolute difference) across holdout-policy--target-distribution pairs on Racing Arrows where test cases are 50 follower policies. Each row uses a different setting for the test size ($m = 1$ top, $m = 2$ middle, and $m = 3$ bottom) and each column uses a different holdout proportion ($20\%$ held out in the left column, $40\%$ middle, and $60\%$ right). $\numHoldoutReplicas$ sets of holdout policies were sampled. Holdout-policy--target-distribution pairs are sorted according to test score error. Each RPOSST$_{\seqLabel}$ instance was run for $500$ rounds ($T = 500$). Errorbars represent $95\%$ t-distribution confidence intervals.}
    \label{fig:racing-arrows-f-50}
\end{figure*}

\begin{figure*}[t]
\begin{minipage}[t]{0.32\linewidth}
        \includegraphics[width=\linewidth]{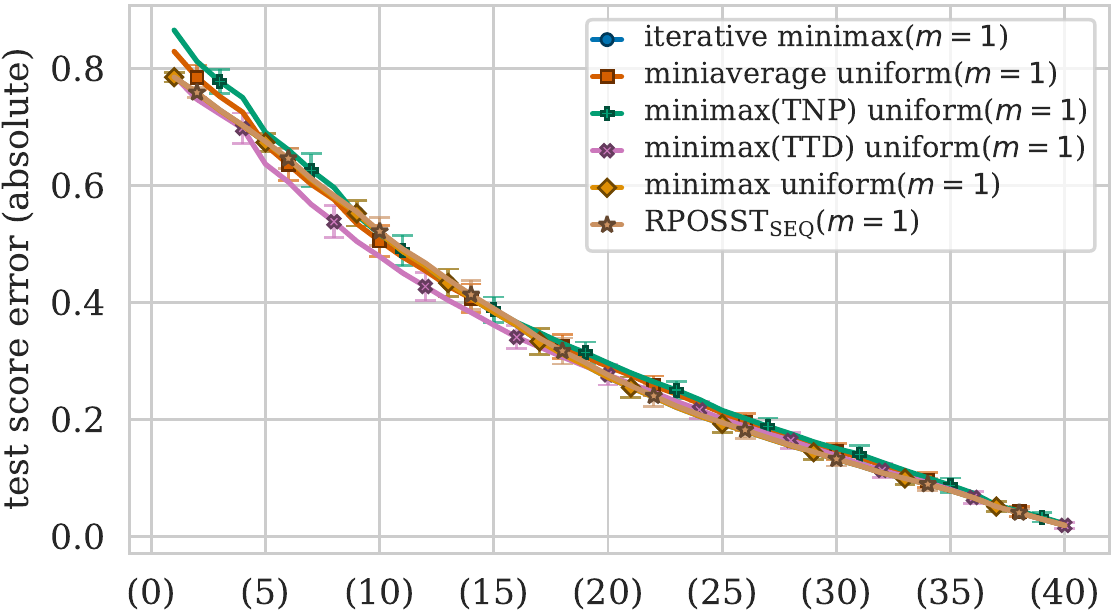}
    \end{minipage}\hfill \begin{minipage}[t]{0.32\linewidth}
        \includegraphics[width=\linewidth]{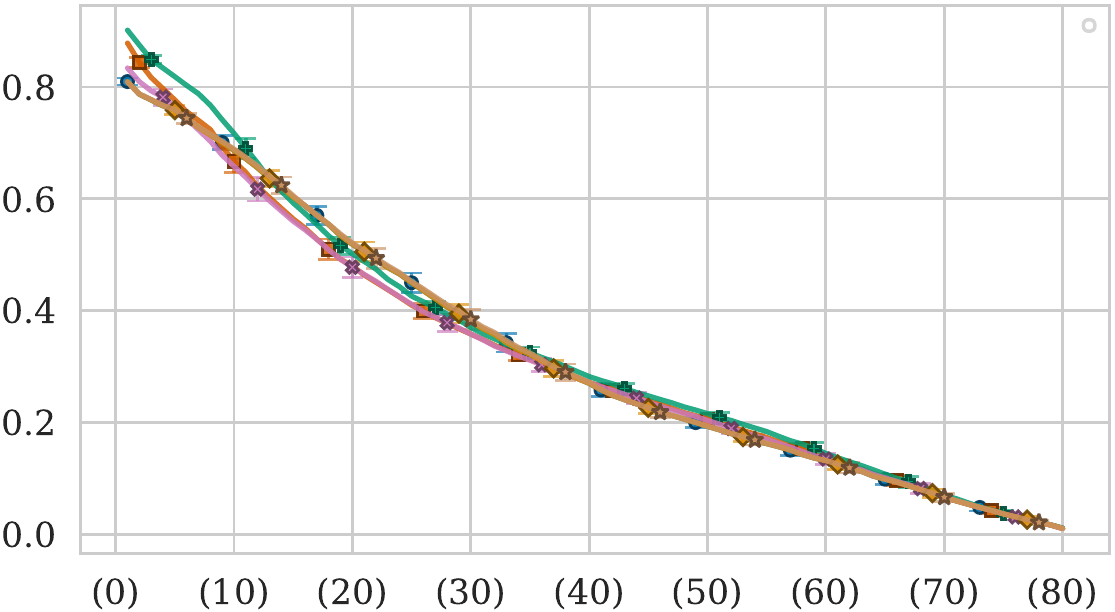}
    \end{minipage}\hfill \begin{minipage}[t]{0.32\linewidth}
        \includegraphics[width=\linewidth]{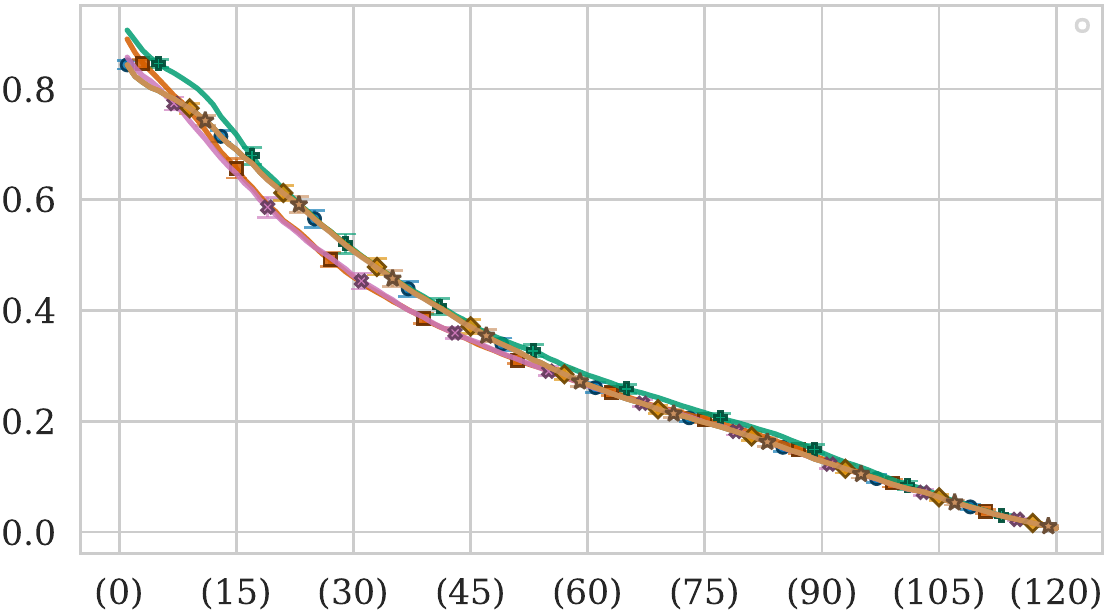}
    \end{minipage}\vspace{0.5em}

\begin{minipage}[t]{0.32\linewidth}
        \includegraphics[width=\linewidth]{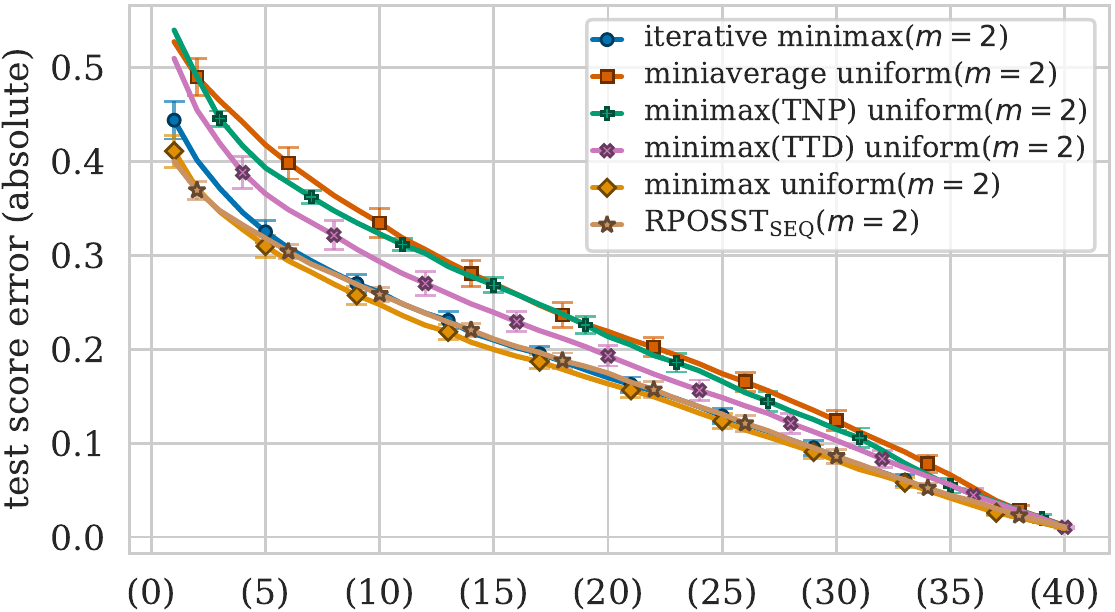}
    \end{minipage}\hfill \begin{minipage}[t]{0.32\linewidth}
        \includegraphics[width=\linewidth]{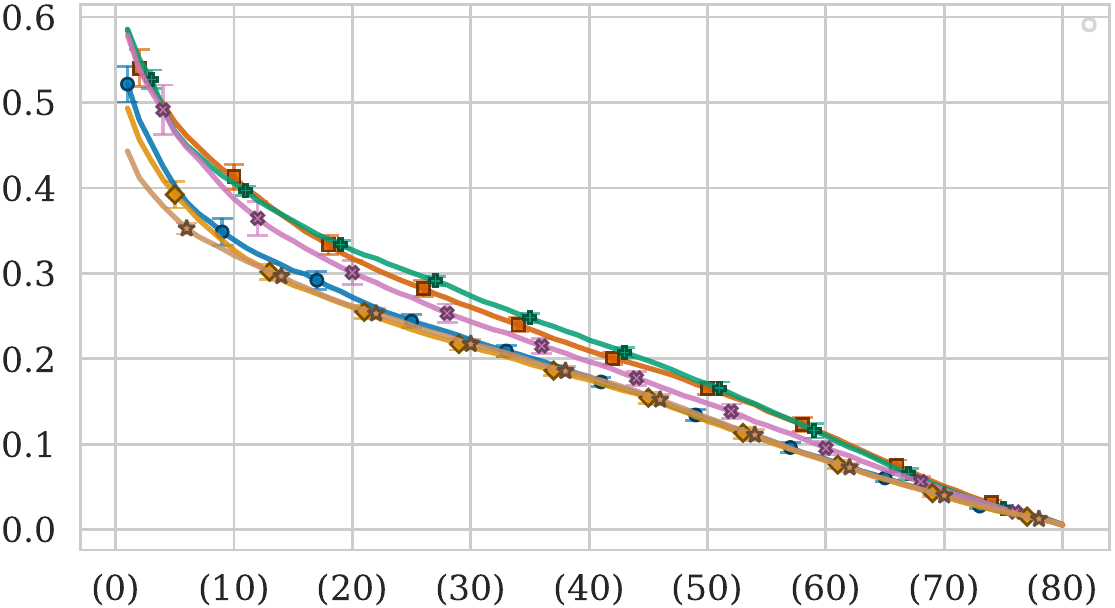}
    \end{minipage}\hfill \begin{minipage}[t]{0.32\linewidth}
        \includegraphics[width=\linewidth]{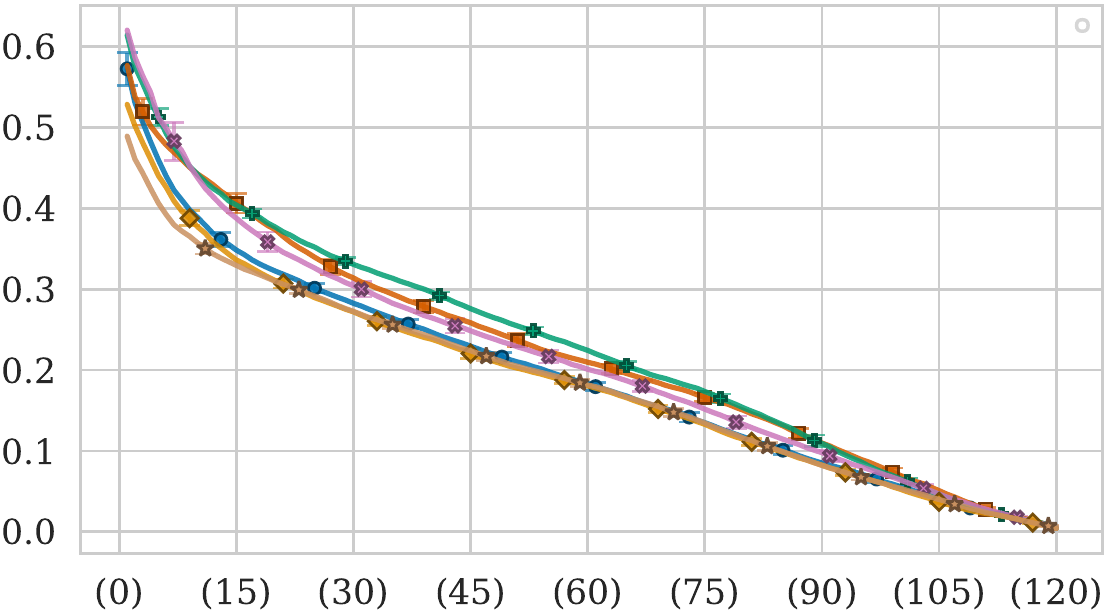}
    \end{minipage}\vspace{0.5em}

\begin{minipage}[t]{0.32\linewidth}
        \includegraphics[width=\linewidth]{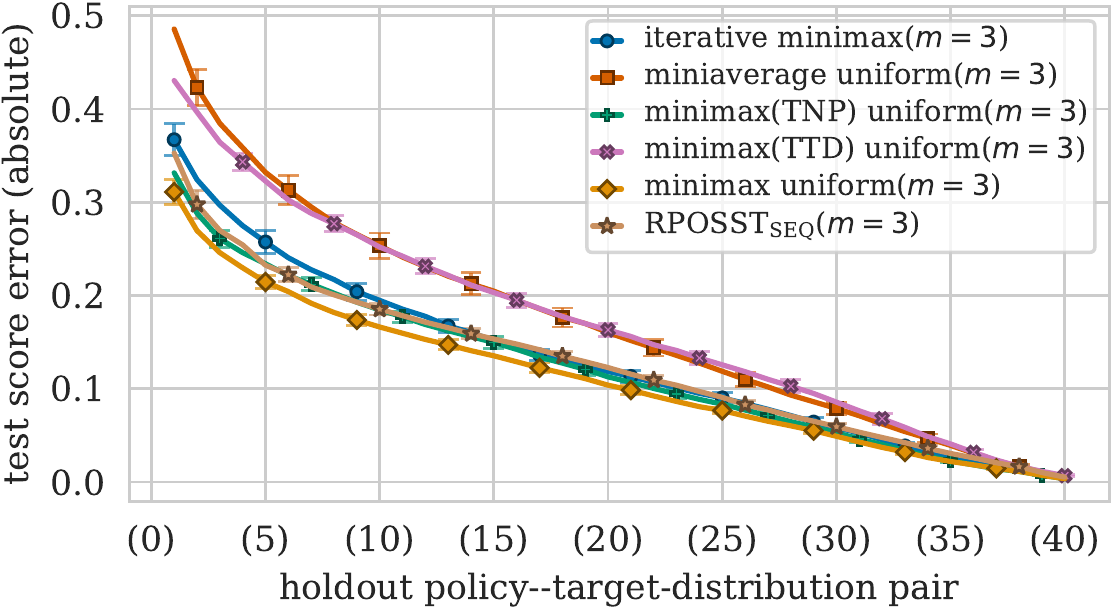}
    \end{minipage}\hfill \begin{minipage}[t]{0.32\linewidth}
        \includegraphics[width=\linewidth]{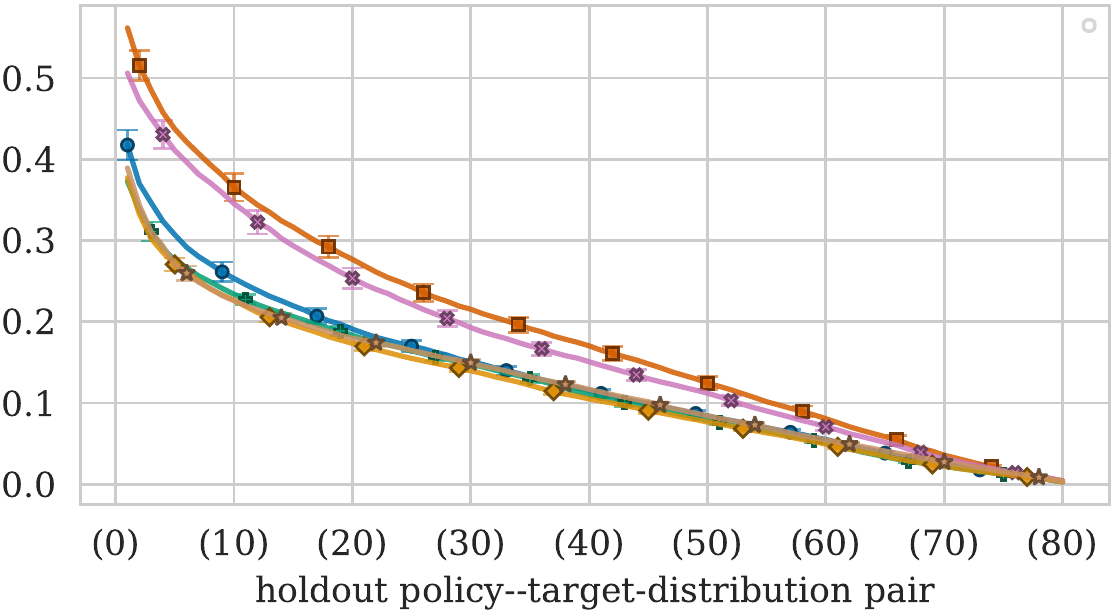}
    \end{minipage}\hfill \begin{minipage}[t]{0.32\linewidth}
        \includegraphics[width=\linewidth]{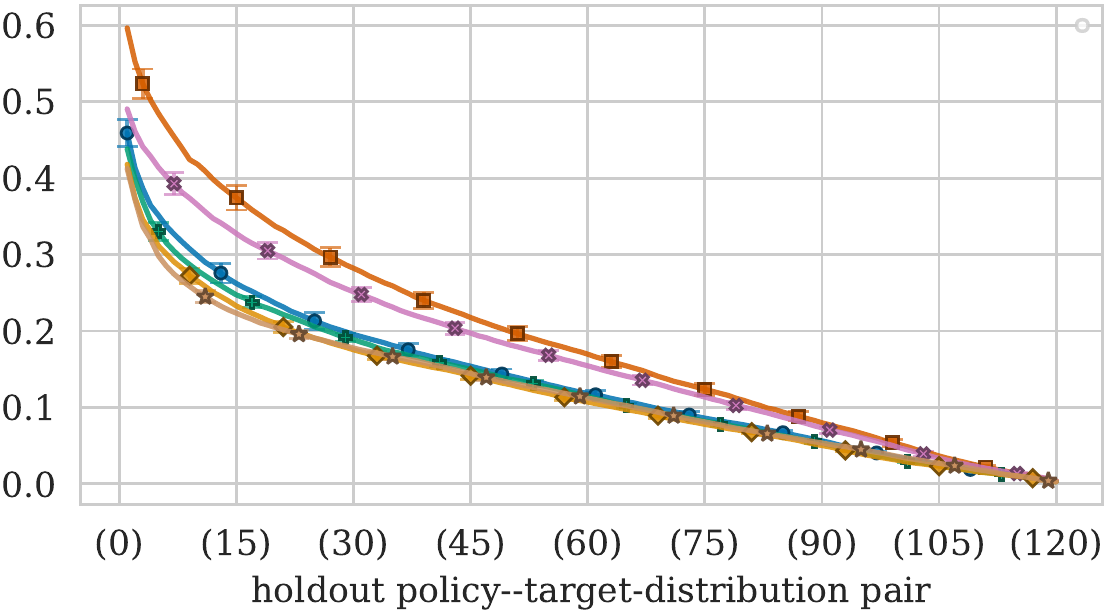}
    \end{minipage}\caption{Expected test score error (absolute difference) across holdout-policy--target-distribution pairs on Racing Arrows where test cases are 50 leader policies. Each row uses a different setting for the test size ($m = 1$ top, $m = 2$ middle, and $m = 3$ bottom) and each column uses a different holdout proportion ($20\%$ held out in the left column, $40\%$ middle, and $60\%$ right). $\numHoldoutReplicas$ sets of holdout policies were sampled. Holdout-policy--target-distribution pairs are sorted according to test score error. Each RPOSST$_{\seqLabel}$ instance was run for $500$ rounds ($T = 500$). Errorbars represent $95\%$ t-distribution confidence intervals.}
    \label{fig:racing-arrows-l-50}
\end{figure*}

\begin{figure*}[t]
\begin{minipage}[t]{0.32\linewidth}
        \includegraphics[width=\linewidth]{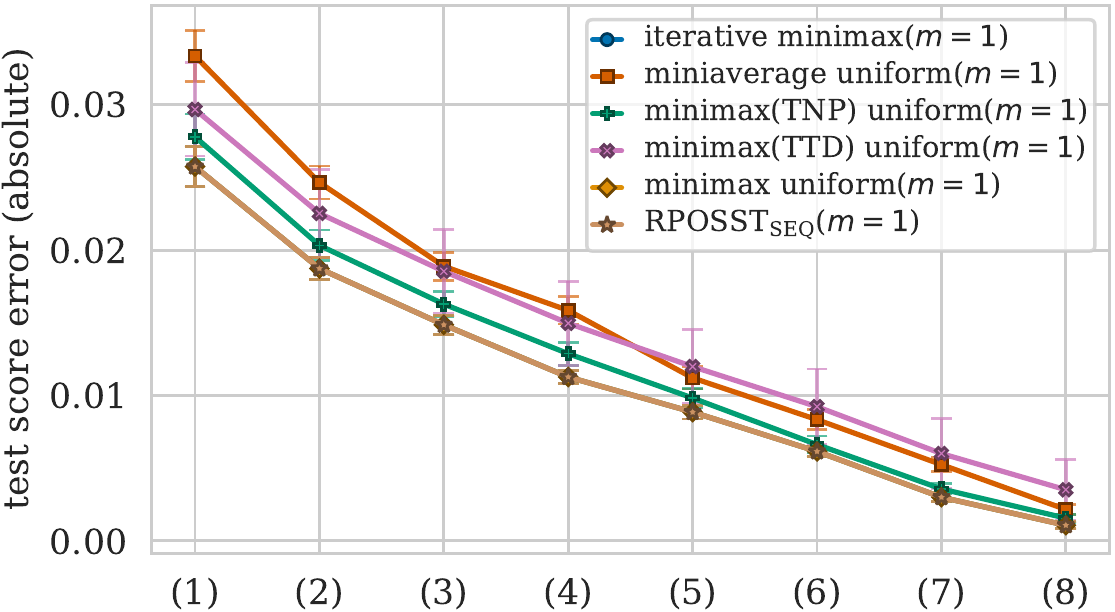}
    \end{minipage}\hfill \begin{minipage}[t]{0.32\linewidth}
        \includegraphics[width=\linewidth]{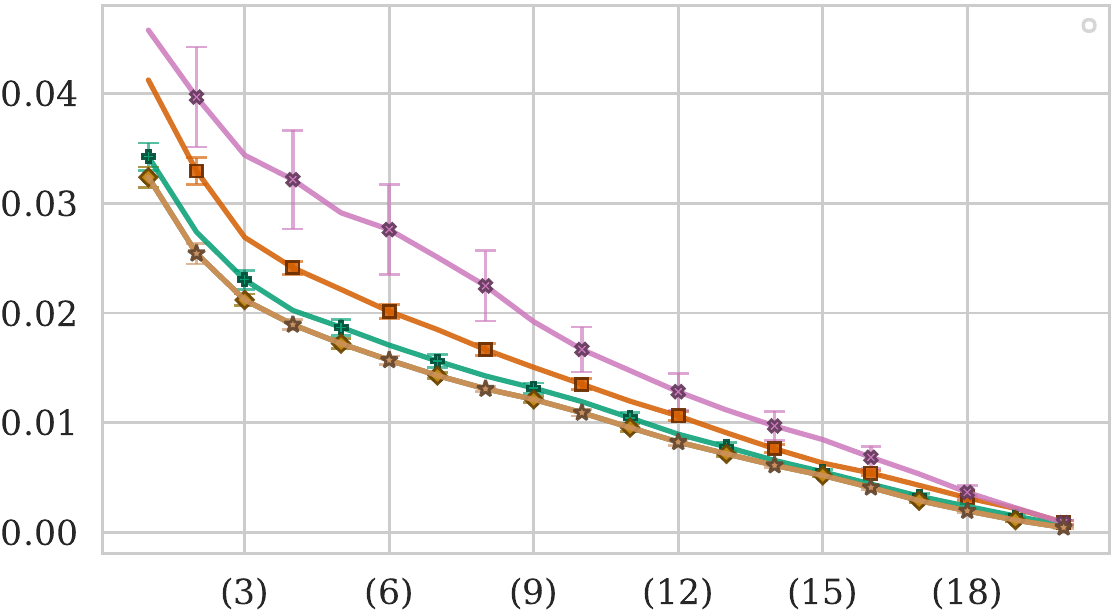}
    \end{minipage}\hfill \begin{minipage}[t]{0.32\linewidth}
        \includegraphics[width=\linewidth]{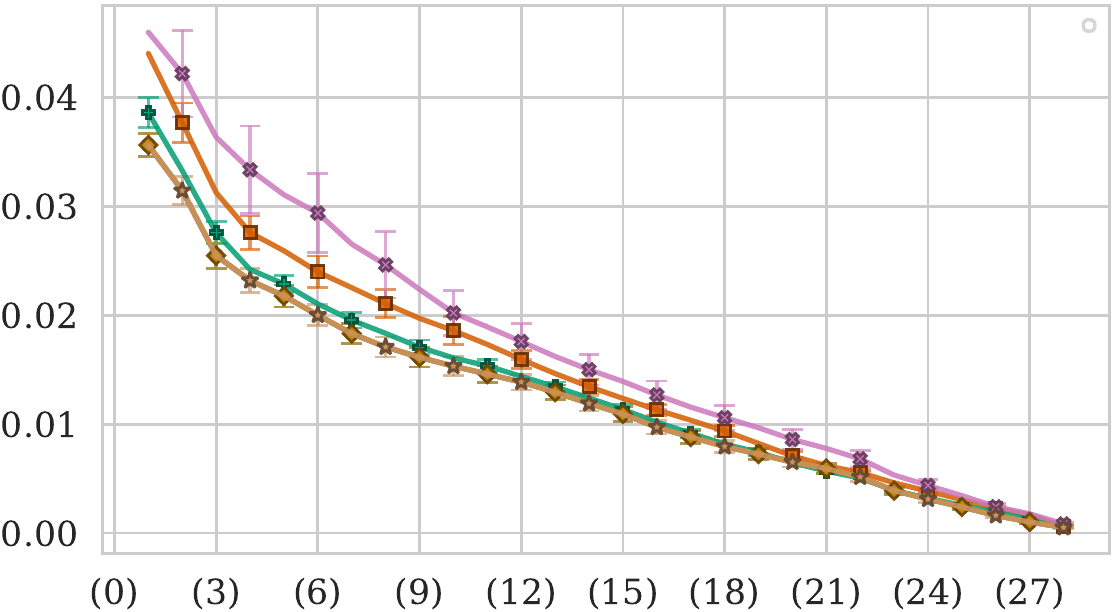}
    \end{minipage}\vspace{0.5em}

\begin{minipage}[t]{0.32\linewidth}
        \includegraphics[width=\linewidth]{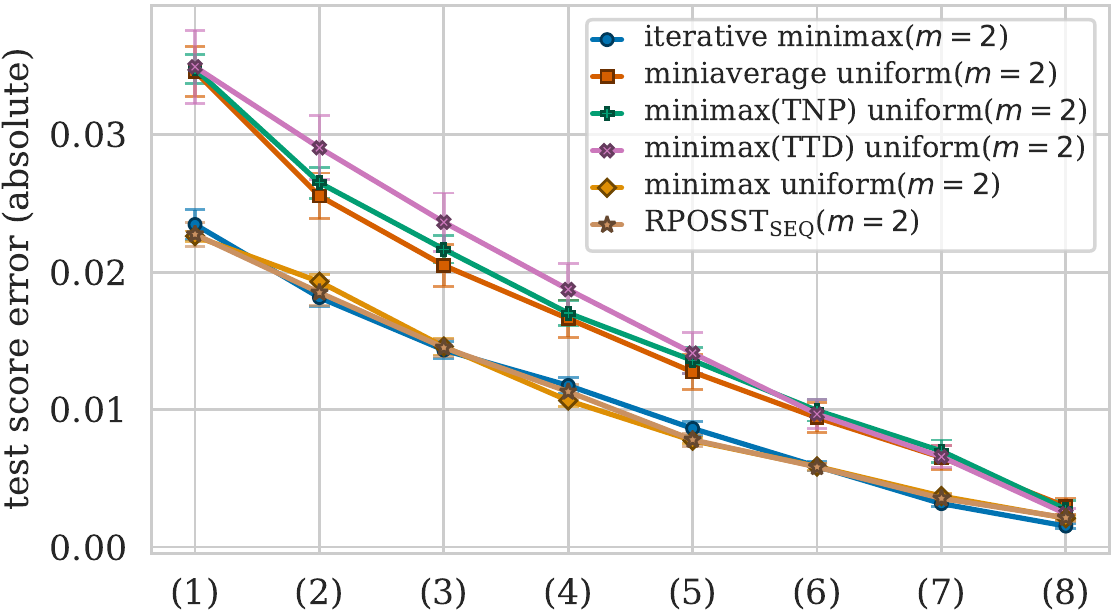}
    \end{minipage}\hfill \begin{minipage}[t]{0.32\linewidth}
        \includegraphics[width=\linewidth]{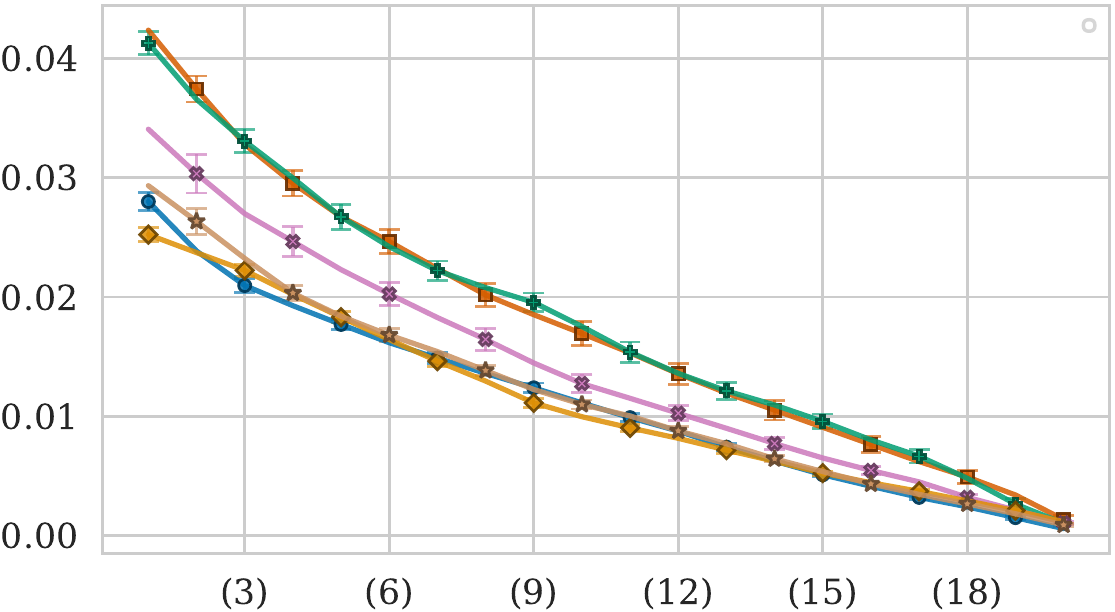}
    \end{minipage}\hfill \begin{minipage}[t]{0.32\linewidth}
        \includegraphics[width=\linewidth]{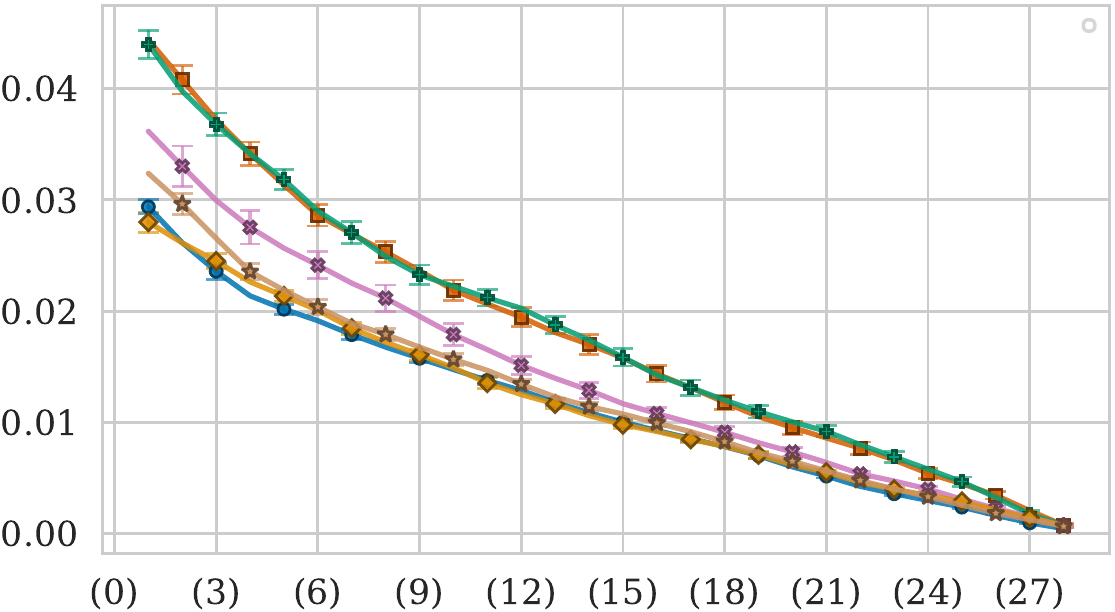}
    \end{minipage}\vspace{0.5em}

\begin{minipage}[t]{0.32\linewidth}
        \includegraphics[width=\linewidth]{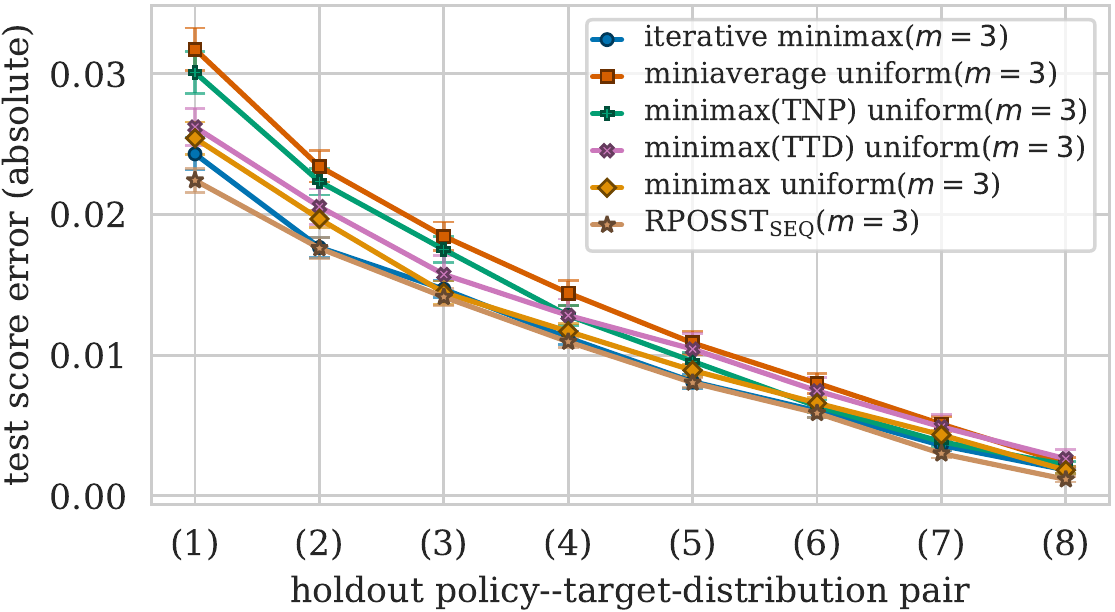}
    \end{minipage}\hfill \begin{minipage}[t]{0.32\linewidth}
        \includegraphics[width=\linewidth]{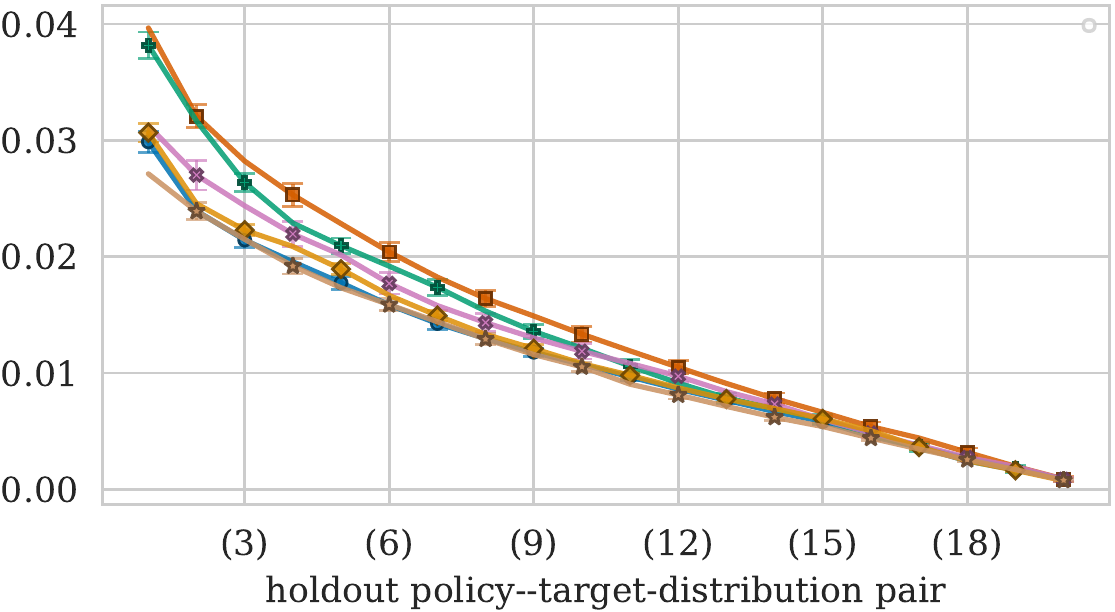}
    \end{minipage}\hfill \begin{minipage}[t]{0.32\linewidth}
        \includegraphics[width=\linewidth]{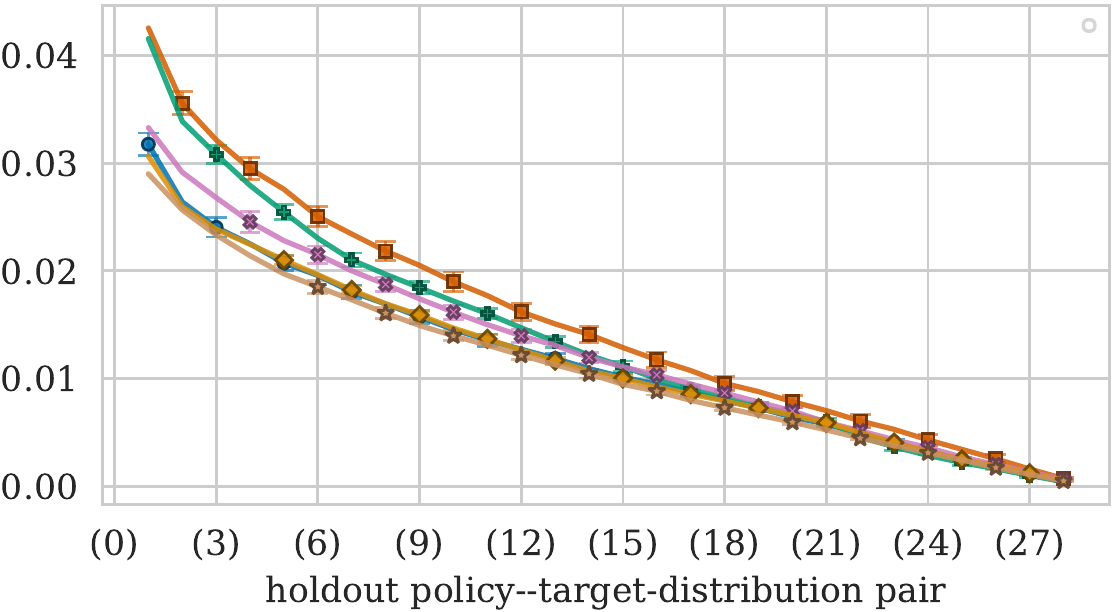}
    \end{minipage}\caption{Expected test score error (absolute difference) across holdout-policy--target-distribution pairs on the ACPC 2012 data. Each row uses a different setting for the test size ($m = 1$ top, $m = 2$ middle, and $m = 3$ bottom) and each column uses a different holdout proportion ($20\%$ held out in the left column, $40\%$ middle, and $60\%$ right). $\numHoldoutReplicas$ sets of holdout policies were sampled. Holdout-policy--target-distribution pairs are sorted according to test score error. Each RPOSST$_{\seqLabel}$ instance was run for $500$ rounds ($T = 500$). Errorbars represent $95\%$ t-distribution confidence intervals.}
    \label{fig:acpc2012}
\end{figure*}

\begin{figure*}[t]
\begin{minipage}[t]{0.32\linewidth}
        \includegraphics[width=\linewidth]{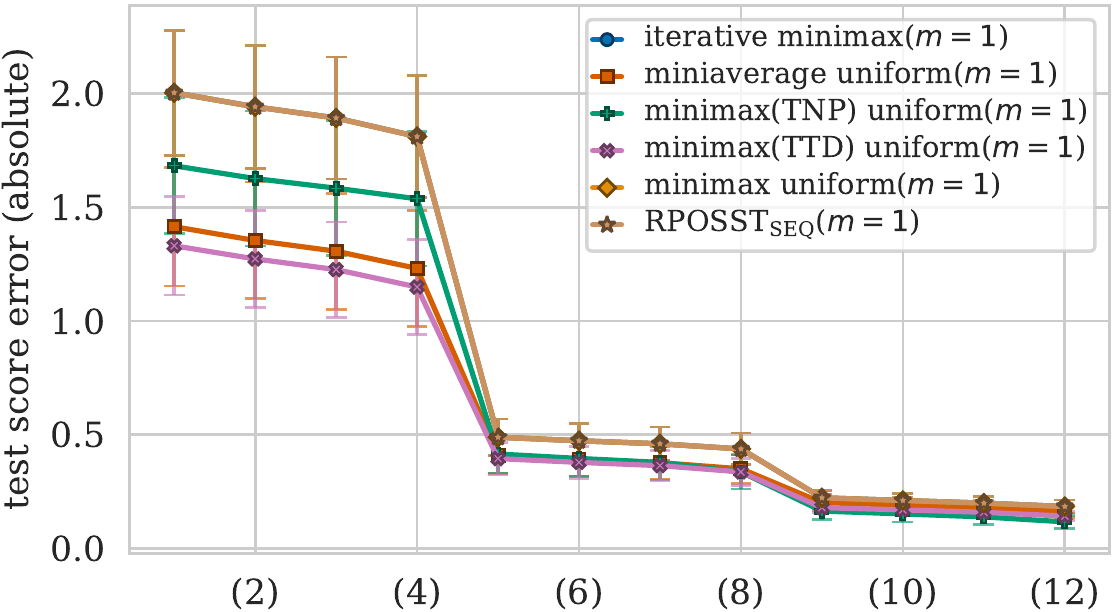}
    \end{minipage}\hfill \begin{minipage}[t]{0.32\linewidth}
        \includegraphics[width=\linewidth]{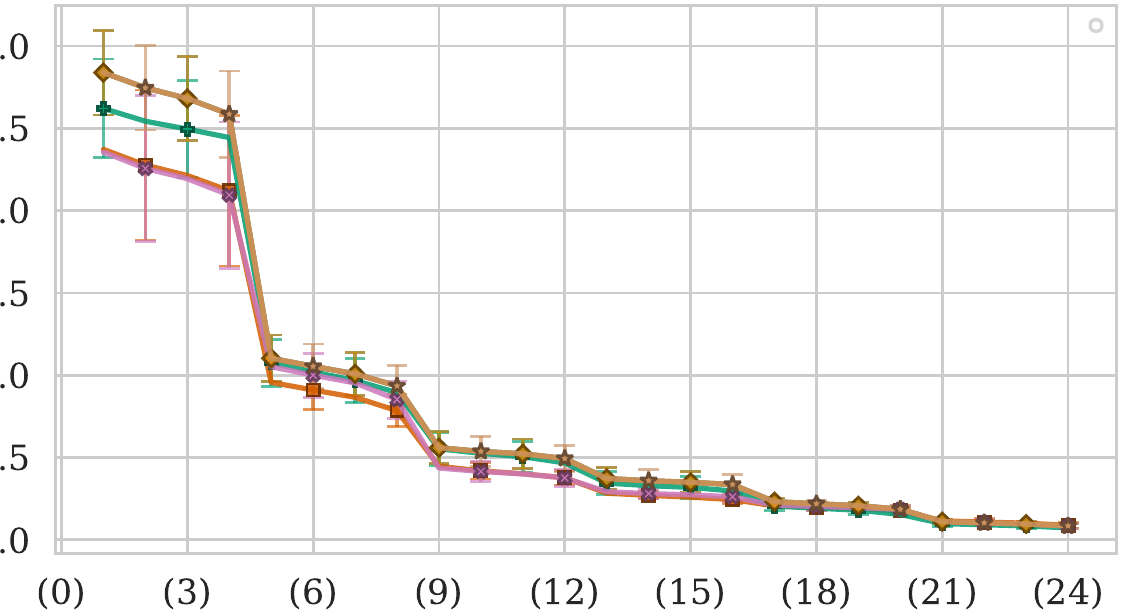}
    \end{minipage}\hfill \begin{minipage}[t]{0.32\linewidth}
        \includegraphics[width=\linewidth]{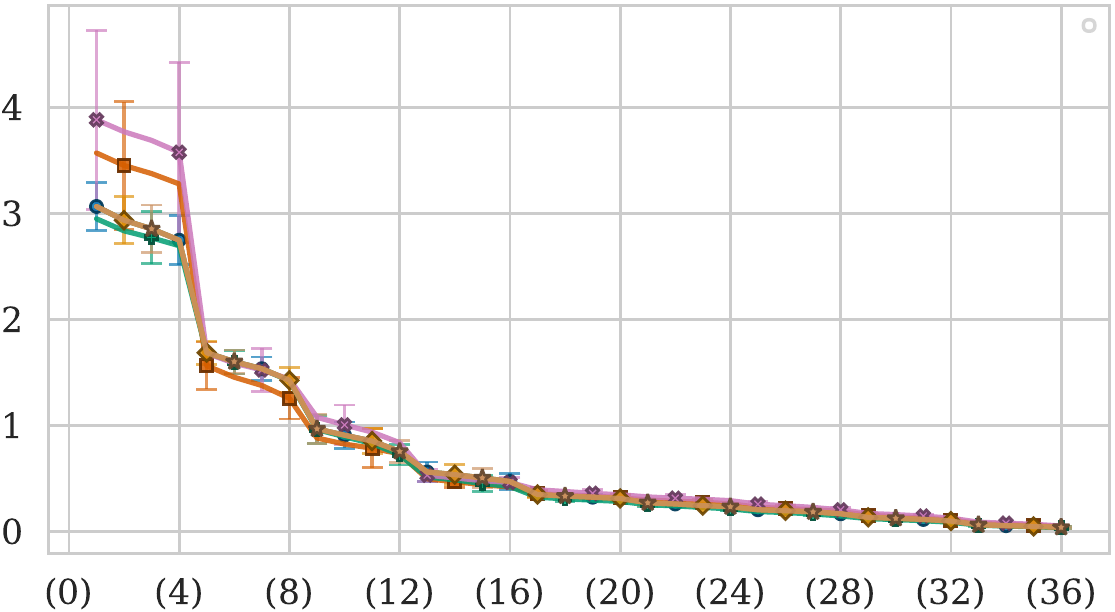}
    \end{minipage}\vspace{0.5em}

\begin{minipage}[t]{0.32\linewidth}
        \includegraphics[width=\linewidth]{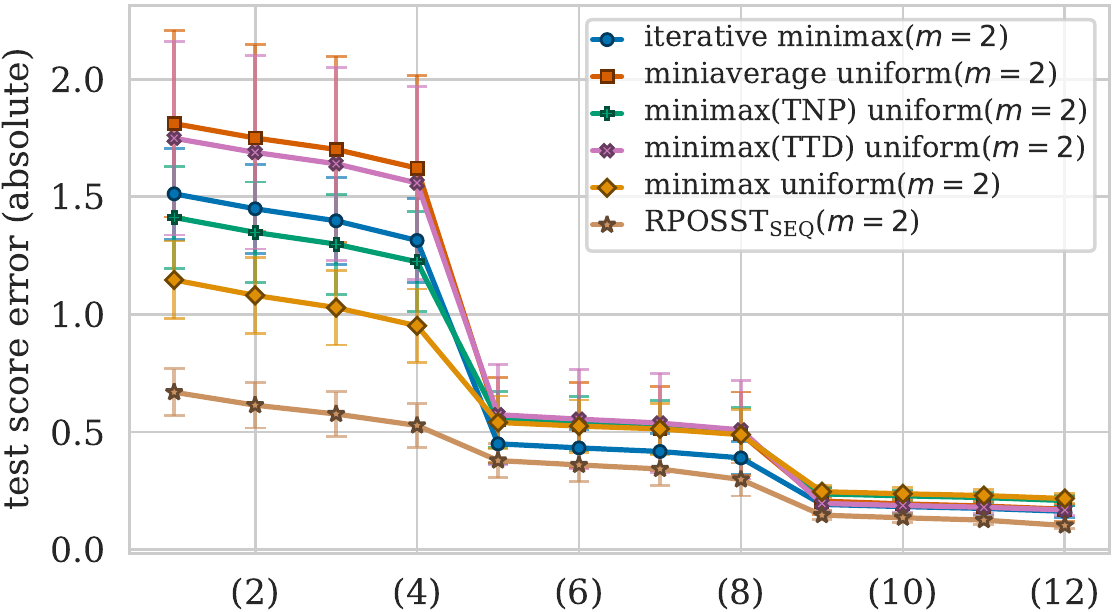}
    \end{minipage}\hfill \begin{minipage}[t]{0.32\linewidth}
        \includegraphics[width=\linewidth]{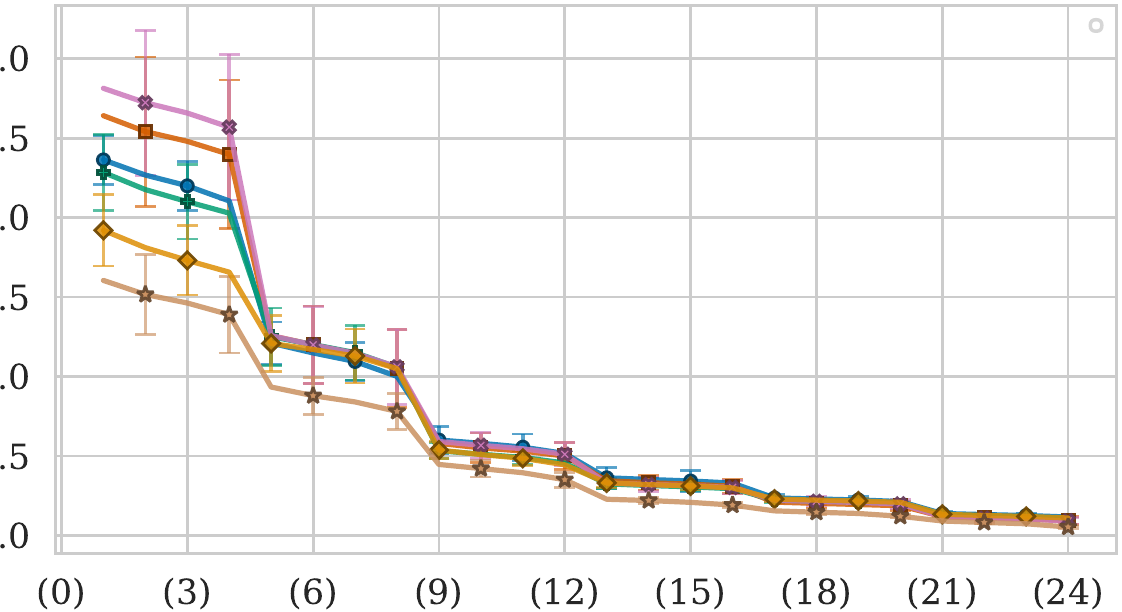}
    \end{minipage}\hfill \begin{minipage}[t]{0.32\linewidth}
        \includegraphics[width=\linewidth]{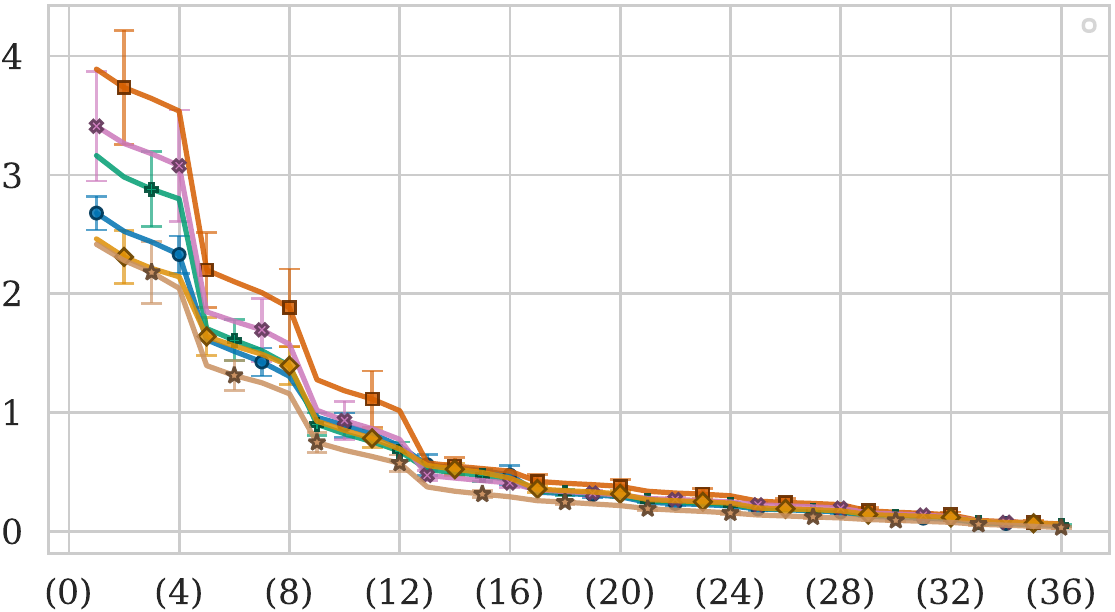}
    \end{minipage}\vspace{0.5em}

\begin{minipage}[t]{0.32\linewidth}
        \includegraphics[width=\linewidth]{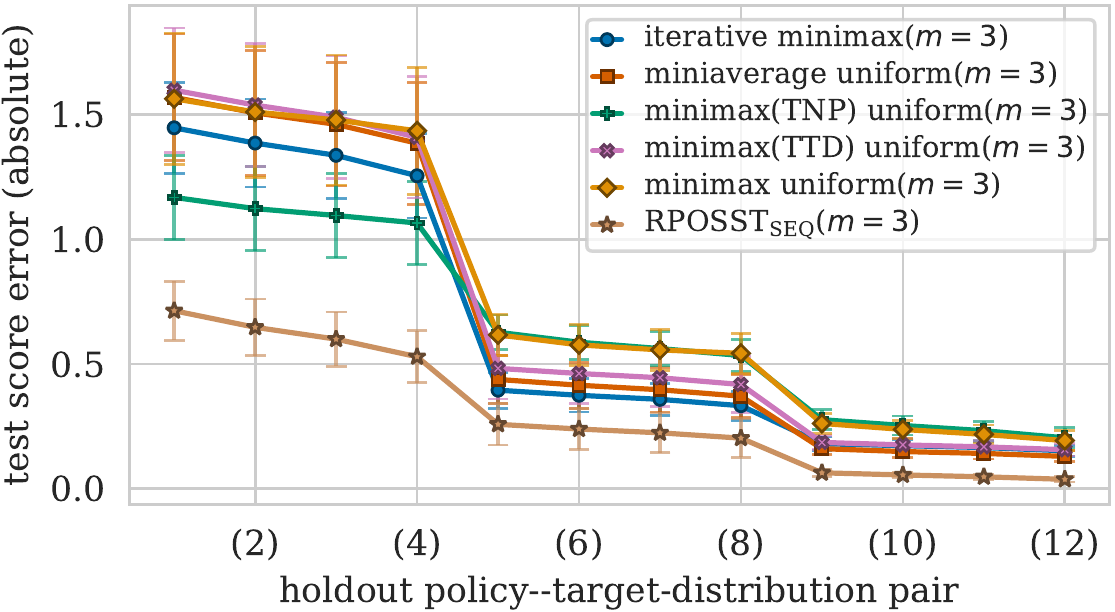}
    \end{minipage}\hfill \begin{minipage}[t]{0.32\linewidth}
        \includegraphics[width=\linewidth]{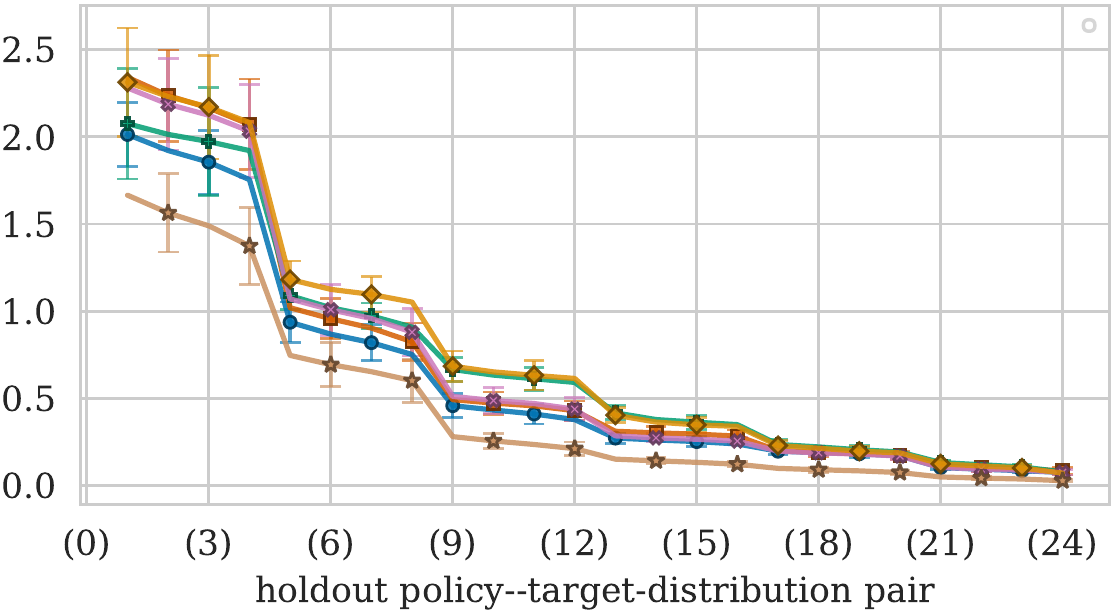}
    \end{minipage}\hfill \begin{minipage}[t]{0.32\linewidth}
        \includegraphics[width=\linewidth]{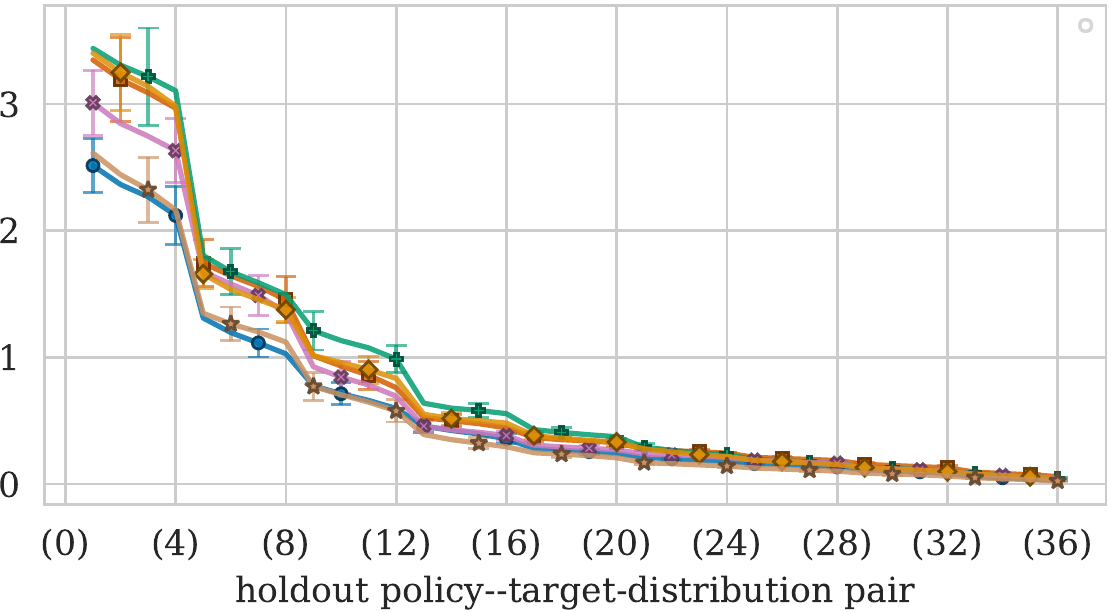}
    \end{minipage}\caption{Expected test score error (absolute difference) across holdout-policy--target-distribution pairs on the ACPC 2017 data. Each row uses a different setting for the test size ($m = 1$ top, $m = 2$ middle, and $m = 3$ bottom) and each column uses a different holdout proportion ($20\%$ held out in the left column, $40\%$ middle, and $60\%$ right). $\numHoldoutReplicas$ sets of holdout policies were sampled. Holdout-policy--target-distribution pairs are sorted according to test score error. Each RPOSST$_{\seqLabel}$ instance was run for $500$ rounds ($T = 500$). Errorbars represent $95\%$ t-distribution confidence intervals.}
    \label{fig:acpc2017}
\end{figure*}

\begin{figure*}[t]
\begin{minipage}[t]{0.32\linewidth}
        \includegraphics[width=\linewidth]{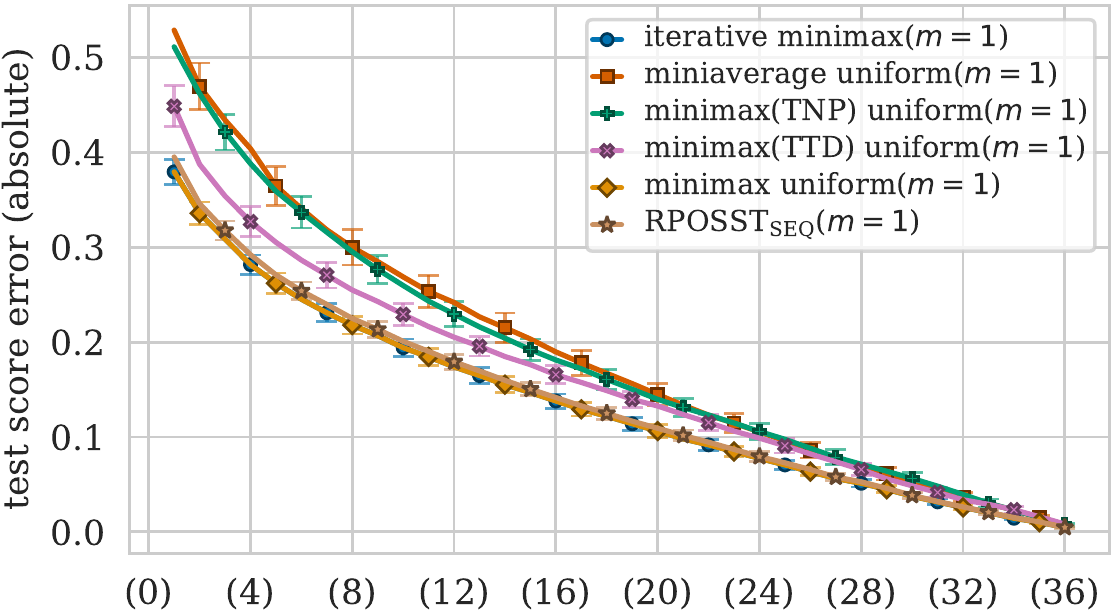}
    \end{minipage}\hfill \begin{minipage}[t]{0.32\linewidth}
        \includegraphics[width=\linewidth]{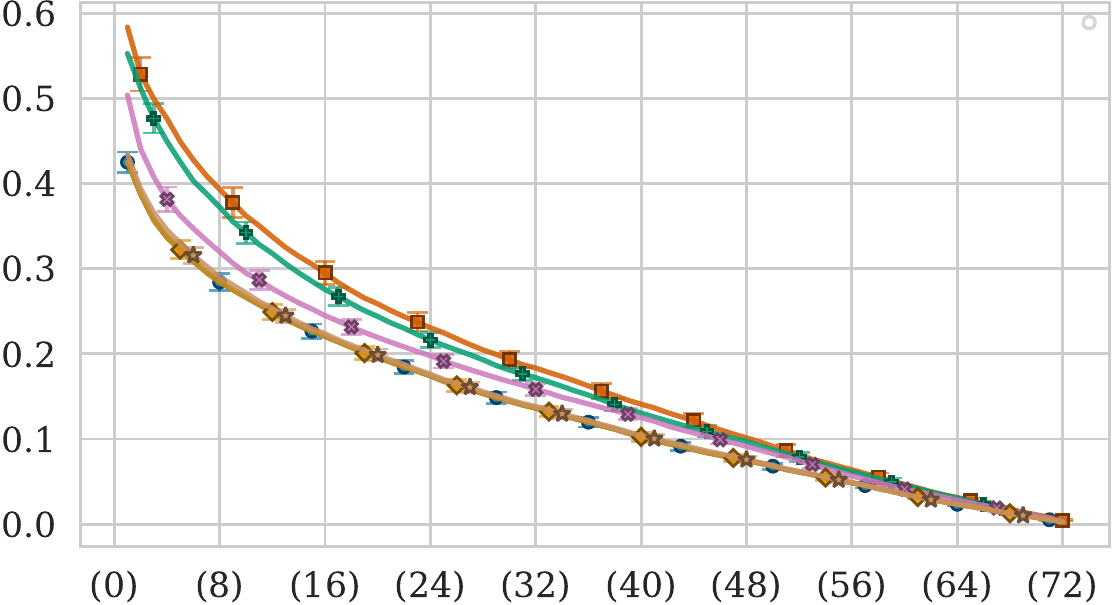}
    \end{minipage}\hfill \begin{minipage}[t]{0.32\linewidth}
        \includegraphics[width=\linewidth]{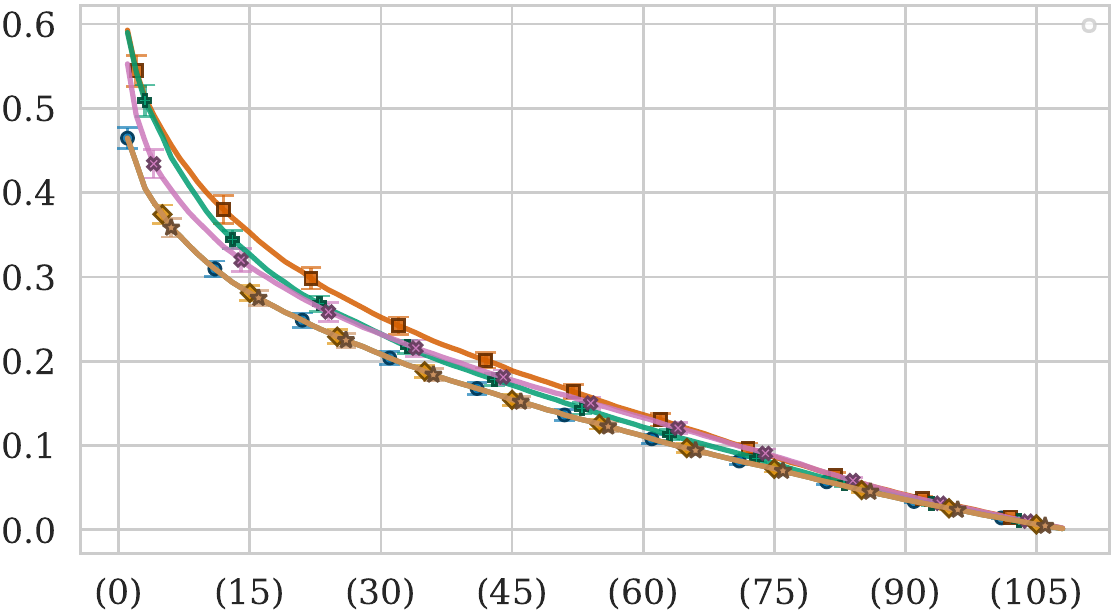}
    \end{minipage}\vspace{0.5em}

\begin{minipage}[t]{0.32\linewidth}
        \includegraphics[width=\linewidth]{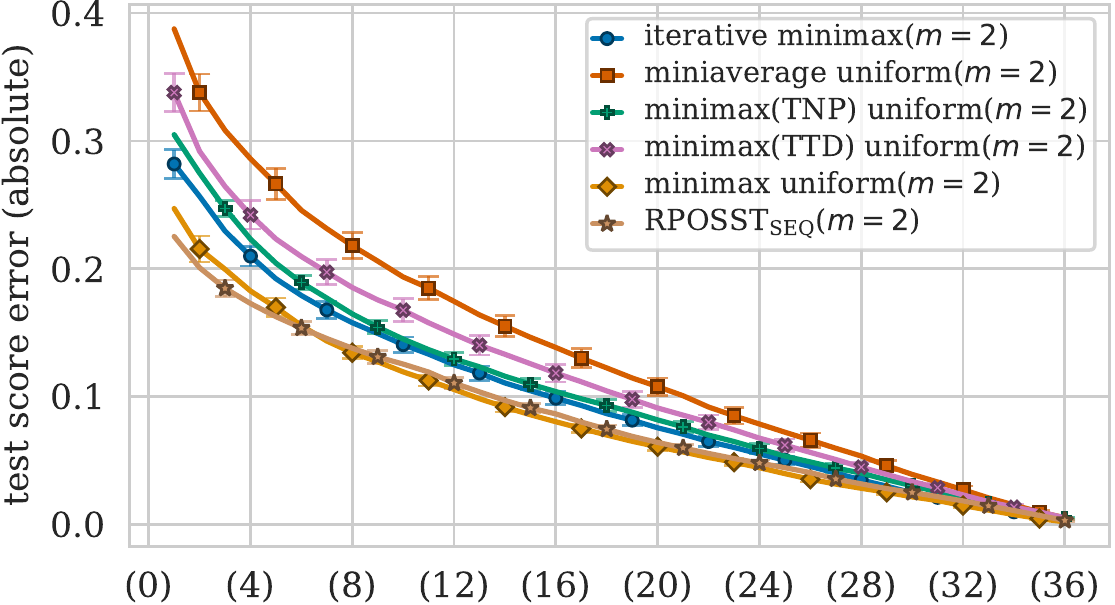}
    \end{minipage}\hfill \begin{minipage}[t]{0.32\linewidth}
        \includegraphics[width=\linewidth]{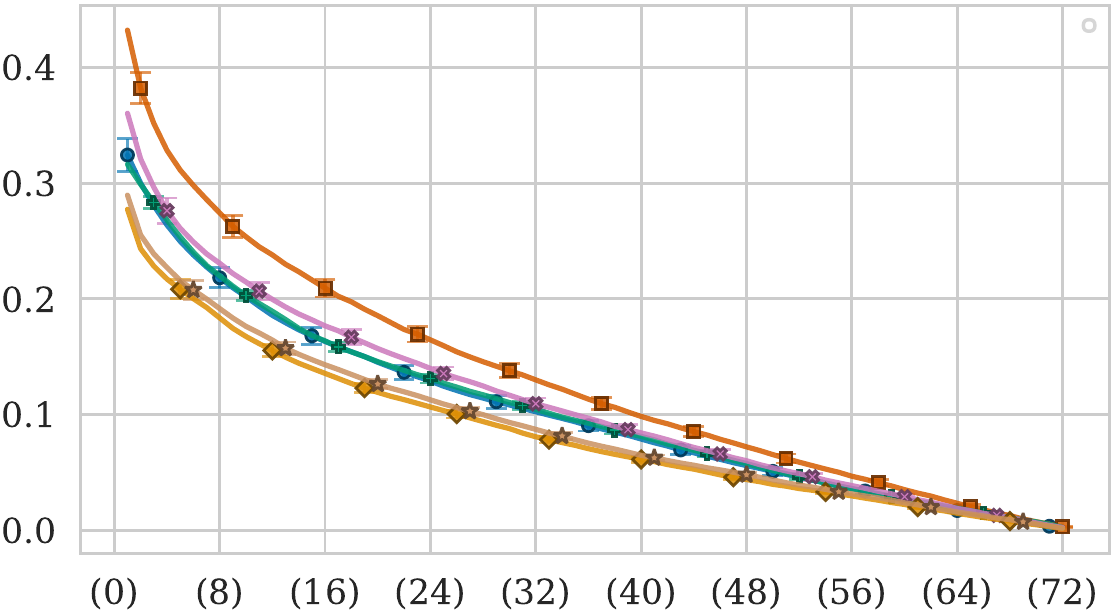}
    \end{minipage}\hfill \begin{minipage}[t]{0.32\linewidth}
        \includegraphics[width=\linewidth]{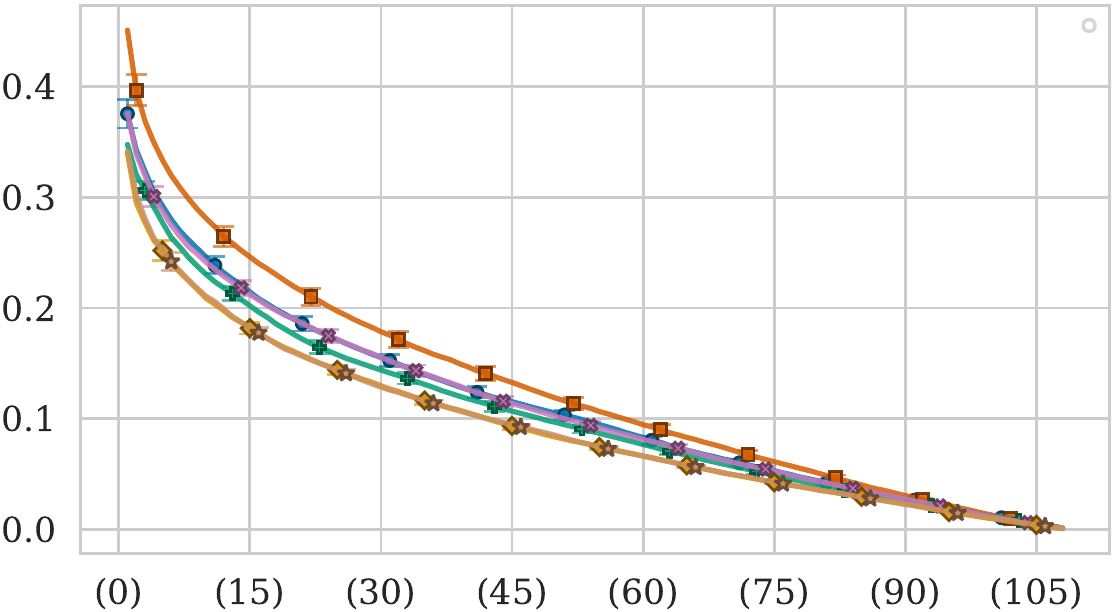}
    \end{minipage}\vspace{0.5em}

\begin{minipage}[t]{0.32\linewidth}
        \includegraphics[width=\linewidth]{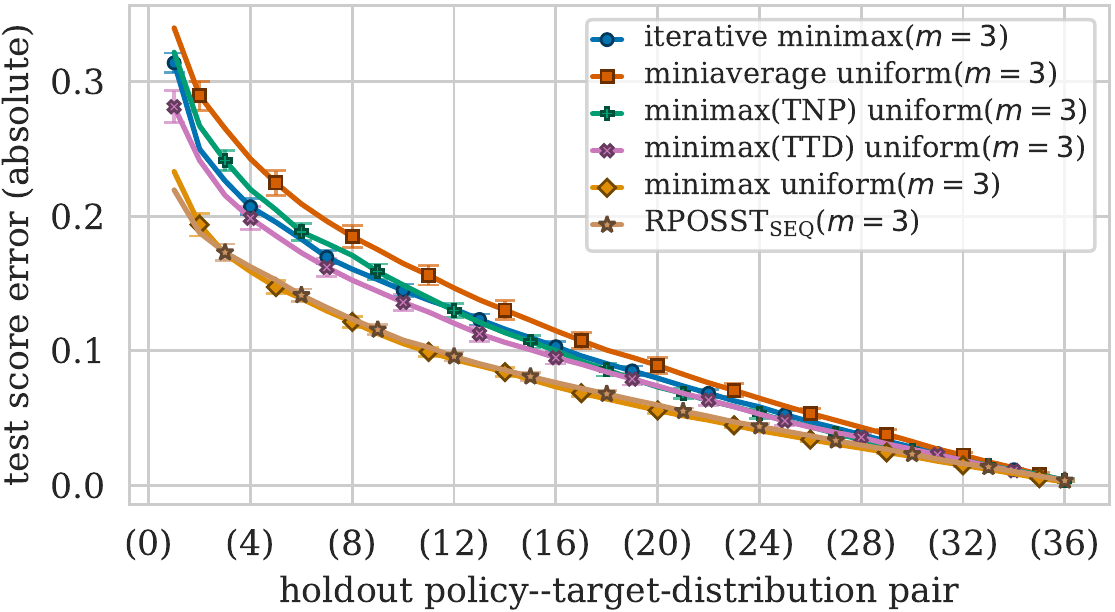}
    \end{minipage}\hfill \begin{minipage}[t]{0.32\linewidth}
        \includegraphics[width=\linewidth]{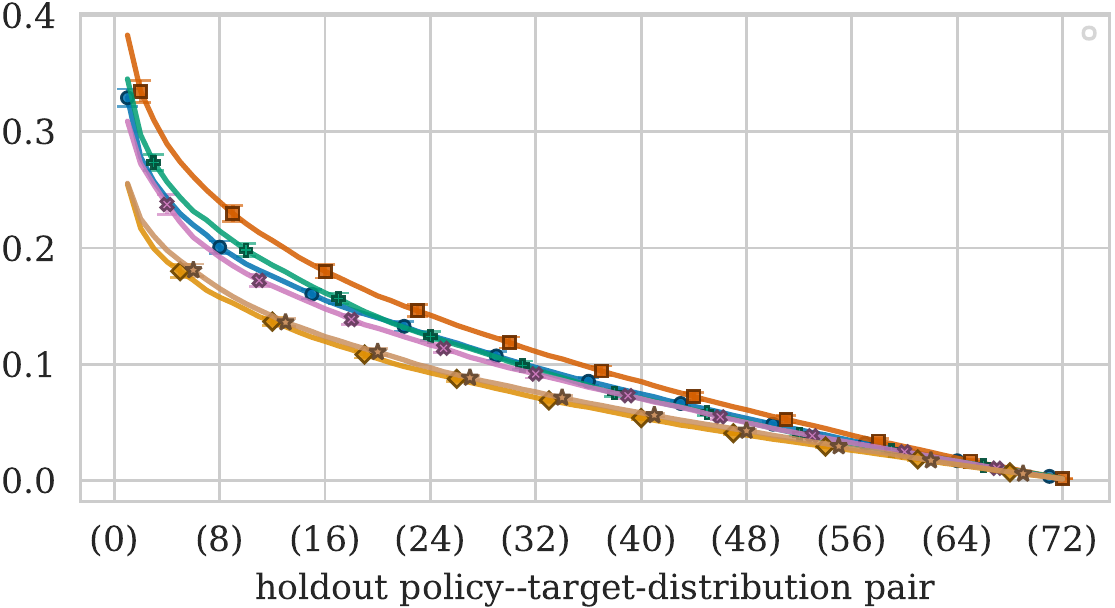}
    \end{minipage}\hfill \begin{minipage}[t]{0.32\linewidth}
        \includegraphics[width=\linewidth]{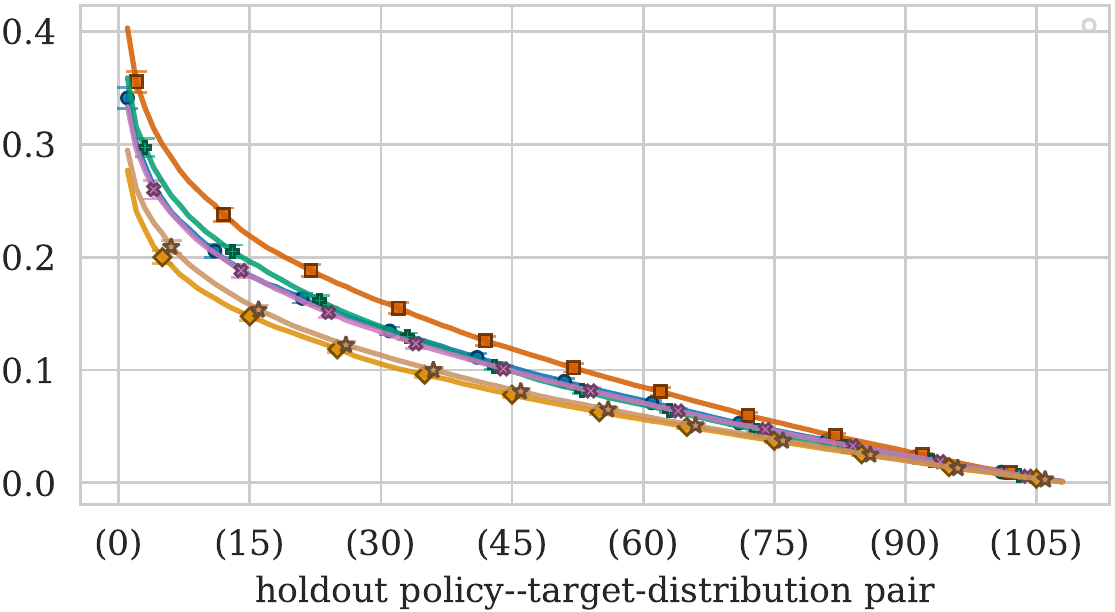}
    \end{minipage}\caption{Expected test score error (absolute difference) across holdout-policy--target-distribution pairs in the winrate GT domain. Each row uses a different setting for the test size ($m = 1$ top, $m = 2$ middle, and $m = 3$ bottom) and each column uses a different holdout proportion ($20\%$ held out in the left column, $40\%$ middle, and $60\%$ right). $\numHoldoutReplicas$ sets of holdout policies were sampled. Holdout-policy--target-distribution pairs are sorted according to test score error. Each RPOSST$_{\seqLabel}$ instance was run for $500$ rounds ($T = 500$). Errorbars represent $95\%$ t-distribution confidence intervals.}
    \label{fig:gt-large-matrix}
\end{figure*}

\begin{figure*}[t]
\begin{minipage}[t]{0.32\linewidth}
        \includegraphics[width=\linewidth]{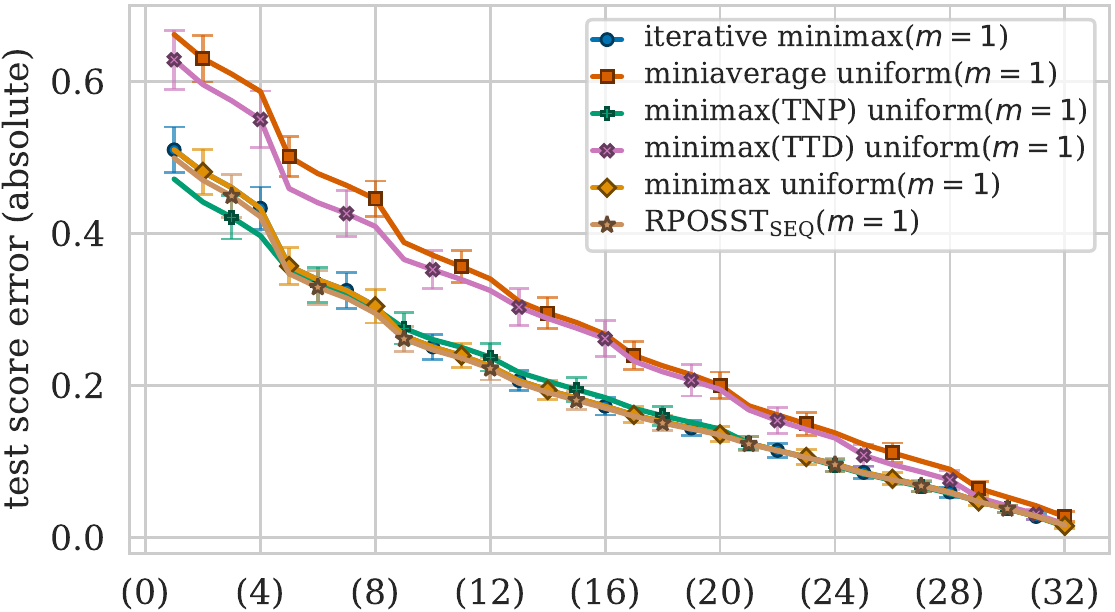}
    \end{minipage}\hfill \begin{minipage}[t]{0.32\linewidth}
        \includegraphics[width=\linewidth]{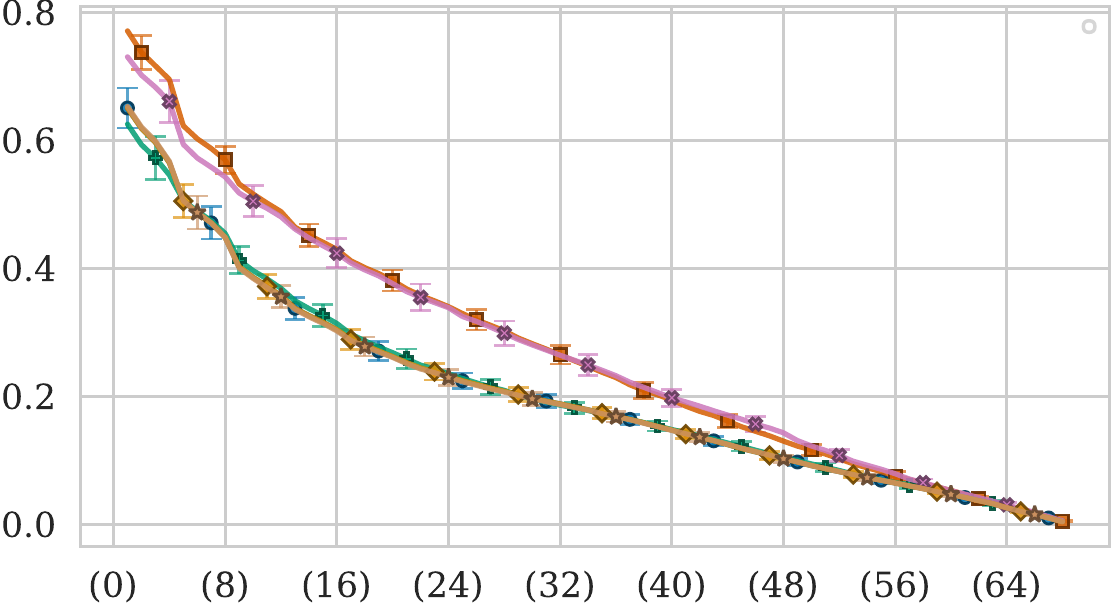}
    \end{minipage}\hfill \begin{minipage}[t]{0.32\linewidth}
        \includegraphics[width=\linewidth]{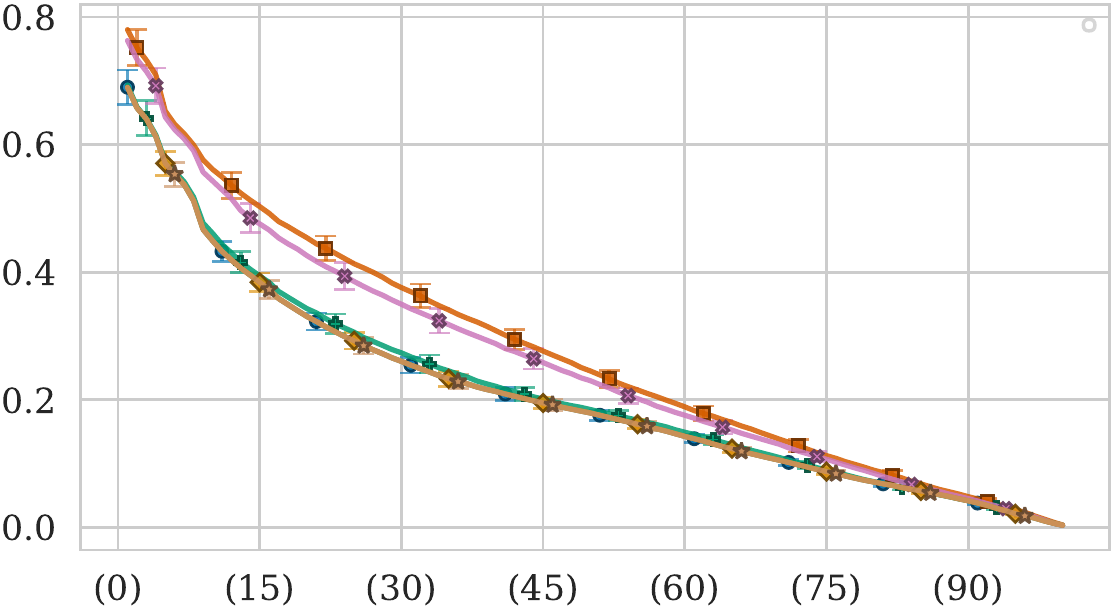}
    \end{minipage}\vspace{0.5em}

\begin{minipage}[t]{0.32\linewidth}
        \includegraphics[width=\linewidth]{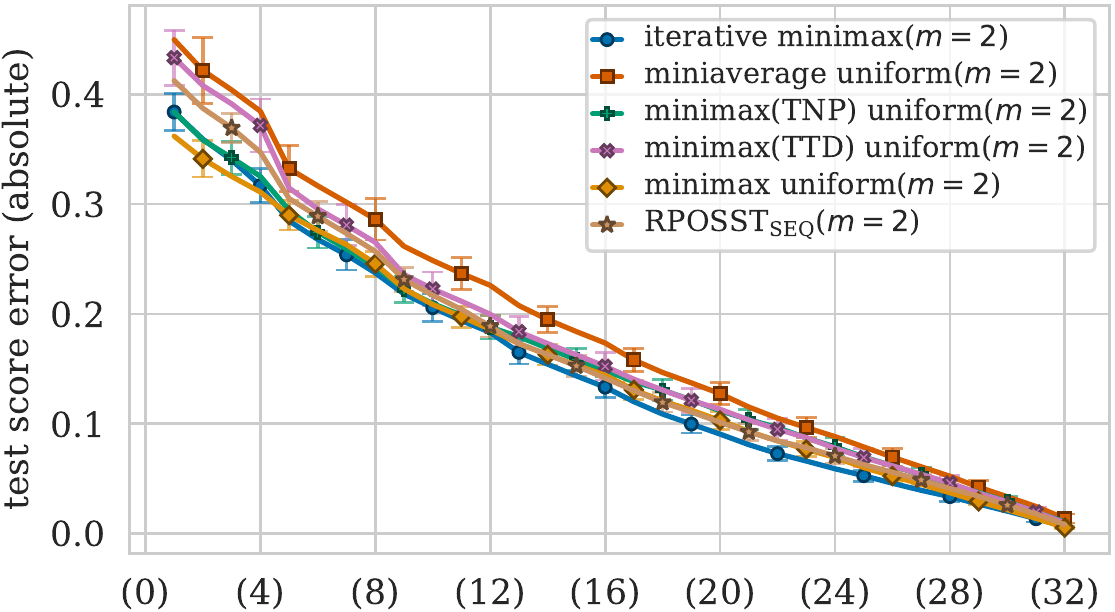}
    \end{minipage}\hfill \begin{minipage}[t]{0.32\linewidth}
        \includegraphics[width=\linewidth]{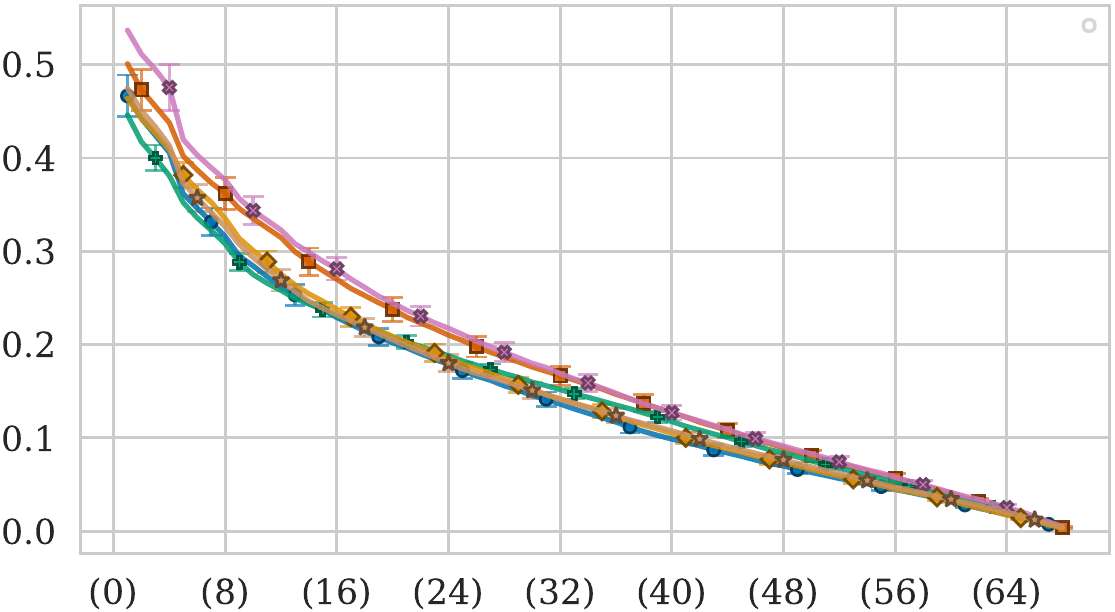}
    \end{minipage}\hfill \begin{minipage}[t]{0.32\linewidth}
        \includegraphics[width=\linewidth]{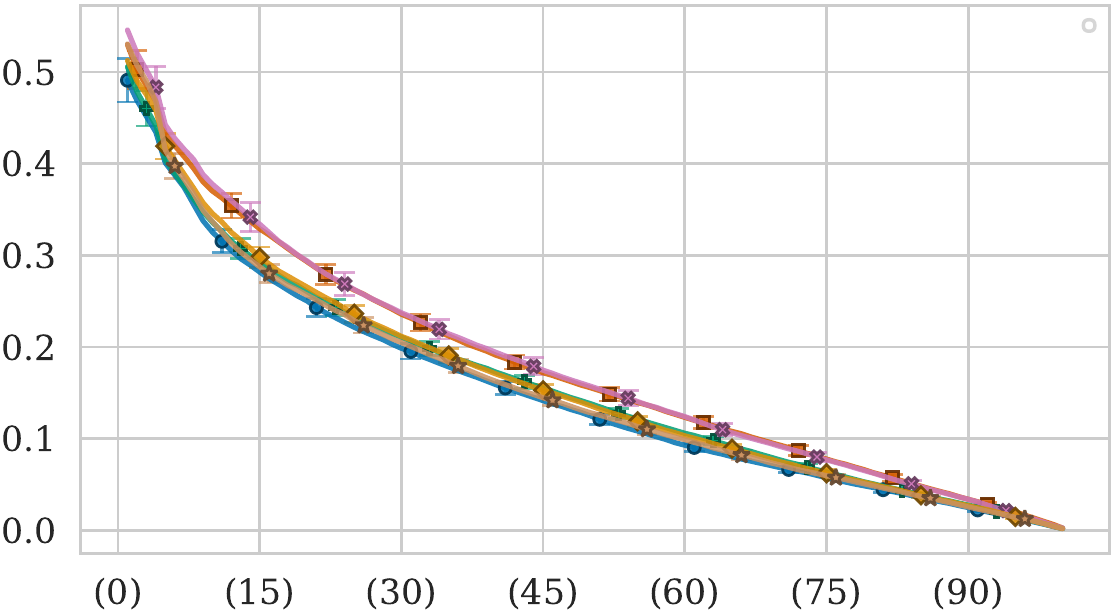}
    \end{minipage}\vspace{0.5em}

\begin{minipage}[t]{0.32\linewidth}
        \includegraphics[width=\linewidth]{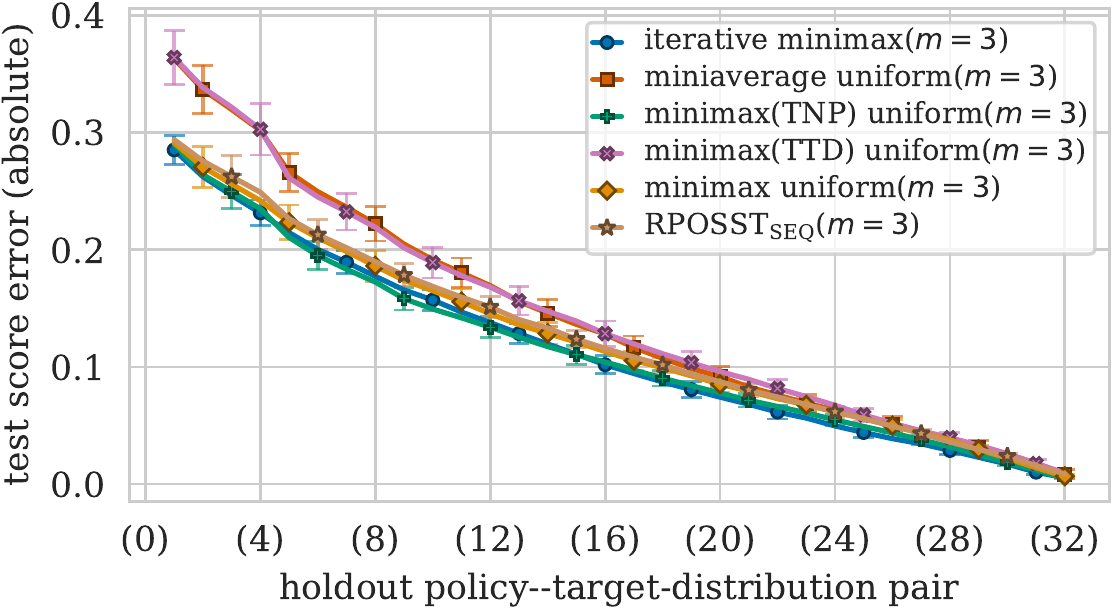}
    \end{minipage}\hfill \begin{minipage}[t]{0.32\linewidth}
        \includegraphics[width=\linewidth]{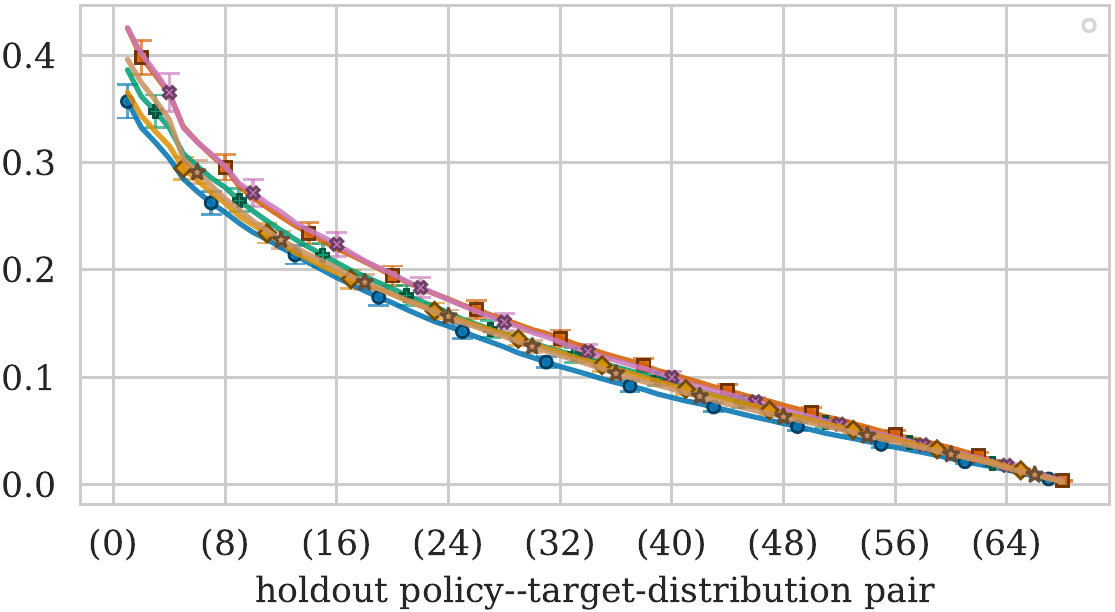}
    \end{minipage}\hfill \begin{minipage}[t]{0.32\linewidth}
        \includegraphics[width=\linewidth]{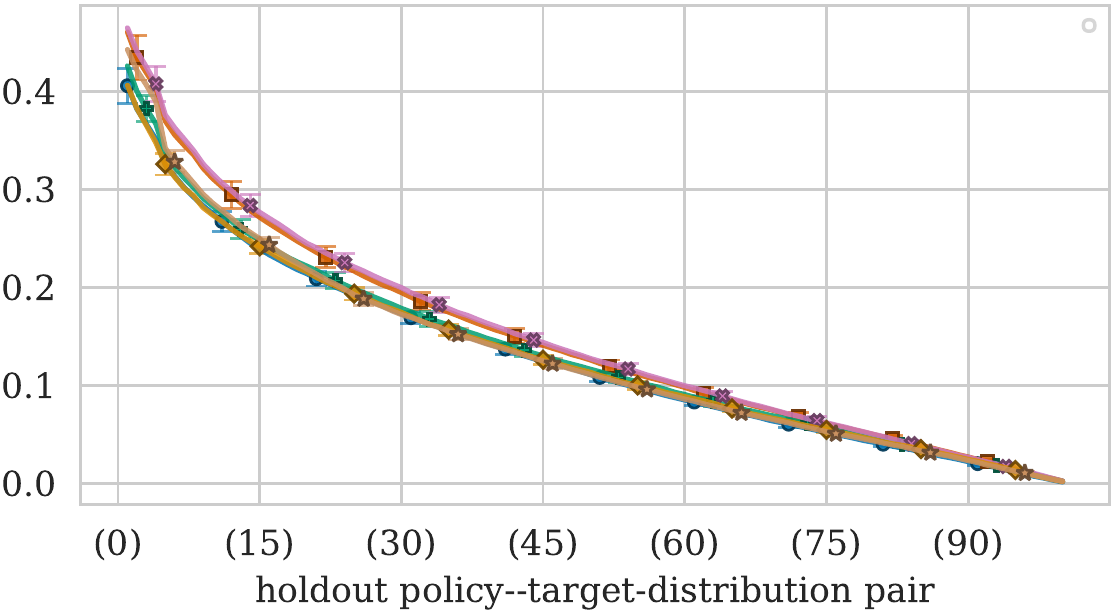}
    \end{minipage}\caption{Expected test score error (absolute difference) across holdout-policy--target-distribution pairs in the GT domain where -1 is given for a collision. Each row uses a different setting for the test size ($m = 1$ top, $m = 2$ middle, and $m = 3$ bottom) and each column uses a different holdout proportion ($20\%$ held out in the left column, $40\%$ middle, and $60\%$ right). $\numHoldoutReplicas$ sets of holdout policies were sampled. Holdout-policy--target-distribution pairs are sorted according to test score error. Each RPOSST$_{\seqLabel}$ instance was run for $500$ rounds ($T = 500$). Errorbars represent $95\%$ t-distribution confidence intervals.}
    \label{fig:gt-non-zero-sum}
\end{figure*}

\begin{figure*}[t]
    \begin{subfigure}[t]{0.32\linewidth}
      \includegraphics[width=\linewidth]{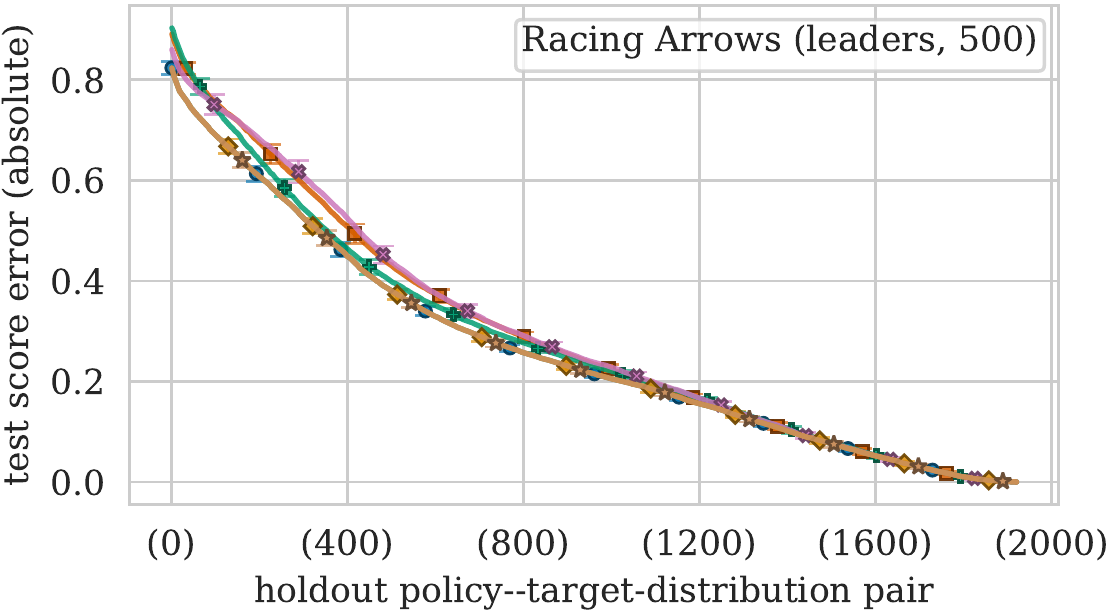}
      \caption{$\mTestCasesToSelect = 1$}
  \end{subfigure}\hfill \begin{subfigure}[t]{0.32\linewidth}
      \includegraphics[width=\linewidth]{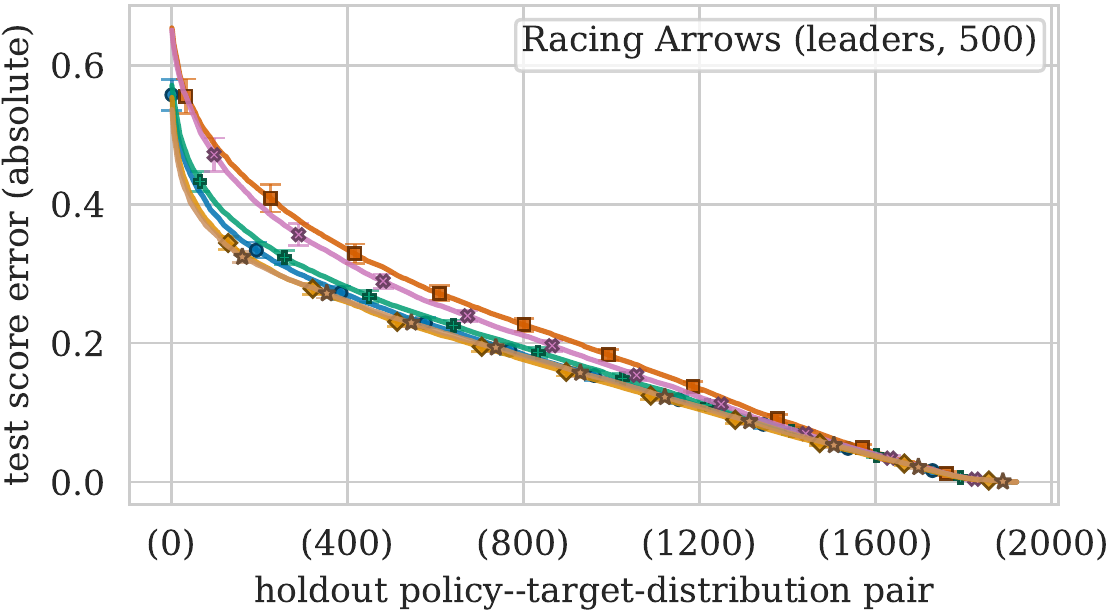}
      \caption{$\mTestCasesToSelect = 2$}
  \end{subfigure}\hfill \begin{subfigure}[t]{0.32\linewidth}
      \includegraphics[width=\linewidth]{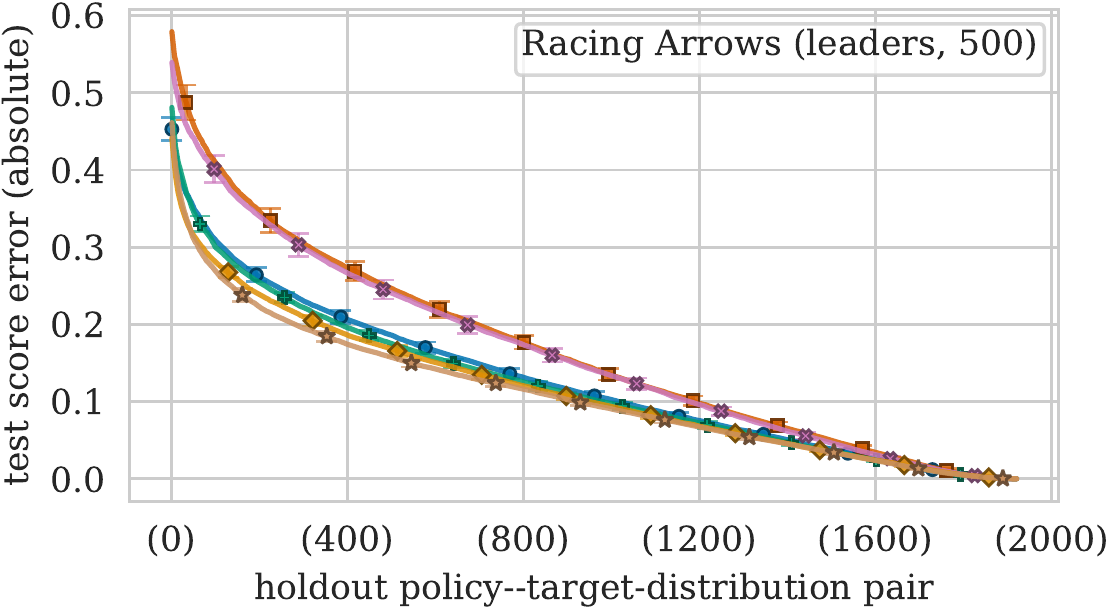}
      \caption{$\mTestCasesToSelect = 3$}
  \end{subfigure}
    \caption{Expected test score error (absolute difference) across holdout-policy--target-distribution pairs on Racing Arrows where test cases are leader policies. Here, 500 Racing Arrows policies were sampled for both the follower and leader role and then $96\%$ of policies of both roles were held out before running RPOSST and each baseline. Each column uses a different setting for the test size ($m = 1$ top, $m = 2$ middle, and $m = 3$ bottom). $\numHoldoutReplicas$ sets of holdout policies were sampled. Holdout-policy--target-distribution pairs are sorted according to test score error. Each RPOSST$_{\seqLabel}$ instance was run for $500$ rounds ($T = 500$). Errorbars represent $95\%$ t-distribution confidence intervals.}
    \label{fig:racing-arrows-l-500}
\end{figure*}

We note that as $\mTestCasesToSelect$ increases, the error on the holdout set typically decreases, particularly for RPOSST, since larger tests have the capacity to be strictly more accurate. A qualitative analysis of these results suggests that there are few substantial differences between RPOSST tests of different sizes or with different, reasonably sized, holdout sets. Furthermore, the performance ordering of the tested algorithms remains the same as the results presented in the main paper.
 \end{document}